\newcommand*{\rom}[1]{\expandafter\@slowromancap\romannumeral #1@}
\def \la {\langle}
\def \ra {\rangle}
\def \gradsb {\nabla_{\bs}}
\def \gradgam {\nabla_{\bgamma}}
\def \hessb {\nabla_{\bs}^2}
\def \hesgam {\nabla_{\bgamma}^2}
\def \gammamax{\gamma_{\max}}
\def \gammamin{\gamma_{\min}}
\def \smax {s_{\max}}
\def \dd {\text{d}}
\newtheorem{condition}[theorem]{Condition}
\title{\huge Rank Aggregation via Heterogeneous Thurstone Preference Models}
\author{
	Tao Jin\thanks{Department of Computer Science, University of Virginia, Charlottesville, VA 22904; e-mail: {\tt taoj@virginia.edu}}~\footnotemark[3]
	~~~and~~~
	Pan Xu\thanks{Department of Computer Science, University of California, Los Angeles, Los Angeles, CA 90095; e-mail: {\tt panxu@cs.ucla.edu}}~\thanks{Equal contribution}
	~~~and~~~
	Quanquan Gu\thanks{Department of Computer Science, University of California, Los Angeles, Los Angeles, CA 90095; e-mail: {\tt qgu@cs.ucla.edu}}~\footnotemark[6]
	~~~and~~~
	Farzad Farnoud\thanks{Department of Electrical and Computer Engineering, University of Virginia, Charlottesville, VA 22904; e-mail: {\tt farzad@virginia.edu}}~\thanks{Co-corresponding authors.}
}
\begin{document}

\date{}
\maketitle

\begin{abstract}
We propose the Heterogeneous Thurstone Model (HTM) for aggregating ranked data, which can take the accuracy levels of different users into account. By allowing different noise distributions, the proposed HTM model maintains the generality of Thurstone's original framework, and as such, also extends the Bradley-Terry-Luce (BTL) model for pairwise comparisons to heterogeneous populations of users. Under this framework, we also propose a rank aggregation algorithm based on alternating gradient descent to estimate the underlying item scores and accuracy levels of different users simultaneously from noisy pairwise comparisons. We theoretically prove that the proposed algorithm converges linearly up to a statistical error which matches that of the state-of-the-art method for the single-user BTL model. We evaluate the  proposed HTM model and algorithm on both synthetic and real data, demonstrating that it outperforms existing methods.


\end{abstract}

\section{Introduction}

Rank aggregation refers to the task of recovering the order of a set of objects given pairwise comparisons, partial rankings, or full rankings obtained from a set of users or experts. Compared to rating items, comparison is a more natural task for humans which can provide more consistent results, in part because it does not rely on arbitrary scales. Furthermore, ranked data can be obtained not only by explicitly querying users, but also through passive data collection, i.e., by observing user behavior, for example product purchases, clicks on search engine results, choice of movies in streaming services, etc. As a result, rank aggregation has a wide range of applications, from classical social choice applications~\citep{deborda1781} to information retrieval~\citep{dwork2001a}, recommendation systems~\citep{baltrunas2010}, and bioinformatics~\citep{aerts2006a,kim2015a}.

In aggregating rankings, the raw data is often noisy and inconsistent. One approach to arrive at a single ranking is to assume a generative model for the data whose parameters include a true score for each of the items. In particular, Thurstone's preference model \citep{thurstone1927} assumes that comparisons or partial rankings result from comparing versions of the true scores corrupted by additive noise. Special cases of Thurstone's model include the popular Bradley-Terry-Luce (BTL) model for pairwise comparisons and the Placket-Luce (PL) model for partial rankings. In these settings, estimating the true scores from data will allow us to identify the true ranking of the items. Various estimation and aggregation algorithms have been developed for Thurstone's preference model and its special cases, including \citep{hunter2004a,guiver2009,hajek2014,chen2015spectral,vojnovic2016,negahban2017}.

Conventional models of ranked data and aggregation algorithms that rely on them make the assumption that the data is either produced by a single user\footnote{We use the term user to refer to any entity that provides ranked data. In specific applications other terms may be more appropriate, such as voter, expert, judge, worker, and annotator.} or from a set of users that are similar. In real-world datasets, however, users that provide the raw data are usually diverse with different levels of familiarity with the objects of interest, thus providing data that is not uniformly reliable and should not have equal influence on the final result. This is of particular importance in applications such as aggregating expert opinions for decision-making and aggregating annotations provided by workers in crowd sourcing settings.

In this paper, we study the problem of rank aggregation for heterogeneous populations of users. We present a generalization of Thurstone's model, called the \emph{heterogeneous Thurstone model} (HTM), which allows users with different noise levels, as well as a certain class of adversarial users. Unlike previous efforts on rank aggregation for heterogeneous populations such as~\cite{chen2013pairwise,kumar2011}, the proposed model maintains the generality of Thurstone's framework and thus also extends its special cases such as BTL and PL models. We evaluate the performance of the method using simulated data for different noise distributions. We also demonstrate that the proposed aggregation algorithm outperforms the state-of-the-art method for real datasets on evaluating the difficulty of English text and comparing the population of a set of countries.

\noindent\textbf{Our Contributions:} Our main contributions are summarized as follows
\begin{itemize}
    \item We propose a general model called the heterogeneous Thurstone model (HTM) for producing ranked data based on heterogeneous sources, which reduces to the heterogeneous BTL (HBTL) model when the noise follows the Gumbel distribution and to the heterogeneous Thurstone Case V (HTCV) model when the noise follows the normal distribution respectively.
    \item We develop an efficient algorithm for aggregating pairwise comparisons and estimating user accuracy levels for a wide class of noise distributions based on minimizing the negative log-likelihood loss via alternating gradient descent.
    \item We theoretically show that the proposed algorithm converges to the unknown score vector and the accuracy vector at a locally linear rate up to a tight statistical error under  mild conditions.
    \item For models with specific noise distributions such as the HBTL and HTCV, we prove that the proposed algorithm converges linearly to the unknown score vector and accuracy vector up to statistical errors in the order of $O(n^2\log(mn^2)/(mk))$, where $k$ is sample size, $n$ is the number of items and $m$ is the number of users. When $m=1$, the statistical error matches the error bound in the state-of-the-art work for single user BTL model \citep{negahban2017}.
    \item We conduct thorough experiments on both synthetic and real world data to validate our theoretical results and demonstrate the superiority of our proposed model and algorithm.
\end{itemize}

The reminder of this paper is organized as follows. In Section \ref{sec:relatedWork}, we review the most related work in the literature. In Section~\ref{sec:model}, we propose a family of heterogeneous Thurstone models. In Section~\ref{sec:loss}, we propose an efficient algorithm for learning the ranking from pairwise comparisons. We theoretically analyze the convergence of the proposed algorithm in Section~\ref{sec:analysis}. Thorough experimental results are presented in Section~\ref{sec:expr} and Section~\ref{sec:conc} concludes the paper. 

\section{Additional Related Work}\label{sec:relatedWork}




The problem of rank aggregation has a long history, dating back to the works of \cite{deborda1781} and \cite{decondorcet1785} in the 18th century, where the problems of social choice and voting were discussed. 
More recently, the problem of aggregating pairwise comparisons, where comparisons are incorrect with a given probability $p$, was studied by~\cite{braverman2008} and~\cite{wauthier2013a}.
Instead of assuming the same probability for all comparisons to be incorrect, it is natural to assume that the comparison of similar items is more likely to be noisy than those items that are distinctly different. This intuition is reflected in the random utility model (RUM), also known as \textit{Thurstone's model}~\citep{thurstone1927}, where each item has a true score, and users provide rankings of subsets of items by comparing approximate version of these scores corrupted by additive noise.

When restricted to comparing pairs of items, Thurstone's model reduces to the BTL model \citep{zermelo1929,bradley1952,luce1959,hunter2004a} if the noise follows the Gumbel distribution, and to the Thurstone Case V (TCV) model~\citep{thurstone1927} if the noise is normally distributed. 
Recently, \citet{negahban2012} proposed Rank Centrality, an iterative method with a random walk interpretation and showed that it performs as well as the maximum likelihood (ML) solution \citep{zermelo1929,hunter2004a} for BTL models and provided non asymptotic performance guarantees. \citet{chen2015spectral} studied identifying the top-K candidates under the BTL model and its sample complexity.

Thurstone's model can also be used to describe data from comparisons of multiple items.
\citet{hajek2014} provided an upper bound on the error of the ML estimator and studied its optimality when data consists of partial rankings (as opposed to pairwise comparisons) under the PL model. \citet{yu2000} studied order statistics under the normal noise distribution with consideration of item confusion covariance and user perception shift in a Bayesian model. \citet{weng2011bayesian} proposed a Bayesian approximation method for game player ranking with results from two-team matches.
\citet{guiver2009} studied the ranking aggregation problem with partial ranking (PL model) in a Bayesian framework. However, due to the nature of Bayesian method, above mentioned work
provided few theoretical analysis. 
\citet{vojnovic2016} studied the parameter estimation problem for Thurstone models where first choices among a set of alternatives are observed. \citet{raman2014methods,raman2015bayesian} proposed the peer grading methods for solving a similar problem as ours, while the generative models to aggregate partial rankings and pairwise comparisons are completely different.
Very recently, \citet{zhao2018learning} proposed the $k$-RUM model which assumes that the rank distribution has a mixture of $k$ RUM components. They also provided the analyses of identifiability and efficiency of this model.

Almost all aforementioned works assume that all the data is provided by a single user or that all users have the same accuracy. However, this assumption is rarely satisfied in real-world datasets. The accuracy levels of different users are considered in \cite{kumar2011}, which assumes that each user is correct with a certain probability and studies the problem via simulation methods such as naive Bayes and majority voting.
In their pioneering work, \citet{chen2013pairwise} studied rank aggregation in a crowd-sourcing environment for pairwise comparisons, modeled via the BTL or TCV model, where noisy BTL comparisons are assumed to be further corrupted. They are flipped with a probability that depends on the identity of the worker. The $k$-RUM model proposed by \citet{zhao2018learning} considered a mixture of ranking distributions, without using extra information on who contributed the comparison, it may suffer from common mixture model issues.


\section{Modeling Heterogeneous Ranked Data}\label{sec:model}

Before introducing our Heterogeneous Thurstone Model, we start by providing some preliminaries of Thurstone's preference model in further detail. 
Consider a set of $n$ items. 
The score vector for the items is denoted by $\bs=\left(s_1,\dotsc,s_n\right)^\top$. These items/objects are evaluated by a set of $m$ independent users. Each user may be asked to express their preference concerning a subset of items $\{i_1,\dotsc,i_h\}\subseteq[n]$, where $2\le h \le n$. For each item $i$, the user first estimates an empirical score for it as
\begin{align}\label{eq:latent_uni_score}
    z_i=s_{i}+\epsilon_i,
\end{align}
where $\epsilon_i$ is a random noise introduced by this evaluation process. 
This coarse estimate of score $z_i$ is still implicit and cannot be queried or observed by the ranking algorithm. Instead, the user only produces a ranking of these $h$ items by sorting the scores $z_i$. We thus have
\begin{equation}\label{eq:data_model}
\Pr\left({\pi_1}\succ {\pi_2}\succ\dotsm\succ {\pi_h}\right) =
\Pr\left(z_{\pi_1}> z_{\pi_2}>\dotsm> z_{\pi_h}\right),
\end{equation}
where $i\succ j$ indicates that $i$ is preferred to $j$ by this user and $\{\pi_1,\ldots,\pi_h\}$ is a permutation of $\{i_1,\dotsc,i_h\}$. Each time item $i$ is compared with other items, a new score estimate $z_i$ is produced by the user for are commonly assumed to be i.i.d. 
\citep{braverman2008,negahban2012,wauthier2013a}. 


\subsection{The Heterogeneous Thurstone Model}\label{sec:HTM_model}
In real-world applications, users often have different levels of expertise and some may even be adversarial. Therefore, it is natural for us to propose an extension of the Thurstone's model presented above, referred to as the \emph{Heterogeneous Thurstone Model} (HTM), which has the flexibility to reflect the different levels of expertise of different users. Specifically, we assume that each user has a different level of making mistakes in evaluating items, i.e., the evaluation noise of user $u$ is controlled by a scaling factor $\gamma_{u}>0$. The proposed model is then represented as follows:
\begin{equation}\label{eq:latent}
z_i^u = s_{i} + \epsilon_i/\gamma_{u}.
\end{equation}
Based on the estimated scores of each user for each item, the probability of a certain ranking of $h$ items provided by user $u$ is again given by~\eqref{eq:data_model}. While this extension actually applies to both pairwise comparisons and multi-item orderings, we mainly focus on pairwise comparisons in this paper.

When two items $i$ and $j$ are compared by user $u$, we denote by $Y_{ij}^{u}$ the random variable representing the result,
\begin{equation}
Y_{ij}^{u} = \begin{cases}
1&\text{if } i\succ j;\\
0&\text{if } i\prec j.
\end{cases}
\end{equation}
Let $F$ denote the CDF of $\epsilon_j-\epsilon_i$, where $\epsilon_i$ and $\epsilon_j$ are two i.i.d.\ random variables. For the result $Y_{ij}^{u}$ of comparison of $i$ and $j$ by user $u$, we have
\begin{align}\label{eq:simple-comps}
\Pr(Y_{ij}^{u}=1;s_{i}, s_j,\gamma_{u}) = \Pr(\epsilon_j-\epsilon_i<\gamma_{u}(s_{i}-s_j))
&= F\left(\gamma_{u}(s_{i}-s_j)\right).
\end{align}
It is clear that the larger the value of $\gamma_{u}$, the more accurate the user is, since large $\gamma_{u}>0$ increases the probability of preferring an item with higher score to one with lower score.

We now consider several special cases arising from specific noise distributions. First, if $\epsilon_i$ follows a Gumbel distribution with mean 0 and scale parameter 1, then we obtain the following \emph{Heterogeneous BTL} (HBTL) model:
\begin{align}\label{eq:new_comp_Gumbel}
\log \Pr(Y_{ij}^{u}=1;s_{i}, s_j,\gamma_{u})
&= \log\frac{e^{\gamma_{u}s_{i}}}{e^{\gamma_{u}s_{i}}+e^{\gamma_{u}s_j}}=-\log(1+\exp({-\gamma_{u}(s_{i}-s_j)})),
\end{align}
which follows from the fact that the difference between two independent Gumbel random variables has the logistic distribution. We note that setting $\gamma_{u}=1$ recovers the traditional BTL model~\citep{bradley1952}.

If $\epsilon_i$ follows the standard normal distribution, we obtain the following \emph{Heterogeneous Thurstone Case V} (HTCV) model:
{\small
\begin{align}\label{eq:new_comp_normal}
\log\Pr(Y_{ij}^{u}=1;s_{i}, s_j,\gamma_{u}) &= \log\Phi\bigg({\frac{\gamma_{u}(s_{i}-s_j)}{\sqrt 2}}\bigg),
\end{align}}%
where $\Phi$ is the CDF of the standard normal distribution. Again, when $\gamma_{u}=1$, this reduces to Thurstone's Case V (TCV) model for pairwise comparisons~\citep{thurstone1927}.

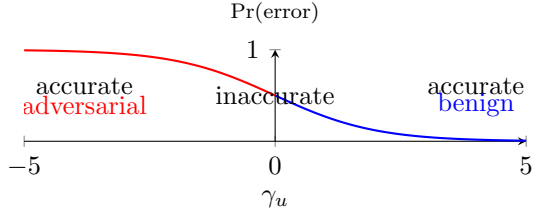
\begin{figure}[t]
    \begin{center}
        \begin{tikzpicture}
            \tikzstyle{every node}=[font=\small]
            \begin{axis}[
            no markers,
            xlabel = $\gamma_{u}$,
            title = {\scriptsize $\Pr(\text{error})$},
            ymin = 0, ymax=1,
            width = .50\textwidth, height = 2.8cm,
            axis y line=middle, axis x line=bottom,
            xtick = {-5,0,5},
            ytick = {0,1}
            ]
            \addplot[color=blue,domain=0:5,samples=1000,thick]{1-exp(x)/(1+exp(x))};
            \addplot[color=red,domain=-5:0,samples=1000,thick]{1-exp(x)/(1+exp(x))};
            \node at (axis cs:0,.5) [anchor=center] {inaccurate};
            \node at (axis cs:-3.8,.6) [anchor=center] {accurate};
            \node at (axis cs:-3.8,.4) [anchor=center] {\textcolor{red}{adversarial}};
            \node at (axis cs:4,.6) [anchor=center] {accurate};
            \node at (axis cs:4,.4) [anchor=center] {\textcolor{blue}{benign}};
            \end{axis}
        \end{tikzpicture}
    \end{center}
    \caption{The effect of $\gamma_{u}$ on the probability of error for a BTL comparison in which items have scores 0 and 1. In particular, for large negative values of $\gamma_{u}$, the user is accurate (with a high level of expertise) but adversarial.}
    \label{fig:gamma}
\end{figure}

\paragraph{Adversarial users:} Under our heterogeneous framework, we can also model a certain class of adversarial users, whose goal is to make the estimated ranking be the opposite of the true ranking, so that, for example, an inferior item is ranked higher than the alternatives. We assume for adversarial users, the score of item $i$ is $C-s_{i}$, for some constant $C$. Changing $s_{i}$ to $C-s_{i}$ in~\eqref{eq:simple-comps} is equivalent to assuming the user has a negative accuracy $\gamma_{u}$. In this way, the accuracy of the user is determined by the magnitude $|\gamma_{u}|$ and its trustworthiness by $\sign(\gamma_{u})$, as illustrated in Figure~\ref{fig:gamma}. When adversarial users are present, this will facilitate optimizing the loss function, since instead of solving the combinatorial optimization problem of deciding which users are adversarial, we simply optimize the value of $\gamma_{u}$ for each user.

One relevant work to ours is the CrowdBT algorithm proposed by \cite{chen2013pairwise}, where they also explored the accuracy level of different users in learning a global ranking. In particular, they assume that each user has a probability $\eta_u$ of making mistakes in comparing items $i$ and $j$:
\begin{align}\label{eq:crowdbt}
\Pr(Y_{ij}^{u}=1;s_{i}, s_j,\eta_u)
&= \eta_u\Pr(i \succ j)+(1-\eta_u)\Pr(j \succ i),
\end{align}
where $\Pr(i\succ j)$ and $\Pr(j \succ i)$ follow the BTL 
model.
This 
translates to introducing a parameter in the likelihood function to quantify the reliability of each pairwise comparison. This parameterization, however, deviates from the additive noise in Thurstonian models defined as in \eqref{eq:latent_uni_score} such as BTL and Thurstone's Case V. Specifically, the Thurstonian model explains the noise observed in pairwise comparisons as resulting from the additive noise in estimating the latent item scores. Therefore, the natural extension of Thurstonian models to a heterogeneous population of users is to allow different noise levels for different users, as was done in~\eqref{eq:latent}. As a result, CrowdBT cannot be easily extended to settings where more than two items are compared at a time. In contrast, the model proposed here is capable to describe such generalizations of Thurstonian models, such as the PL model.

\section{Optimization and Rank Aggregation}\label{sec:loss}

In this section, we define the pairwise comparison loss function for the population of users and propose an efficient and effective optimization algorithm to minimize it. We denote by $\bY^{u}$ the matrix containing all pairwise comparisons $Y_{ij}^{u}$ of user $u$ on items $i$ and $j$. The entries of $\bY^{u}$ are $0/1/?$, where $?$
indicates that the pair was not compared by the user. Furthermore, let $\cD_{u}$ denote the set of all pairs $(i,j)$ compared by user $u$. We define the loss function for each user $u$ as
\begin{align*}
    \begin{split}
\cL_{u}\left(\bs,\gamma_{u};\bY^{u}\right)&= - \frac{1}{k_{u}}{\sum_{\left(i,j\right)\in\mathcal{D}_{u} }}\log \Pr(Y_{ij}^{u}=1|s_{i}, s_j,\gamma_{u})\\
&= - \frac{1}{k_u}{\sum_{\left(i,j\right)\in\mathcal{D}_{u} }}\log  F\left(\gamma_{u}(s_{i}-s_j)\right),
    \end{split}
\end{align*}
where $k_u=|\cD_{u}|$ is the number of comparisons by user $u$. Then, the total loss function for $m$ users is
\begin{equation}\label{eq:total-loss}
    \cL\left(\bs,\bgamma;\bY\right) =\frac{1}{m}\sum_{u=1}^m \cL_{u}\left(\bs,\gamma_{u};\bY^{u}\right),
\end{equation} where $\bgamma=(\gamma_1,\dotsc,\gamma_m)^{\top}$ and $\bY=(\bY^1,\dotsc,\bY^m)$. We denote the unknown true score vector as $\bs^*$ and the true accuracy vector as $\bgamma^*$. Given observation $\bY$, our goal is to recover $\bs^*$ and $\bgamma^*$ via minimizing the loss function in \eqref{eq:total-loss}. To ensure the identifiability of $\bs^*$, we follow \cite{negahban2017} to assume that $\one^{\top}\bs^*=\sum_{i=1}^n s_i^{*}=0$, where $\one\in\RR^n$ is the all one vector. The following proposition shows that  the loss function $\cL$ is convex in $\bs$ and in $\bgamma$ separately if the PDF of $\epsilon_i$ is log-concave.
\begin{proposition}\label{prop:loss_convex_density_logconcave}
If the distribution of the noise $\epsilon_i$ in~\eqref{eq:latent} is log-concave, then the loss function $\cL(\bs,\bgamma;\bY)$ given in~\eqref{eq:total-loss} is convex in $\bs$, and in $\bgamma$ respectively.
\end{proposition}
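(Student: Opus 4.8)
The plan is to reduce the statement to two standard facts: that the CDF of a log-concave density is log-concave, and that convexity is preserved under composition with affine maps and nonnegative linear combinations. Since $\cL(\bs,\bgamma;\bY)$ in~\eqref{eq:total-loss} is a nonnegative linear combination (with weights $1/(m k_u)$) of the per-comparison terms $-\log F(\gamma_u(s_i-s_j))$, it suffices to analyze one such term, and then separately in $\bs$ (with $\bgamma$ fixed) and in $\bgamma$ (with $\bs$ fixed).

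First I would establish the key lemma: if $\epsilon_i$ has a log-concave density $f$, then $F$, the CDF of $\epsilon_j-\epsilon_i$, is log-concave. The steps are: (i) the density of $-\epsilon_i$ is $x\mapsto f(-x)$, which is log-concave since log-concavity is invariant under reflection; (ii) the density of $\epsilon_j-\epsilon_i$ is the convolution of $f(\cdot)$ with $f(-\cdot)$, and convolution preserves log-concavity (Prékopa's theorem / the Prékopa--Leindler inequality); (iii) the CDF $F$ of a log-concave density $g$ is log-concave, which follows from Prékopa's theorem applied to the integrand $(x,t)\mapsto \mathbf{1}\{t\le x\}\,g(t)$ — jointly log-concave because an indicator of a halfspace is log-concave and $g$ is log-concave — upon integrating out $t$; one may instead simply cite Bagnoli--Bergstrom. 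Hence $\log F$ is concave and $-\log F$ is a (possibly extended-real-valued) convex function on $\RR$.

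Given this, convexity in $\bs$ follows because, for each user $u$ and pair $(i,j)$, the map $\bs\mapsto \gamma_u(s_i-s_j)$ is affine in $\bs$ irrespective of the sign of $\gamma_u$, so $\bs\mapsto -\log F(\gamma_u(s_i-s_j))$ is a composition of the convex function $-\log F$ with an affine map, hence convex; summing with positive weights preserves this. For convexity in $\bgamma$, write $\cL(\bs,\bgamma;\bY)=\sum_{u=1}^m g_u(\gamma_u)$ with $g_u(\gamma_u)=-\frac{1}{mk_u}\sum_{(i,j)\in\cD_u}\log F(\gamma_u(s_i-s_j))$; each summand is $-\log F$ composed with the linear map $\gamma_u\mapsto\gamma_u(s_i-s_j)$, so $g_u$ is convex in $\gamma_u$ and hence convex as a function of $\bgamma$ through the coordinate projection, and the sum over $u$ is convex in $\bgamma$. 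The main obstacle is isolating and cleanly justifying the black box in the previous paragraph — that log-concavity of the noise density propagates through the difference operation and then to the CDF; once that is in hand, the remainder is routine affine-composition bookkeeping. I would also remark that the argument is distribution-agnostic, so it immediately covers the HBTL case ($F$ logistic, with log-concave density) and the HTCV case ($F$ Gaussian, with log-concave density) discussed in Section~\ref{sec:HTM_model}.
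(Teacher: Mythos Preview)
Your proposal is correct and follows essentially the same route as the paper's proof: both establish that $F$ is log-concave via preservation of log-concavity under convolution, deduce that $-\log F$ is convex, and then argue termwise. The only cosmetic difference is that the paper verifies convexity in $\bs$ by writing out the Hessian $h''(\gamma_u(s_i-s_j))(\gamma_u)^2(\eb_i-\eb_j)(\eb_i-\eb_j)^\top$ explicitly, whereas you invoke the affine-composition rule; your version is also slightly more careful in separating step (ii) (density of the difference is log-concave) from step (iii) (CDF of a log-concave density is log-concave), which the paper compresses into a single citation.
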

The log-concave family includes many well-known distributions such as normal, exponential, Gumbel, gamma and beta distributions. In particular, the noise distributions used in BTL and Thurstone's Case V (TCV) models fall into this category. Although the loss function $\cL$ is non convex with respect to the joint variable $(\bs,\bgamma)$, Proposition \ref{prop:loss_convex_density_logconcave} inspires us to perform alternating gradient descent \citep{jain2013low} on $\bs$ and $\bgamma$ to minimize the loss function. As is shown in Algorithm~\ref{alg:PGD}, we alternating perform gradient descent update on $\bs$ (or $\bgamma$) while fixing $\bgamma$ (or $\bs$) at each iteration. 
In addition to the alternating gradient descent steps, we shift $\bs^{(t)}$ in Line \ref{algline:scale} of Algorithm \ref{alg:PGD} such that $\one^{\top}\bs^{(t)}=0$ to avoid the aforementioned identifiability issue of $\bs^*$. After $T$ iterations, given the output $\bs^{(T)}$, the estimated ranking of the items is obtained by sorting $\{s^{(T)}_1,\ldots,s^{(T)}_n\}$ in descending order (item with the highest score in $\bs^{(T)}$ is the most preferred). 

\begin{figure}[hb]
    \begin{algorithm}[H]
    	\caption{HTMs with Alternating Gradient Descent}\label{alg:PGD}
    	\begin{algorithmic}[1]
    		\STATE \textbf{input:} learning rates $\eta_1,\eta_2>0$, initial points $\bs^{(0)}$ and $\bgamma^{(0)}$ satisfying $\|\bs^{(0)}-\bs^*\|_2^2+\|\bgamma^{(0)} -\bgamma^{*}\|_2^2\leq r$, number of iteration $T$, comparison results by users $\bY$.
    		\FOR{$t=0,\ldots,T-1$}
        		\STATE $\tilde\bs^{(t+1)}=\bs^{(t)}-\eta_1 \nabla_{\bs} \cL\big(\bs^{(t)},\bgamma^{(t)};\bY\big)$
        		    \label{line:1}
        		\STATE $\bs^{(t+1)}=(\Ib-\one\one^{\top}/n)\tilde\bs^{(t+1)}$
        		    \label{algline:scale}
        		\STATE $\bgamma^{(t+1)}=\bgamma^{(t)}-\eta_2 \nabla_{\bgamma}\cL\big(\bs^{(t)},\bgamma^{(t)};\bY\big)$
        		    \label{line:2}
    		\ENDFOR
    		\STATE \textbf{output:} $\bs^{(T)}$, $\bgamma^{(T)}$.
    	\end{algorithmic}
    \end{algorithm}
\end{figure}

As we will show in the next section, the convergence of Algorithm \ref{alg:PGD} to the optimal points $\bs^*$ and $\bgamma^*$ is guaranteed if an initialization such that $\bs^{(0)}$ and $\bgamma^{(0)}$ are close to the unknown parameters is available. In practice, to initialize $\bs$, we can use the solution provided by the rank centrality algorithm~\citep{negahban2012} or start from uniform or random scores. 
In this paper, we initialize $\bs$ and $\bgamma$, as $\bs^{(0)}=\one$ and $\bgamma^{(0)}=\one$. We note that multiplying $\bs$ or $\bgamma$ by a negative constant does not alter the loss but reverses the estimated ranking. Implicit in our initialization is the assumption that the majority of the users are trustworthy and thus have positive $\gamma$.
When data is sparse, there may be subsets of items that are not compared directly or indirectly. In such cases, regularization may be necessary, which is discussed in further detail in Section~\ref{sec:expr}.

\section{Theoretical Analysis of the Proposed Algorithm}\label{sec:analysis}
In this section, we provide the convergence analysis of Algorithm \ref{alg:PGD} for the general loss function defined in \eqref{eq:total-loss}. Without loss of generality, we assume the number of observations $k_u=k$ for all users $u\in[m]$ throughout our analysis. Since there's no specific requirement on the noise distributions in the general HTM model, to derive the linear convergence rate, we need the following conditions on the loss function $\cL$, which are standard in the literature of alternating minimization \citep{jain2013low,zhu2017high,xu2017efficient,xu2017speeding,zhang2018unified,chen2018covariate}. Note that all these conditions can actually be verified once we specify the noise distribution in specific models. We provide the justifications of these conditions in the appendix.

\begin{condition}[Strong Convexity]\label{assump:strong_convex}
$\cL$ is $\mu_1$-strongly convex with respect to $\bs\in\RR^n$ and $\mu_2$-strongly convex with respect to $\bgamma\in\RR^m$. In particular, there is a constant $\mu_1>0$ such that for all $\bs,\bs'\in\RR^n$,
\begin{align*}
    \cL(\bs,\bgamma)&\geq\cL(\bs',\bgamma)+\la\nabla_{\bs}\cL(\bs',\bgamma),\bs-\bs'\ra+\mu_1/2\|\bs-\bs'\|_2^2.
\end{align*}
And there is a constant $\mu_2>0$ such that for all $\bgamma,\bgamma'\in\RR^m$, it holds
\begin{align*}
    \cL(\bs,\bgamma)&\geq\cL(\bs,\bgamma')+\la\nabla_{\bgamma}\cL(\bs,\bgamma'),\bgamma-\bgamma'\ra+\mu_2/2\|\bgamma-\bgamma'\|_2^2.
\end{align*}
\end{condition}

\begin{condition}[Smoothness]\label{assump:smooth}
$\cL$ is $L_1$-smooth with respect to $\bs\in\RR^n$ and $L_2$-smooth with respect to $\bgamma\in\RR^m$. In particular, there is a constant $L_1>0$ such that for all $\bs,\bs'\in\RR^n$, it holds
\begin{align*}
    \cL(\bs,\bgamma)&\leq\cL(\bs',\bgamma)+\la\nabla_{\bs}\cL(\bs',\bgamma),\bs-\bs'\ra+L_1/2\|\bs-\bs'\|_2^2.
\end{align*}
And there is a constant $L_2>0$ such that for all $\bgamma,\bgamma'\in\RR^m$, it holds
\begin{align*}
    \cL(\bs,\bgamma)&\leq\cL(\bs,\bgamma')+\la\nabla_{\bgamma}\cL(\bs,\bgamma'),\bgamma-\bgamma'\ra+L_2/2\|\bgamma-\bgamma'\|_2^2.
\end{align*}
\end{condition}
The next condition is a variant of the usual Lipschitz gradient condition. It is worth noting that the gradient is derived with respect to $\bs$ (or $\bgamma$), while the upper bound is the difference of $\bgamma$ (or $\bs$). This condition is commonly imposed and verified in the analysis of expectation-maximization algorithms \citep{wang2015high} and alternating minimization \citep{jain2013low}.
\begin{condition}[First-order Stability]\label{lemma:FOS}
There are constants $M_1,M_2>0$ such that $\cL$ satisfies
\begin{align*}
    \|\nabla_{\bs} \cL(\bs,\bgamma)-\nabla_{\bs} \cL(\bs,\bgamma')\|_2
    &\leq M_1\|\bgamma-\bgamma'\|_2,\\
    \|\nabla_{\bgamma} \cL(\bs,\bgamma)-\nabla_{\bgamma} \cL(\bs',\bgamma)\|_2
    &\leq M_2\|\bs-\bs'\|_2,
\end{align*}
for all $\bs,\bs'\in\RR^n$ and $\bgamma,\bgamma'\in\RR^m$.
\end{condition}

Note that the loss function in \eqref{eq:total-loss} is defined based on finitely many samples of observations. The next condition shows how close the gradient of the sample loss function is to the expected loss function.
\begin{condition}\label{lemma:staterr_general}
Denote $\bar\cL$ as the expected loss, where the expectation of $\cL$ is taken over the random choice of the comparison pairs and the observation $\bY$. With probability at least $1-1/n$, we have
\begin{align*}
    \|\gradsb\cL(\bs,\bgamma)-\gradsb\bar\cL(\bs,\bgamma)\|_2&\leq\epsilon_{1}(k,n),\\
    \|\gradgam\cL(\bs,\bgamma)-\gradgam\bar\cL(\bs,\bgamma)\|_2&\leq\epsilon_{2}(k,n),
\end{align*}
where $n$ is the number of items and $k$ is the number of observations for each user. In addition, $\epsilon_1(k,n)$ and $\epsilon_2(k,n)$ will go to zero when sample size $k$ goes to infinity.
\end{condition}
$\epsilon_1(k,n)$ and $\epsilon_2(k,n)$ in Condition \ref{lemma:staterr_general} are also called the statistical errors \citep{wang2015high,xu2017speeding} between the sample version gradient and the expected (population) gradient.

Now we deliver our main theory on the linear convergence of Algorithm \ref{alg:PGD} for general HTM models. Full proofs can be found in the appendix.
\begin{theorem}\label{thm:convergence_general}
For a general HTM model, assume Conditions \ref{assump:strong_convex}, \ref{assump:smooth}, \ref{lemma:FOS} and \ref{lemma:staterr_general} hold and that $M_1,M_2\leq\sqrt{\mu_1\mu_2}/4$. Denote that $\|\bs^*\|_{\infty}= \smax$ and $\|\bgamma^*\|_{\infty}=\gammamax$. Suppose the initialization guarantees that
$\|\bs^{(0)}-\bs^*\|_2^2+\|\bgamma^{(0)}-\bgamma^*\|_2^2\leq r^2$,where $r=\min\{\mu_1/(2M_1),\mu_2/(2M_2)\}$. If we set the step size $\eta_1=\eta_2=\mu/(12(L^2+M^2))$, where $L=\max\{L_1,L_2\}$, $\mu=\min\{\mu_1,\mu_2\}$ and $M=\max\{M_1,M_2\}$, then the output of Algorithm \ref{alg:PGD} satisfies
\begin{align*}
    \|\bs^{(T)}-\bs^*\|_2^2+\|\bgamma^{(T)}-\bgamma^*\|_2^2
    &\leq r^2\rho^T+\frac{\epsilon_1(k,n)^2+\epsilon_2(k,n)^2}{\mu^2}
\end{align*}
with probability at least $1-1/n$, where the contraction parameter is $\rho=1-\mu^2/(48(L^2+M^2))$.
\end{theorem}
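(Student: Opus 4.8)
The plan is to track the squared error $\Delta_t^2 := \|\bs^{(t)}-\bs^*\|_2^2 + \|\bgamma^{(t)}-\bgamma^*\|_2^2$ and show it contracts by a factor $\rho$ per iteration, up to an additive term governed by the statistical errors $\epsilon_1,\epsilon_2$. The natural route is the standard ``one-step analysis'' for gradient descent on a strongly convex, smooth objective, carried out separately for the $\bs$-block and the $\bgamma$-block, but with two complications that must be handled carefully: (i) the gradient used to update $\bs$ is evaluated at $\bgamma^{(t)}$ rather than $\bgamma^*$, so the First-order Stability (Condition~\ref{lemma:FOS}) is needed to control the cross-term, and (ii) the objective $\cL$ is the finite-sample loss, not the population loss $\bar\cL$, so Condition~\ref{lemma:staterr_general} enters. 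I would first do the analysis pretending we had access to $\bar\cL$, then absorb the statistical error at the end.

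First I would bound the $\bs$-update. Using the projection step (Line~\ref{algline:scale}) is non-expansive toward $\bs^*$ since $\one^\top\bs^*=0$, so $\|\bs^{(t+1)}-\bs^*\|_2 \le \|\tilde\bs^{(t+1)}-\bs^*\|_2$. Then expand
$\|\tilde\bs^{(t+1)}-\bs^*\|_2^2 = \|\bs^{(t)}-\bs^* - \eta_1 \gradsb\cL(\bs^{(t)},\bgamma^{(t)})\|_2^2$.
Write $\gradsb\cL(\bs^{(t)},\bgamma^{(t)}) = \gradsb\bar\cL(\bs^{(t)},\bgamma^*) + [\gradsb\bar\cL(\bs^{(t)},\bgamma^{(t)}) - \gradsb\bar\cL(\bs^{(t)},\bgamma^*)] + [\gradsb\cL - \gradsb\bar\cL](\bs^{(t)},\bgamma^{(t)})$. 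The first piece, combined with $\mu_1$-strong convexity and $L_1$-smoothness of $\bar\cL$ and the fact that $\gradsb\bar\cL(\bs^*,\bgamma^*)=0$, gives the usual contraction $\|\bs^{(t)}-\bs^*\|_2^2 - 2\eta_1(\text{co-coercivity terms})$; the second piece is bounded by $M_1\|\bgamma^{(t)}-\bgamma^*\|_2$ via Condition~\ref{lemma:FOS}; the third by $\epsilon_1(k,n)$ via Condition~\ref{lemma:staterr_general}. Doing the same for the $\bgamma$-update (here the gradient is at $\bs^{(t)}$, controlled by $M_2\|\bs^{(t)}-\bs^*\|_2$) and adding the two inequalities, I would get something of the form
$\Delta_{t+1}^2 \le (1 - c_1\eta\mu + c_2\eta^2(L^2+M^2))\Delta_t^2 + c_3\eta^2(M_1^2+M_2^2)\Delta_t^2 + c_4\eta^2(\epsilon_1^2+\epsilon_2^2)$,
where the $M$-terms come from the cross-block coupling. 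Choosing $\eta = \mu/(12(L^2+M^2))$ makes the quadratic-in-$\eta$ overhead small enough, and using the hypothesis $M \le \sqrt{\mu_1\mu_2}/4 \le \mu_{\max}/4$ (which bounds $M^2$ relative to $\mu^2$) forces the coefficient on $\Delta_t^2$ down to $\rho = 1 - \mu^2/(48(L^2+M^2)) < 1$. Unrolling the recursion over $T$ steps and summing the geometric series on the constant term yields $\Delta_T^2 \le r^2\rho^T + (\epsilon_1^2+\epsilon_2^2)/\mu^2$.

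The main obstacle, and the step I would be most careful about, is the cross-block coupling: because the two blocks are updated using stale information about each other, the naive per-block contraction leaves residual terms proportional to $M_1\|\bgamma^{(t)}-\bgamma^*\|_2 \cdot \|\bs^{(t)}-\bs^*\|_2$ and $M_2$ times the symmetric product. These must be folded back into $\Delta_t^2$ using Young's inequality ($ab \le a^2/2 + b^2/2$ with appropriate weights), and the resulting extra $\Theta(\eta^2 M^2)\Delta_t^2$ term is precisely why the hypothesis $M_1,M_2 \le \sqrt{\mu_1\mu_2}/4$ is imposed and why it appears (through $M^2$) in the denominator of $\rho$. A secondary subtlety is checking that iterates stay in the radius-$r$ ball so that the local conditions keep applying — this follows by induction since $\rho<1$ and the additive error is dominated by $r^2$ under the stated assumptions, but it should be stated explicitly. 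I would also note that $r = \min\{\mu_1/(2M_1),\mu_2/(2M_2)\}$ is exactly the radius needed to ensure the $M\Delta_t$ cross-terms never overwhelm the $\mu\Delta_t$ contraction terms inside the ball. Finally, the $1-1/n$ probability is inherited verbatim from Condition~\ref{lemma:staterr_general}, since that is the only probabilistic ingredient.
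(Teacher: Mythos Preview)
Your proposal is correct and follows essentially the same route as the paper: expand $\|\bs^{(t+1)}-\bs^*\|_2^2$ after noting the projection is non-expansive, split the gradient into a main term, a cross-block term controlled by FOS, and a statistical-error term, use strong convexity for the inner-product contraction and smoothness for the gradient-norm bound, apply Young's inequality to fold the cross terms back into $\Delta_t^2$, add the two block inequalities, and iterate. The induction step keeping iterates in the radius-$r$ ball and the role of $M_1,M_2\le\sqrt{\mu_1\mu_2}/4$ are identified exactly as in the paper.

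One small technical wrinkle is worth flagging. You anchor your decomposition at the \emph{population} loss $\bar\cL$ and invoke strong convexity, smoothness, and FOS for $\bar\cL$, but Conditions~\ref{assump:strong_convex}--\ref{lemma:FOS} are stated only for the \emph{sample} loss $\cL$. The paper avoids this by working with $\cL$ throughout: for the inner product it splits $\gradsb\cL(\bs^{(t)},\bgamma^{(t)})$ as $[\gradsb\cL(\bs^{(t)},\bgamma^{(t)})-\gradsb\cL(\bs^*,\bgamma^{(t)})]$ (strong convexity of $\cL$), plus $[\gradsb\cL(\bs^*,\bgamma^{(t)})-\gradsb\cL(\bs^*,\bgamma^*)]$ (FOS on $\cL$), plus $[\gradsb\cL(\bs^*,\bgamma^*)-\gradsb\bar\cL(\bs^*,\bgamma^*)]$ (statistical error at the fixed anchor $(\bs^*,\bgamma^*)$), and similarly for the gradient-norm term. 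This uses the conditions exactly as written and only touches $\bar\cL$ through $\gradsb\bar\cL(\bs^*,\bgamma^*)=0$. Your version would go through as well, but you would need to either transfer the conditions to $\bar\cL$ (not free from what is assumed) or re-center the split on $\cL$ as the paper does.
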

\begin{remark}
Theorem \ref{thm:convergence_general} establishes the linear convergence of Algorithm \ref{alg:PGD} when the initial points are close to the unknown parameters. The first term on the right-hand side is called the optimization error, which goes to zero as iteration number $t$ goes to infinity. The second term is called the statistical error of the HTM model, which goes to zero when sample size $mk$ goes to infinity. Hence, the estimation error of our proposed algorithm converges to the order of $O((\epsilon_1(k,n)^2+\epsilon_2(k,n)^2)/\mu^2)$ after $t=O(\log((\epsilon_1(k,n)^2+\epsilon_2(k,n)^2)/\mu^2r^2)/\log\rho)$ iterations.
\end{remark}
Note that the results in Theorem \ref{thm:convergence_general} hold for any general HTM models with Algorithm \ref{alg:PGD} as a solver. In particular, if we run the alternating gradient descent algorithm on the HBTL and HTCV models proposed in Section \ref{sec:model}, we will also obtain linear convergence rate to the true parameters up to a statistical error in the order of $O(n^2\log(mn^2)/(mk))$, which matches the state-of-the-art statistical error for such models \citep{negahban2017}. We provide the implications of Theorem \ref{thm:convergence_general} on specific models in the supplementary material.

\section{Experiments}\label{sec:expr}
In this section, we present experimental results to show the performance of the proposed algorithm on heterogeneous populations of users. The experiments are conducted on both synthetic and real data with both benign users and adversarial users. We use the Kendall's tau correlation~\cite{kendall1948} between the estimated and true rankings to measure the similarity between rankings, which is defined as $\tau = \frac{2(c-d)}{n(n-1)}$, where $c$ and $d$ are the number of pairs on which the two rankings agree and disagree, respectively. Pairs that are tied in at least one of the rankings are not counted in $c$ or $d$.


\textbf{Baseline methods:} In Gumbel noise setting, we compare Algorithm \ref{alg:PGD} based on our proposed HBTL model with (i) the BTL model that can be optimized through iterative maximum-likelihood methods \citep{negahban2012} or spectral methods such as Rank Centrality \citep{negahban2017}; and (ii) the CrowdBT algorithm~\citep{chen2013pairwise}, which is a variation of BTL that allows users with different levels of accuracy. In the normal noise setting, we compare Algorithm \ref{alg:PGD} based on our proposed HTCV model with TCV model. We also implemented a TCV equivalent of CrowdBT and report its performance as CrowdTCV.

\subsection{Experimental Results on Synthetic Data}
We set number of items $n=20$, number of users $m=9$ and set the ground truth score vector $\bs$ to be uniformly distributed in $[0,1]$. The $m$ users are divided into groups $A$ and $B$, consisting of 3 and 6 users respectively. These two groups of users generate heterogeneous data in the sense that users in group $A$ are more accurate than those in group $B$. 
We vary $\gamma_A$ in the range of $\{2.5,5,10\}$ and $\gamma_B$ in the range of $\{0.25,1,2.5\}$, which leads to in total $9$ configurations of data generation. For each configuration, we conduct the experiment under the following two settings:
\begin{itemize}[leftmargin=2em,nosep]
    \item[(1)] \textbf{Benign:} $\gamma_1,\ldots,\gamma_3=\gamma_A$ (Group A); $\gamma_4,\ldots,\gamma_9=\gamma_B$ (Group B).
    \item[(2)] \textbf{Adversarial:} $\gamma_1=-\gamma_A$, $\gamma_2,\gamma_3=\gamma_A$ (Group A); $\gamma_4,\gamma_5=-\gamma_B$, $\gamma_6,\ldots,\gamma_9=\gamma_B$ (Group B).
\end{itemize}
We also test on various densities of compared pairs, which effectively controls the sample size. In particular, we choose 4 sets of $\alpha$, which denote the portion of all possible pairs that are compared. The larger the value, the more pairs are compared by each user. The simulation process is as follows: we first generate $n(n-1)$ ordered pairs of items, where $n$ is the number of items. This is equivalent to comparing each unique pair of items twice. Then for each pair of items, response from every annotator had a probability of $\alpha$ to be recorded and used for training the model. And $\alpha$ is chosen from $\{0.2, 0.4, 0.6, 0.8\}$ to make up for four runs. Each experiment is repeated $100$ times with different random seeds.



Under setting (1), 
we plot the estimation error of Algorithm \ref{alg:PGD} v.s. number of iterations for HBTL and HTCV model in Figures \ref{fig:est_err_s_hbtl}-\ref{fig:est_err_ga_hbtl} and \ref{fig:est_err_s_htcv}-\ref{fig:est_err_ga_htcv} respectively. In all settings, our algorithm enjoys a linear convergence rate to the true parameters up to statistical errors, which is well aligned with the theoretical results in Theorem \ref{thm:convergence_general}. 
\begin{figure}[htbp]
    \centering
    \subfigure[Estimation error for $\bs^*$]{
        \includegraphics[height=0.17\textwidth]{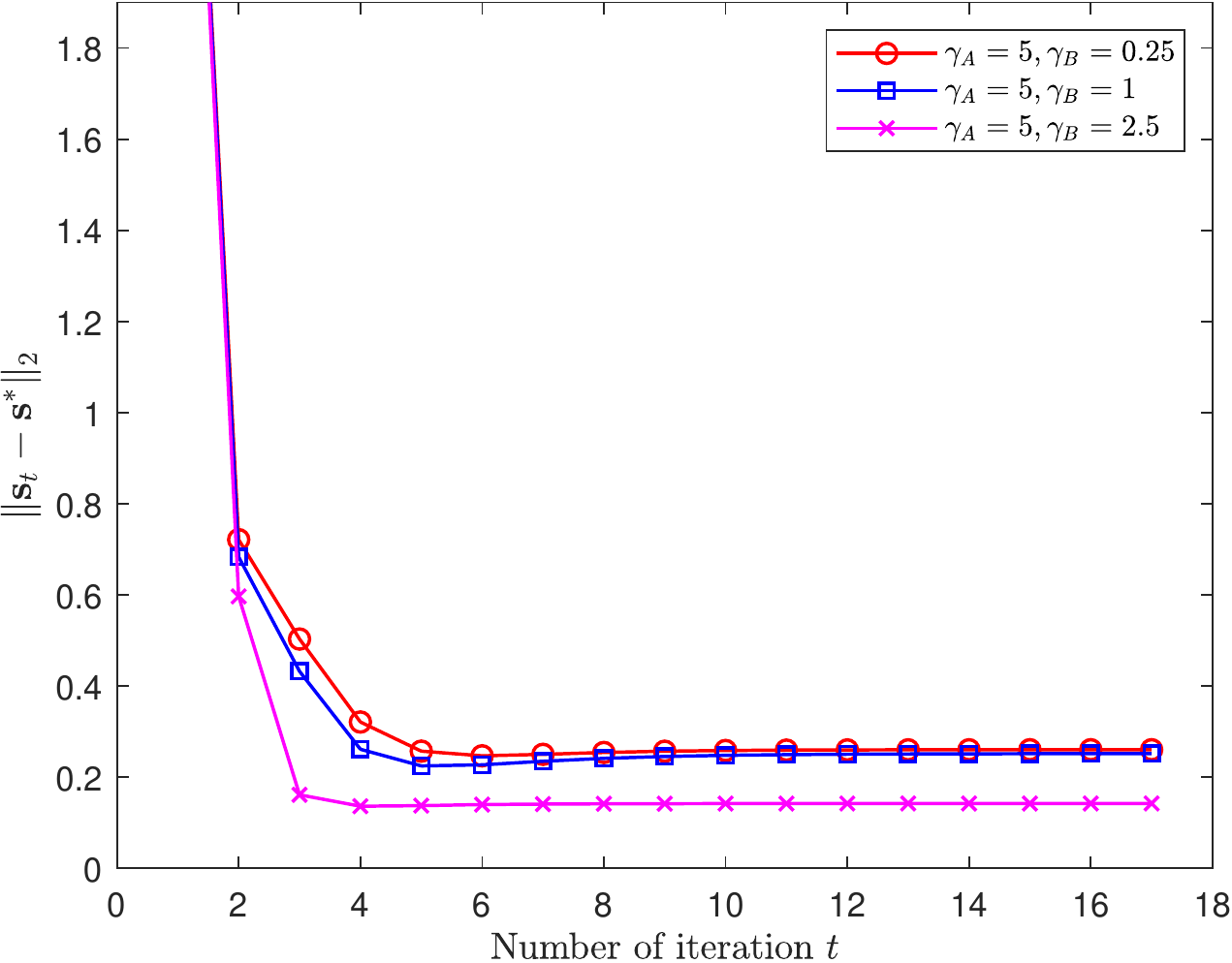}\label{fig:est_err_s_hbtl}
    }
    \subfigure[Estimation error for $\bgamma^*$]{
        \includegraphics[height=0.17\textwidth]{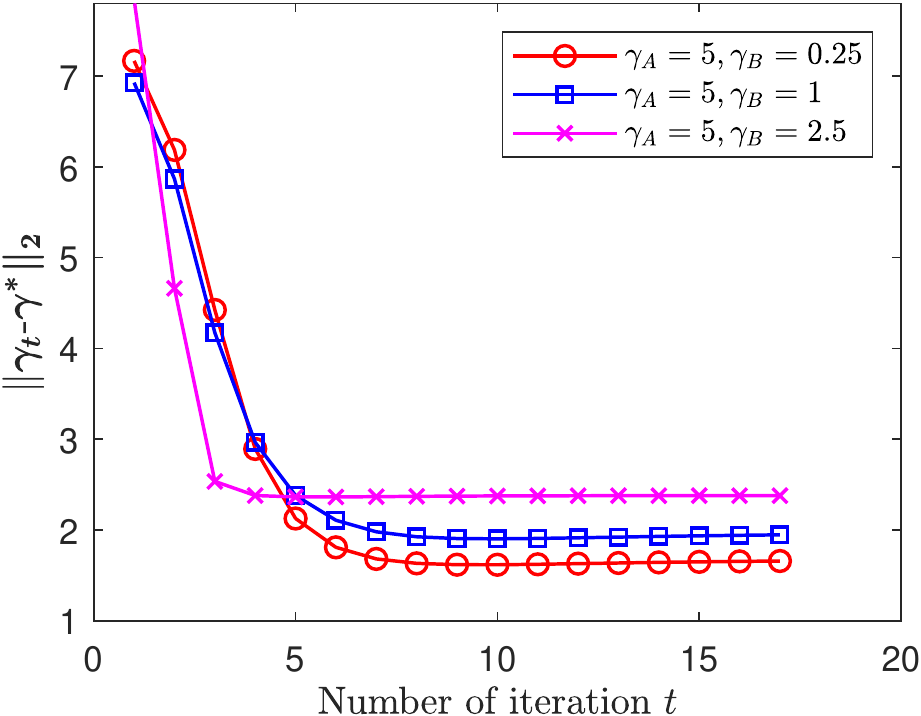}\label{fig:est_err_ga_hbtl}
    }
    \subfigure[Estimation error for $\bs^*$]{
        \includegraphics[height=0.17\textwidth]{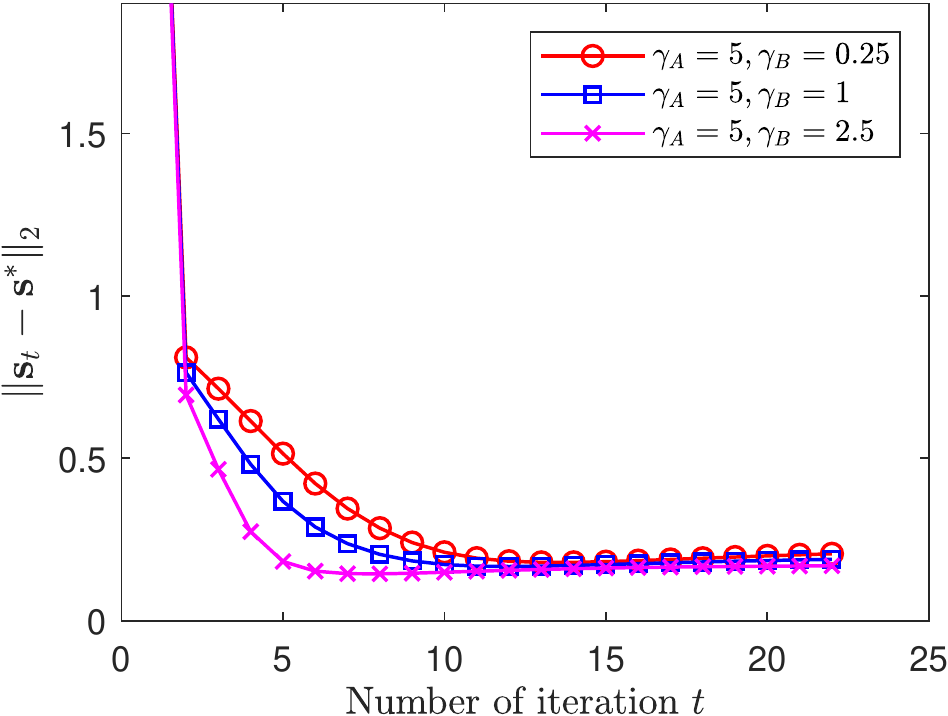}\label{fig:est_err_s_htcv}
        }
    \subfigure[Estimation error for $\bgamma^*$]{
        \includegraphics[height=0.17\textwidth]{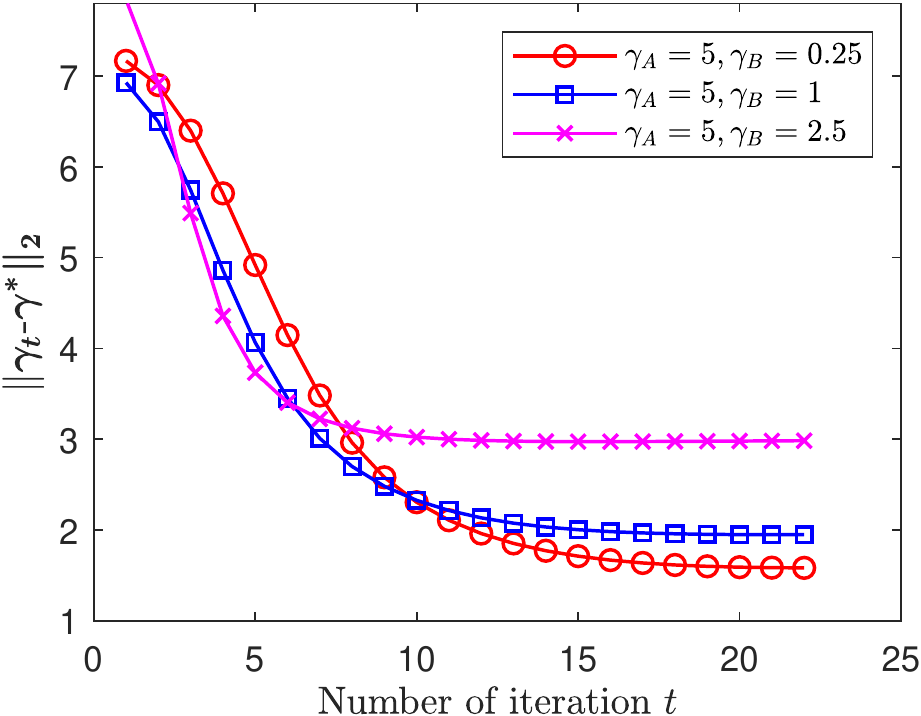}\label{fig:est_err_ga_htcv}
        }
    \caption{Evolution of estimation errors vs. number of iterations $t$ for HBTL model. (c)-(d): Evolution of estimation errors vs. number of iterations $t$ for HTCV model.
    }
    \label{fig}
\end{figure}

When there is no adversarial users in the system, the ranking results for Gumbel noises under different configurations of $\gamma_A$ and $\gamma_B$ are shown in Table~\ref{tab:simulation_benign_extended_gumbel} and the ranking results for normal noises under different configurations of $\gamma_A$ and $\gamma_B$ are shown in Table~\ref{tab:simulation_benign_extended_normal}. In both tables, each cell presents the Kendall's tau correlation between the aggregated ranking and the ground truth, averaged over 100 trials. For each experimental setting, we use the bold text to denote the method which achieved highest performance. We also underline the highest score whenever there is a tie.
It can be observed that in almost all cases, HBTL provides much more accurate rankings than BTL and HTCV significantly outperforms TCV as well.
In particular, the larger the difference between $\gamma_A$ and $\gamma_B$ is, the more significant the improvement is. The only exception is when $\gamma_A=\gamma_B=2.5$, in which case the data is not heterogeneous and our HTM model has no advantage. Nevertheless, our method still achieve comparable performance as BTL for non-heterogeneous data. It can also be observed that HBTL generally outperforms CrowdBT. But the advantage is not large, as CrowdBT 
also includes the different accuracy levels of different users. Importantly, however, as discussed in Section \ref{sec:HTM_model}, CrowdBT is not compatible with the additive noise in Thurstonian models and cannot be extended in a natural way to ranked data other than pairwise comparison. In addition, unlike CrowdBT, our method enjoys strong theoretical guarantees while maintaining a good performance. Tables \ref{tab:simulation_benign_extended_gumbel} and \ref{tab:simulation_benign_extended_normal} also illustrate an important fact: If there are users with high accuracy, the presence of low quality data does not significantly impact the performance of Algorithm \ref{alg:PGD}.
\begin{table}[!htbp]
    \centering
    \caption{Kendall's tau correlation for different method under Gumbel noise. Group $A$ users all have the accuracy level $\gamma_A$ and Group $B$ users all have the accuracy level $\gamma_B$. $\alpha$ represents the portion of all possible pairwise comparisons each annotator labeled in the simulation. The \textbf{bold} number highlights the highest performance and the \underline{underlined} number indicates a tie.
    \label{tab:simulation_benign_extended_gumbel}}
        \vspace{0.2in}
    \begin{small}
    \begin{tabular}{cccccc}
      \toprule
      \multirow{2}{*}{\shortstack[t]{Observ.\\Ratio}}&\multirow{2}{*}{$\gamma_B$}&\multirow{2}{*}{Methods}&\multicolumn{3}{c}{$\gamma_A$}\\
      \cmidrule(lr){4-6}
      &&& 2.5 & 5 & 10  \\
      \midrule
      \multirow{9}{*}{$\alpha = 0.8$}
&\multirow{3}{*}{0.25} &BTL& \small{0.767}\small{$\pm$}\small{0.055}&\small{0.836}\small{$\pm$}\small{0.043}&\small{0.879}\small{$\pm$}\small{0.032} \\ &&CrowdBT& \small{0.847}\small{$\pm$}\small{0.042}&\small{0.928}\small{$\pm$}\small{0.023}&\small{0.962}\small{$\pm$}\small{0.016} \\ &&HBTL& \textbf{\small{0.850}}\small{$\pm$}\small{0.041}&\textbf{\small{0.930}}\small{$\pm$}\small{0.024}&\textbf{\small{0.964}}\small{$\pm$}\small{0.015} \\\cmidrule(lr){2-6}
&\multirow{3}{*}{1.0} &BTL& \small{0.863}\small{$\pm$}\small{0.036}&\small{0.896}\small{$\pm$}\small{0.028}&\small{0.923}\small{$\pm$}\small{0.026} \\ &&CrowdBT& \underline{\small{0.875}}\small{$\pm$}\small{0.033}&\underline{\small{0.930}}\small{$\pm$}\small{0.024}&\small{0.967}\small{$\pm$}\small{0.018} \\ &&HBTL& \underline{\small{0.875}}\small{$\pm$}\small{0.033}&\underline{\small{0.930}}\small{$\pm$}\small{0.024}&\textbf{\small{0.969}}\small{$\pm$}\small{0.017} \\\cmidrule(lr){2-6}
&\multirow{3}{*}{2.5} &BTL& \textbf{\small{0.933}}\small{$\pm$}\small{0.022}&\small{0.946}\small{$\pm$}\small{0.019}&\small{0.959}\small{$\pm$}\small{0.018} \\ &&CrowdBT& \small{0.931}\small{$\pm$}\small{0.024}&\small{0.947}\small{$\pm$}\small{0.019}&\small{0.967}\small{$\pm$}\small{0.017} \\ &&HBTL& \small{0.931}\small{$\pm$}\small{0.025}&\textbf{\small{0.948}}\small{$\pm$}\small{0.021}&\textbf{\small{0.972}}\small{$\pm$}\small{0.015} \\\midrule

\multirow{9}{*}{$\alpha = 0.6$}
&\multirow{3}{*}{0.25} &BTL& \small{0.743}\small{$\pm$}\small{0.064}&\small{0.814}\small{$\pm$}\small{0.048}&\small{0.853}\small{$\pm$}\small{0.037} \\ &&CrowdBT& \small{0.823}\small{$\pm$}\small{0.050}&\textbf{\small{0.909}}\small{$\pm$}\small{0.034}&\small{0.954}\small{$\pm$}\small{0.018} \\ &&HBTL& \textbf{\small{0.824}}\small{$\pm$}\small{0.051}&\small{0.908}\small{$\pm$}\small{0.033}&\textbf{\small{0.955}}\small{$\pm$}\small{0.018} \\\cmidrule(lr){2-6}
&\multirow{3}{*}{1.0} &BTL& \small{0.837}\small{$\pm$}\small{0.036}&\small{0.872}\small{$\pm$}\small{0.033}&\small{0.903}\small{$\pm$}\small{0.033} \\ &&CrowdBT& \textbf{\small{0.853}}\small{$\pm$}\small{0.035}&\small{0.911}\small{$\pm$}\small{0.031}&\small{0.955}\small{$\pm$}\small{0.018} \\ &&HBTL& \small{0.851}\small{$\pm$}\small{0.033}&\textbf{\small{0.913}}\small{$\pm$}\small{0.028}&\textbf{\small{0.958}}\small{$\pm$}\small{0.017} \\\cmidrule(lr){2-6}
&\multirow{3}{*}{2.5} &BTL& \textbf{\small{0.913}}\small{$\pm$}\small{0.032}&\small{0.931}\small{$\pm$}\small{0.024}&\small{0.948}\small{$\pm$}\small{0.021} \\ &&CrowdBT& \small{0.910}\small{$\pm$}\small{0.028}&\small{0.935}\small{$\pm$}\small{0.020}&\small{0.961}\small{$\pm$}\small{0.016} \\ &&HBTL& \small{0.912}\small{$\pm$}\small{0.029}&\textbf{\small{0.936}}\small{$\pm$}\small{0.022}&\textbf{\small{0.967}}\small{$\pm$}\small{0.017} \\\midrule

\multirow{9}{*}{$\alpha = 0.4$}
&\multirow{3}{*}{0.25} &BTL& \small{0.671}\small{$\pm$}\small{0.062}&\small{0.761}\small{$\pm$}\small{0.053}&\small{0.812}\small{$\pm$}\small{0.048} \\ &&CrowdBT& \small{0.764}\small{$\pm$}\small{0.065}&\small{0.872}\small{$\pm$}\small{0.037}&\small{0.933}\small{$\pm$}\small{0.024} \\ &&HBTL& \textbf{\small{0.769}}\small{$\pm$}\small{0.061}&\textbf{\small{0.873}}\small{$\pm$}\small{0.034}&\textbf{\small{0.934}}\small{$\pm$}\small{0.022} \\\cmidrule(lr){2-6}
&\multirow{3}{*}{1.0} &BTL& \small{0.791}\small{$\pm$}\small{0.051}&\small{0.844}\small{$\pm$}\small{0.043}&\small{0.866}\small{$\pm$}\small{0.035} \\ &&CrowdBT& \small{0.798}\small{$\pm$}\small{0.050}&\small{0.889}\small{$\pm$}\small{0.029}&\small{0.934}\small{$\pm$}\small{0.027} \\ &&HBTL& \textbf{\small{0.806}}\small{$\pm$}\small{0.051}&\textbf{\small{0.891}}\small{$\pm$}\small{0.031}&\textbf{\small{0.936}}\small{$\pm$}\small{0.026} \\\cmidrule(lr){2-6}
&\multirow{3}{*}{2.5} &BTL& \textbf{\small{0.882}}\small{$\pm$}\small{0.034}&\small{0.910}\small{$\pm$}\small{0.030}&\small{0.919}\small{$\pm$}\small{0.027} \\ &&CrowdBT& \small{0.879}\small{$\pm$}\small{0.034}&\small{0.912}\small{$\pm$}\small{0.026}&\small{0.943}\small{$\pm$}\small{0.022} \\ &&HBTL& \small{0.880}\small{$\pm$}\small{0.032}&\textbf{\small{0.916}}\small{$\pm$}\small{0.028}&\textbf{\small{0.945}}\small{$\pm$}\small{0.020} \\\midrule

\multirow{9}{*}{$\alpha = 0.2$}
&\multirow{3}{*}{0.25} &BTL& \small{0.575}\small{$\pm$}\small{0.095}&\small{0.663}\small{$\pm$}\small{0.078}&\small{0.712}\small{$\pm$}\small{0.069} \\ &&CrowdBT& \small{0.644}\small{$\pm$}\small{0.094}&\small{0.798}\small{$\pm$}\small{0.055}&\textbf{\small{0.884}}\small{$\pm$}\small{0.035} \\ &&HBTL& \textbf{\small{0.665}}\small{$\pm$}\small{0.090}&\textbf{\small{0.805}}\small{$\pm$}\small{0.051}&\small{0.882}\small{$\pm$}\small{0.034} \\\cmidrule(lr){2-6}
&\multirow{3}{*}{1.0} &BTL& \textbf{\small{0.708}}\small{$\pm$}\small{0.073}&\small{0.768}\small{$\pm$}\small{0.057}&\small{0.804}\small{$\pm$}\small{0.039} \\ &&CrowdBT& \small{0.696}\small{$\pm$}\small{0.081}&\small{0.813}\small{$\pm$}\small{0.052}&\small{0.876}\small{$\pm$}\small{0.034} \\ &&HBTL& \small{0.702}\small{$\pm$}\small{0.079}&\textbf{\small{0.819}}\small{$\pm$}\small{0.052}&\textbf{\small{0.882}}\small{$\pm$}\small{0.034} \\\cmidrule(lr){2-6}
&\multirow{3}{*}{2.5} &BTL& \textbf{\small{0.820}}\small{$\pm$}\small{0.044}&\underline{\small{0.861}}\small{$\pm$}\small{0.043}&\small{0.883}\small{$\pm$}\small{0.033} \\ &&CrowdBT& \small{0.803}\small{$\pm$}\small{0.048}&\small{0.857}\small{$\pm$}\small{0.037}&\small{0.898}\small{$\pm$}\small{0.030} \\ &&HBTL& \small{0.807}\small{$\pm$}\small{0.049}&\underline{\small{0.861}}\small{$\pm$}\small{0.038}&\textbf{\small{0.904}}\small{$\pm$}\small{0.029} \\
\bottomrule
    \end{tabular}
    \end{small}
\end{table}

\begin{table}[!htbp]
    \centering
    \caption{Kendall's tau correlation for different methods under noise from the normal distribution. Group $A$ users all have the accuracy level $\gamma_A$ and Group $B$ users all have the accuracy level $\gamma_B$. $\alpha$ represents the portion of all possible pairwise comparisons each annotator labeled in the simulation. The \textbf{bold} number highlights the highest performance and the \underline{underlined} number indicates a tie.
    \label{tab:simulation_benign_extended_normal}} \vspace{0.2in}
    \begin{small}
    \begin{tabular}{cccccc}
      \toprule
      \multirow{2}{*}{\shortstack[t]{Observ.\\Ratio}}&\multirow{2}{*}{$\gamma_B$}&\multirow{2}{*}{Methods}&\multicolumn{3}{c}{$\gamma_A$}\\
      \cmidrule(lr){4-6}
      &&& 2.5 & 5 & 10  \\
      \midrule
\multirow{9}{*}{$\alpha = 0.8$}
&\multirow{3}{*}{0.25} &TCV& \small{0.811}\small{$\pm$}\small{0.048}&\small{0.860}\small{$\pm$}\small{0.040}&\small{0.885}\small{$\pm$}\small{0.036} \\ &&CrowdTCV& \small{0.881}\small{$\pm$}\small{0.032}&\underline{\small{0.943}}\small{$\pm$}\small{0.021}&\underline{\small{0.971}}\small{$\pm$}\small{0.014} \\ &&HTCV& \textbf{\small{0.882}}\small{$\pm$}\small{0.030}&\underline{\small{0.943}}\small{$\pm$}\small{0.021}&\underline{\small{0.971}}\small{$\pm$}\small{0.015} \\\cmidrule(lr){2-6}
&\multirow{3}{*}{1.0} &TCV& \small{0.885}\small{$\pm$}\small{0.036}&\small{0.910}\small{$\pm$}\small{0.027}&\small{0.925}\small{$\pm$}\small{0.029} \\ &&CrowdTCV& \underline{\small{0.897}}\small{$\pm$}\small{0.030}&\underline{\small{0.944}}\small{$\pm$}\small{0.020}&\small{0.973}\small{$\pm$}\small{0.015} \\ &&HTCV& \underline{\small{0.897}}\small{$\pm$}\small{0.033}&\underline{\small{0.944}}\small{$\pm$}\small{0.020}&\textbf{\small{0.975}}\small{$\pm$}\small{0.013} \\\cmidrule(lr){2-6}
&\multirow{3}{*}{2.5} &TCV& \underline{\small{0.945}}\small{$\pm$}\small{0.021}&\small{0.956}\small{$\pm$}\small{0.018}&\small{0.965}\small{$\pm$}\small{0.018} \\ &&CrowdTCV& \underline{\small{0.945}}\small{$\pm$}\small{0.021}&\small{0.954}\small{$\pm$}\small{0.019}&\small{0.976}\small{$\pm$}\small{0.014} \\ &&HTCV& \small{0.944}\small{$\pm$}\small{0.021}&\textbf{\small{0.959}}\small{$\pm$}\small{0.017}&\textbf{\small{0.981}}\small{$\pm$}\small{0.014} \\\midrule

\multirow{9}{*}{$\alpha = 0.6$}
&\multirow{3}{*}{0.25} &TCV& \small{0.763}\small{$\pm$}\small{0.059}&\small{0.830}\small{$\pm$}\small{0.043}&\small{0.850}\small{$\pm$}\small{0.041} \\ &&CrowdTCV& \small{0.845}\small{$\pm$}\small{0.038}&\textbf{\small{0.926}}\small{$\pm$}\small{0.023}&\underline{\small{0.961}}\small{$\pm$}\small{0.020} \\ &&HTCV& \textbf{\small{0.846}}\small{$\pm$}\small{0.040}&\small{0.925}\small{$\pm$}\small{0.025}&\underline{\small{0.961}}\small{$\pm$}\small{0.020} \\\cmidrule(lr){2-6}
&\multirow{3}{*}{1.0} &TCV& \small{0.862}\small{$\pm$}\small{0.038}&\small{0.892}\small{$\pm$}\small{0.034}&\small{0.912}\small{$\pm$}\small{0.025} \\ &&CrowdTCV& \small{0.870}\small{$\pm$}\small{0.035}&\small{0.930}\small{$\pm$}\small{0.028}&\small{0.962}\small{$\pm$}\small{0.019} \\ &&HTCV& \textbf{\small{0.875}}\small{$\pm$}\small{0.033}&\textbf{\small{0.932}}\small{$\pm$}\small{0.027}&\textbf{\small{0.963}}\small{$\pm$}\small{0.018} \\\cmidrule(lr){2-6}
&\multirow{3}{*}{2.5} &TCV& \textbf{\small{0.927}}\small{$\pm$}\small{0.027}&\small{0.943}\small{$\pm$}\small{0.021}&\small{0.955}\small{$\pm$}\small{0.019} \\ &&CrowdTCV& \small{0.925}\small{$\pm$}\small{0.027}&\small{0.946}\small{$\pm$}\small{0.026}&\small{0.968}\small{$\pm$}\small{0.015} \\ &&HTCV& \small{0.925}\small{$\pm$}\small{0.027}&\textbf{\small{0.952}}\small{$\pm$}\small{0.022}&\textbf{\small{0.974}}\small{$\pm$}\small{0.013} \\\midrule

\multirow{9}{*}{$\alpha = 0.4$}
&\multirow{3}{*}{0.25} &TCV& \small{0.691}\small{$\pm$}\small{0.073}&\small{0.790}\small{$\pm$}\small{0.047}&\small{0.809}\small{$\pm$}\small{0.048} \\ &&CrowdTCV& \small{0.804}\small{$\pm$}\small{0.050}&\small{0.901}\small{$\pm$}\small{0.028}&\textbf{\small{0.946}}\small{$\pm$}\small{0.022} \\ &&HTCV& \textbf{\small{0.808}}\small{$\pm$}\small{0.049}&\textbf{\small{0.904}}\small{$\pm$}\small{0.028}&\small{0.945}\small{$\pm$}\small{0.022} \\\cmidrule(lr){2-6}
&\multirow{3}{*}{1.0} &TCV& \small{0.821}\small{$\pm$}\small{0.047}&\small{0.859}\small{$\pm$}\small{0.036}&\small{0.875}\small{$\pm$}\small{0.036} \\ &&CrowdTCV& \small{0.832}\small{$\pm$}\small{0.044}&\small{0.900}\small{$\pm$}\small{0.035}&\small{0.946}\small{$\pm$}\small{0.020} \\ &&HTCV& \textbf{\small{0.836}}\small{$\pm$}\small{0.043}&\textbf{\small{0.904}}\small{$\pm$}\small{0.032}&\textbf{\small{0.947}}\small{$\pm$}\small{0.020} \\\cmidrule(lr){2-6}
&\multirow{3}{*}{2.5} &TCV& \textbf{\small{0.901}}\small{$\pm$}\small{0.027}&\small{0.921}\small{$\pm$}\small{0.029}&\small{0.935}\small{$\pm$}\small{0.026} \\ &&CrowdTCV& \small{0.895}\small{$\pm$}\small{0.031}&\small{0.923}\small{$\pm$}\small{0.028}&\small{0.950}\small{$\pm$}\small{0.019} \\ &&HTCV& \small{0.895}\small{$\pm$}\small{0.030}&\textbf{\small{0.926}}\small{$\pm$}\small{0.025}&\textbf{\small{0.957}}\small{$\pm$}\small{0.018} \\\midrule

\multirow{9}{*}{$\alpha = 0.2$}
&\multirow{3}{*}{0.25} &TCV& \small{0.599}\small{$\pm$}\small{0.088}&\small{0.688}\small{$\pm$}\small{0.077}&\small{0.738}\small{$\pm$}\small{0.060} \\ &&CrowdTCV& \small{0.689}\small{$\pm$}\small{0.080}&\small{0.826}\small{$\pm$}\small{0.046}&\textbf{\small{0.899}}\small{$\pm$}\small{0.031} \\ &&HTCV& \textbf{\small{0.693}}\small{$\pm$}\small{0.082}&\textbf{\small{0.828}}\small{$\pm$}\small{0.049}&\small{0.898}\small{$\pm$}\small{0.034} \\\cmidrule(lr){2-6}
&\multirow{3}{*}{1.0} &TCV& \small{0.733}\small{$\pm$}\small{0.070}&\small{0.791}\small{$\pm$}\small{0.055}&\small{0.815}\small{$\pm$}\small{0.041} \\ &&CrowdTCV& \small{0.729}\small{$\pm$}\small{0.074}&\small{0.836}\small{$\pm$}\small{0.043}&\textbf{\small{0.904}}\small{$\pm$}\small{0.033} \\ &&HTCV& \textbf{\small{0.740}}\small{$\pm$}\small{0.072}&\textbf{\small{0.841}}\small{$\pm$}\small{0.038}&\small{0.901}\small{$\pm$}\small{0.031} \\\cmidrule(lr){2-6}
&\multirow{3}{*}{2.5} &TCV& \textbf{\small{0.856}}\small{$\pm$}\small{0.041}&\small{0.878}\small{$\pm$}\small{0.036}&\small{0.888}\small{$\pm$}\small{0.032} \\ &&CrowdTCV& \small{0.844}\small{$\pm$}\small{0.048}&\small{0.873}\small{$\pm$}\small{0.035}&\small{0.905}\small{$\pm$}\small{0.027} \\ &&HTCV& \small{0.848}\small{$\pm$}\small{0.041}&\textbf{\small{0.881}}\small{$\pm$}\small{0.036}&\textbf{\small{0.913}}\small{$\pm$}\small{0.026} \\
\bottomrule
    \end{tabular}
    \end{small}
\end{table}

When there are a portion of adversarial users as stated in setting (2), we consider adversarial users whose accuracy level $\gamma_u$ may take negative values as discussed above. 
The results for Gumbel and normal noises under setting (2) are shown in Table~\ref{tab:simulation_advers_extended_gumbel} and Table~\ref{tab:simulation_advers_extended_normal} respectively. It can be seen that in this case, the difference between the methods is even more pronounced.
\begin{table}[!htbp]
    \centering
    \caption{Kendall's tau correlation for different methods under noise from the Gumbel distribution when a third of the users are \emph{adversarial}. The \textbf{bold} number highlights the highest performance and the \underline{underlined} number indicates a tie.
    \label{tab:simulation_advers_extended_gumbel}}
    \vspace{0.2in}
    \begin{small}
    \begin{tabular}{cccccc}
      \toprule
      \multirow{2}{*}{\shortstack[t]{Observ.\\Ratio}}&\multirow{2}{*}{$\gamma_B$}&\multirow{2}{*}{Methods}&\multicolumn{3}{c}{$\gamma_A$}\\
      \cmidrule(lr){4-6}
      &&& 2.5 & 5 & 10  \\
      \midrule
\multirow{9}{*}{$\alpha = 0.8$}
&\multirow{3}{*}{0.25} &BTL& \small{0.443}\small{$\pm$}\small{0.107}&\small{0.569}\small{$\pm$}\small{0.096}&\small{0.614}\small{$\pm$}\small{0.085} \\ &&CrowdBT& \underline{\small{0.852}}\small{$\pm$}\small{0.044}&\small{0.925}\small{$\pm$}\small{0.023}&\textbf{\small{0.967}}\small{$\pm$}\small{0.017} \\ &&HBTL& \underline{\small{0.852}}\small{$\pm$}\small{0.045}&\textbf{\small{0.926}}\small{$\pm$}\small{0.023}&\small{0.966}\small{$\pm$}\small{0.017} \\\cmidrule(lr){2-6}
&\multirow{3}{*}{1.0} &BTL& \small{0.575}\small{$\pm$}\small{0.089}&\small{0.663}\small{$\pm$}\small{0.071}&\small{0.710}\small{$\pm$}\small{0.074} \\ &&CrowdBT& \small{0.873}\small{$\pm$}\small{0.037}&\small{0.931}\small{$\pm$}\small{0.023}&\textbf{\small{0.967}}\small{$\pm$}\small{0.014} \\ &&HBTL& \textbf{\small{0.875}}\small{$\pm$}\small{0.037}&\textbf{\small{0.932}}\small{$\pm$}\small{0.024}&\small{0.966}\small{$\pm$}\small{0.017} \\\cmidrule(lr){2-6}
&\multirow{3}{*}{2.5} &BTL& \small{0.725}\small{$\pm$}\small{0.057}&\small{0.780}\small{$\pm$}\small{0.046}&\small{0.798}\small{$\pm$}\small{0.047} \\ &&CrowdBT& \underline{\small{0.931}}\small{$\pm$}\small{0.025}&\small{0.948}\small{$\pm$}\small{0.019}&\small{0.966}\small{$\pm$}\small{0.016} \\ &&HBTL& \underline{\small{0.931}}\small{$\pm$}\small{0.025}&\textbf{\small{0.951}}\small{$\pm$}\small{0.019}&\textbf{\small{0.973}}\small{$\pm$}\small{0.015} \\\midrule

\multirow{9}{*}{$\alpha = 0.6$}
&\multirow{3}{*}{0.25} &BTL& \small{0.384}\small{$\pm$}\small{0.122}&\small{0.491}\small{$\pm$}\small{0.107}&\small{0.557}\small{$\pm$}\small{0.095} \\ &&CrowdBT& \small{0.822}\small{$\pm$}\small{0.046}&\small{0.908}\small{$\pm$}\small{0.030}&\small{0.953}\small{$\pm$}\small{0.019} \\ &&HBTL& \textbf{\small{0.824}}\small{$\pm$}\small{0.044}&\textbf{\small{0.910}}\small{$\pm$}\small{0.028}&\textbf{\small{0.954}}\small{$\pm$}\small{0.018} \\\cmidrule(lr){2-6}
&\multirow{3}{*}{1.0} &BTL& \small{0.546}\small{$\pm$}\small{0.097}&\small{0.627}\small{$\pm$}\small{0.078}&\small{0.670}\small{$\pm$}\small{0.080} \\ &&CrowdBT& \small{0.852}\small{$\pm$}\small{0.037}&\small{0.911}\small{$\pm$}\small{0.029}&\small{0.954}\small{$\pm$}\small{0.018} \\ &&HBTL& \textbf{\small{0.854}}\small{$\pm$}\small{0.037}&\textbf{\small{0.914}}\small{$\pm$}\small{0.028}&\textbf{\small{0.956}}\small{$\pm$}\small{0.019} \\\cmidrule(lr){2-6}
&\multirow{3}{*}{2.5} &BTL& \small{0.684}\small{$\pm$}\small{0.078}&\small{0.736}\small{$\pm$}\small{0.064}&\small{0.755}\small{$\pm$}\small{0.062} \\ &&CrowdBT& \small{0.910}\small{$\pm$}\small{0.028}&\small{0.934}\small{$\pm$}\small{0.025}&\small{0.960}\small{$\pm$}\small{0.016} \\ &&HBTL& \textbf{\small{0.912}}\small{$\pm$}\small{0.029}&\textbf{\small{0.936}}\small{$\pm$}\small{0.024}&\textbf{\small{0.965}}\small{$\pm$}\small{0.017} \\\midrule

\multirow{9}{*}{$\alpha = 0.4$}
&\multirow{3}{*}{0.25} &BTL& \small{0.323}\small{$\pm$}\small{0.130}&\small{0.405}\small{$\pm$}\small{0.132}&\small{0.485}\small{$\pm$}\small{0.109} \\ &&CrowdBT& \small{0.742}\small{$\pm$}\small{0.169}&\underline{\small{0.877}}\small{$\pm$}\small{0.033}&\textbf{\small{0.934}}\small{$\pm$}\small{0.025} \\ &&HBTL& \textbf{\small{0.766}}\small{$\pm$}\small{0.059}&\underline{\small{0.877}}\small{$\pm$}\small{0.035}&\small{0.933}\small{$\pm$}\small{0.024} \\\cmidrule(lr){2-6}
&\multirow{3}{*}{1.0} &BTL& \small{0.448}\small{$\pm$}\small{0.118}&\small{0.544}\small{$\pm$}\small{0.096}&\small{0.583}\small{$\pm$}\small{0.094} \\ &&CrowdBT& \small{0.810}\small{$\pm$}\small{0.044}&\small{0.886}\small{$\pm$}\small{0.031}&\underline{\small{0.934}}\small{$\pm$}\small{0.026} \\ &&HBTL& \textbf{\small{0.819}}\small{$\pm$}\small{0.045}&\textbf{\small{0.891}}\small{$\pm$}\small{0.031}&\underline{\small{0.934}}\small{$\pm$}\small{0.029} \\\cmidrule(lr){2-6}
&\multirow{3}{*}{2.5} &BTL& \small{0.627}\small{$\pm$}\small{0.087}&\small{0.660}\small{$\pm$}\small{0.075}&\small{0.698}\small{$\pm$}\small{0.063} \\ &&CrowdBT& \small{0.879}\small{$\pm$}\small{0.034}&\small{0.913}\small{$\pm$}\small{0.027}&\small{0.939}\small{$\pm$}\small{0.023} \\ &&HBTL& \textbf{\small{0.880}}\small{$\pm$}\small{0.032}&\textbf{\small{0.914}}\small{$\pm$}\small{0.029}&\textbf{\small{0.948}}\small{$\pm$}\small{0.022} \\\midrule

\multirow{9}{*}{$\alpha = 0.2$}
&\multirow{3}{*}{0.25} &BTL& \small{0.246}\small{$\pm$}\small{0.145}&\small{0.305}\small{$\pm$}\small{0.151}&\small{0.361}\small{$\pm$}\small{0.143} \\ &&CrowdBT& \small{0.613}\small{$\pm$}\small{0.235}&\textbf{\small{0.712}}\small{$\pm$}\small{0.356}&\underline{\small{0.848}}\small{$\pm$}\small{0.256} \\ &&HBTL& \textbf{\small{0.614}}\small{$\pm$}\small{0.263}&\small{0.709}\small{$\pm$}\small{0.380}&\underline{\small{0.848}}\small{$\pm$}\small{0.249} \\\cmidrule(lr){2-6}
&\multirow{3}{*}{1.0} &BTL& \small{0.336}\small{$\pm$}\small{0.154}&\small{0.407}\small{$\pm$}\small{0.127}&\small{0.452}\small{$\pm$}\small{0.132} \\ &&CrowdBT& \small{0.644}\small{$\pm$}\small{0.282}&\small{0.795}\small{$\pm$}\small{0.176}&\small{0.878}\small{$\pm$}\small{0.038} \\ &&HBTL& \textbf{\small{0.650}}\small{$\pm$}\small{0.281}&\textbf{\small{0.807}}\small{$\pm$}\small{0.172}&\textbf{\small{0.888}}\small{$\pm$}\small{0.040} \\\cmidrule(lr){2-6}
&\multirow{3}{*}{2.5} &BTL& \small{0.498}\small{$\pm$}\small{0.106}&\small{0.548}\small{$\pm$}\small{0.103}&\small{0.571}\small{$\pm$}\small{0.098} \\ &&CrowdBT& \small{0.803}\small{$\pm$}\small{0.049}&\small{0.858}\small{$\pm$}\small{0.039}&\small{0.897}\small{$\pm$}\small{0.032} \\ &&HBTL& \textbf{\small{0.807}}\small{$\pm$}\small{0.049}&\textbf{\small{0.865}}\small{$\pm$}\small{0.039}&\textbf{\small{0.900}}\small{$\pm$}\small{0.029} \\
\bottomrule
    \end{tabular}
    \end{small}
\end{table}

\begin{table}[!ht]
    \centering
    \caption{Kendall tau correlation for different methods under noise from the normal distribution when a third of the users are \textit{adversarial}. The \textbf{bold} number highlights the highest performance and the \underline{underlined} number indicates a tie.
    }
    \label{tab:simulation_advers_extended_normal}
    \vspace{0.2in}

    \begin{small}
    \begin{tabular}{cccccc}
      \toprule
      \multirow{2}{*}{\shortstack[t]{Observ.\\Ratio}}&\multirow{2}{*}{$\gamma_B$}&\multirow{2}{*}{Methods}&\multicolumn{3}{c}{$\gamma_A$}\\
      \cmidrule(lr){4-6}
      &&& 2.5 & 5 & 10  \\
      \midrule
\multirow{9}{*}{$\alpha = 0.8$}
&\multirow{3}{*}{0.25} &TCV& \small{0.471}\small{$\pm$}\small{0.105}&\small{0.590}\small{$\pm$}\small{0.095}&\small{0.640}\small{$\pm$}\small{0.075} \\ &&CrowdTCV& \underline{\small{0.882}}\small{$\pm$}\small{0.034}&\textbf{\small{0.938}}\small{$\pm$}\small{0.023}&\small{0.972}\small{$\pm$}\small{0.017} \\ &&HTCV& \underline{\small{0.882}}\small{$\pm$}\small{0.033}&\small{0.937}\small{$\pm$}\small{0.023}&\textbf{\small{0.973}}\small{$\pm$}\small{0.016} \\\cmidrule(lr){2-6}
&\multirow{3}{*}{1.0} &TCV& \small{0.642}\small{$\pm$}\small{0.083}&\small{0.694}\small{$\pm$}\small{0.068}&\small{0.722}\small{$\pm$}\small{0.064} \\ &&CrowdTCV& \small{0.893}\small{$\pm$}\small{0.030}&\small{0.945}\small{$\pm$}\small{0.020}&\small{0.973}\small{$\pm$}\small{0.016} \\ &&HTCV& \textbf{\small{0.895}}\small{$\pm$}\small{0.031}&\textbf{\small{0.947}}\small{$\pm$}\small{0.019}&\textbf{\small{0.975}}\small{$\pm$}\small{0.017} \\\cmidrule(lr){2-6}
&\multirow{3}{*}{2.5} &TCV& \small{0.772}\small{$\pm$}\small{0.055}&\small{0.804}\small{$\pm$}\small{0.045}&\small{0.821}\small{$\pm$}\small{0.050} \\ &&CrowdTCV& \textbf{\small{0.945}}\small{$\pm$}\small{0.021}&\small{0.956}\small{$\pm$}\small{0.019}&\small{0.978}\small{$\pm$}\small{0.014} \\ &&HTCV& \small{0.944}\small{$\pm$}\small{0.021}&\textbf{\small{0.960}}\small{$\pm$}\small{0.019}&\textbf{\small{0.982}}\small{$\pm$}\small{0.013} \\\midrule

\multirow{9}{*}{$\alpha = 0.6$}
&\multirow{3}{*}{0.25} &TCV& \small{0.416}\small{$\pm$}\small{0.129}&\small{0.527}\small{$\pm$}\small{0.107}&\small{0.552}\small{$\pm$}\small{0.099} \\ &&CrowdTCV& \underline{\small{0.847}}\small{$\pm$}\small{0.039}&\small{0.924}\small{$\pm$}\small{0.025}&\underline{\small{0.960}}\small{$\pm$}\small{0.020} \\ &&HTCV& \underline{\small{0.847}}\small{$\pm$}\small{0.039}&\textbf{\small{0.925}}\small{$\pm$}\small{0.023}&\underline{\small{0.960}}\small{$\pm$}\small{0.020} \\\cmidrule(lr){2-6}
&\multirow{3}{*}{1.0} &TCV& \small{0.569}\small{$\pm$}\small{0.086}&\small{0.648}\small{$\pm$}\small{0.066}&\small{0.686}\small{$\pm$}\small{0.080} \\ &&CrowdTCV& \small{0.866}\small{$\pm$}\small{0.036}&\small{0.930}\small{$\pm$}\small{0.024}&\underline{\small{0.966}}\small{$\pm$}\small{0.018} \\ &&HTCV& \textbf{\small{0.870}}\small{$\pm$}\small{0.036}&\textbf{\small{0.932}}\small{$\pm$}\small{0.025}&\underline{\small{0.966}}\small{$\pm$}\small{0.018} \\\cmidrule(lr){2-6}
&\multirow{3}{*}{2.5} &TCV& \small{0.718}\small{$\pm$}\small{0.060}&\small{0.762}\small{$\pm$}\small{0.045}&\small{0.786}\small{$\pm$}\small{0.055} \\ &&CrowdTCV& \textbf{\small{0.926}}\small{$\pm$}\small{0.027}&\small{0.949}\small{$\pm$}\small{0.023}&\small{0.969}\small{$\pm$}\small{0.014} \\ &&HTCV& \small{0.925}\small{$\pm$}\small{0.027}&\textbf{\small{0.952}}\small{$\pm$}\small{0.020}&\textbf{\small{0.972}}\small{$\pm$}\small{0.014} \\\midrule

\multirow{9}{*}{$\alpha = 0.4$}
&\multirow{3}{*}{0.25} &TCV& \small{0.359}\small{$\pm$}\small{0.119}&\small{0.472}\small{$\pm$}\small{0.116}&\small{0.514}\small{$\pm$}\small{0.103} \\ &&CrowdTCV& \small{0.797}\small{$\pm$}\small{0.053}&\small{0.893}\small{$\pm$}\small{0.034}&\textbf{\small{0.942}}\small{$\pm$}\small{0.022} \\ &&HTCV& \textbf{\small{0.799}}\small{$\pm$}\small{0.048}&\textbf{\small{0.896}}\small{$\pm$}\small{0.031}&\small{0.938}\small{$\pm$}\small{0.022} \\\cmidrule(lr){2-6}
&\multirow{3}{*}{1.0} &TCV& \small{0.487}\small{$\pm$}\small{0.116}&\small{0.577}\small{$\pm$}\small{0.088}&\small{0.587}\small{$\pm$}\small{0.088} \\ &&CrowdTCV& \small{0.842}\small{$\pm$}\small{0.049}&\small{0.898}\small{$\pm$}\small{0.029}&\textbf{\small{0.945}}\small{$\pm$}\small{0.021} \\ &&HTCV& \textbf{\small{0.843}}\small{$\pm$}\small{0.046}&\textbf{\small{0.902}}\small{$\pm$}\small{0.027}&\small{0.944}\small{$\pm$}\small{0.022} \\\cmidrule(lr){2-6}
&\multirow{3}{*}{2.5} &TCV& \small{0.648}\small{$\pm$}\small{0.073}&\small{0.704}\small{$\pm$}\small{0.071}&\small{0.718}\small{$\pm$}\small{0.066} \\ &&CrowdTCV& \underline{\small{0.895}}\small{$\pm$}\small{0.031}&\small{0.925}\small{$\pm$}\small{0.031}&\small{0.951}\small{$\pm$}\small{0.021} \\ &&HTCV& \underline{\small{0.895}}\small{$\pm$}\small{0.030}&\textbf{\small{0.929}}\small{$\pm$}\small{0.028}&\textbf{\small{0.957}}\small{$\pm$}\small{0.018} \\\midrule

\multirow{9}{*}{$\alpha = 0.2$}
&\multirow{3}{*}{0.25} &TCV& \small{0.259}\small{$\pm$}\small{0.147}&\small{0.349}\small{$\pm$}\small{0.135}&\small{0.382}\small{$\pm$}\small{0.133} \\ &&CrowdTCV& \small{0.600}\small{$\pm$}\small{0.340}&\small{0.826}\small{$\pm$}\small{0.044}&\textbf{\small{0.895}}\small{$\pm$}\small{0.038} \\ &&HTCV& \textbf{\small{0.636}}\small{$\pm$}\small{0.282}&\textbf{\small{0.828}}\small{$\pm$}\small{0.044}&\small{0.893}\small{$\pm$}\small{0.036} \\\cmidrule(lr){2-6}
&\multirow{3}{*}{1.0} &TCV& \small{0.397}\small{$\pm$}\small{0.119}&\small{0.436}\small{$\pm$}\small{0.115}&\small{0.469}\small{$\pm$}\small{0.100} \\ &&CrowdTCV& \small{0.721}\small{$\pm$}\small{0.065}&\textbf{\small{0.834}}\small{$\pm$}\small{0.043}&\small{0.901}\small{$\pm$}\small{0.033} \\ &&HTCV& \textbf{\small{0.736}}\small{$\pm$}\small{0.066}&\small{0.832}\small{$\pm$}\small{0.046}&\textbf{\small{0.905}}\small{$\pm$}\small{0.032} \\\cmidrule(lr){2-6}
&\multirow{3}{*}{2.5} &TCV& \small{0.518}\small{$\pm$}\small{0.102}&\small{0.577}\small{$\pm$}\small{0.098}&\small{0.600}\small{$\pm$}\small{0.077} \\ &&CrowdTCV& \small{0.843}\small{$\pm$}\small{0.049}&\small{0.873}\small{$\pm$}\small{0.037}&\small{0.908}\small{$\pm$}\small{0.030} \\ &&HTCV& \textbf{\small{0.848}}\small{$\pm$}\small{0.041}&\textbf{\small{0.880}}\small{$\pm$}\small{0.036}&\textbf{\small{0.917}}\small{$\pm$}\small{0.028} \\
\bottomrule
    \end{tabular}
    \end{small}
\end{table}

\subsection{Experimental Results on Real-World Data}
We evaluate our method on two real-world datasets. The first one named ``Reading Level'' ~\citep{chen2013pairwise} contains English text excerpts whose reading difficulty level is compared by workers. $624$ workers annotated $490$ excerpts which resulting in a total of $12,728$ pairwise comparisons. 
We also used Mechanical Turk to collect another dataset named ``Country Population''. In this crowdsourcing task, we asked workers to compare the population between two countries and pick the one which has more population. Since the population ranking of countries has a universal consensus, which can be obtained by looking up demographic data, it is a better choice than those movie rankings which subjects to personal preferences. There were 15 countries as shown in Table \ref{tab:cp_gt} which made up to 105 pairwise comparisons. The values were collected according to the latest demography statistics on Wikipedia for each country as of March 2019. Each user was asked 16 pairs randomly selected from all those 105 pairs. A total of 199 workers provided response to this task through Mechanical Turk. 
These two datasets were both collected in online crowdsourcing environments so that we can expect varying worker accuracy where effectiveness of our approach can be demonstrated. 

In real-world datasets, it may happen that two items from two subsets are never compared with each other, directly or indirectly. In such cases, the ranking will not be unique. Furthermore, if data is sparse, the estimates may suffer from overfitting. To address these issues, regularization is often used. While this can be done in a variety of ways, for the sake of comparison with CrowdBT, we use virtual node regularization~\citep{chen2013pairwise}. Specifically, it is assumed that there is a virtual item of utility $s_0 = 0$ which is compared to all other items by a virtual user. This leads to the loss function $\cL + \lambda_0\cL_0$, where $\cL_0 = - \sum_{i\in [n] }\log  F\left(s_0-s_{i}\right)-\sum_{i\in [n] }\log F\left(s_{i}-s_0\right)$ and $\lambda_0 \geq 0$ is a tuning parameter.

We evaluate the performance of the methods for $\lambda_0=0,1,5,10$. For different values of $\lambda_0$, HBTL performs best more often than any other method and, in particular, it performs best for $\lambda_0=0$. Table~\ref{tab:real_data_exp} reports the best performance of each method across different regularization values for the two real-world data experiment. It can be observed that HBTL and HTCV outperform their counterparts, CrowdBT and CrowdTCV, as well as the uniform models, BTL and TCV. 

\begin{table}[h]
        \begin{center}
        \caption{Ground truth for ``Country Population'' dataset.}
        \label{tab:cp_gt}
        \vspace{0.1in}
        \begin{tabular}{cc}
        \toprule
         Country & Population (million) \\
        \midrule
        China &	1410 \\
        India &	1340 \\
        United States &	324 \\
        Indonesia &	264 \\
        Brazil &	209 \\
        Pakistan &	197 \\
        Nigeria & 	191 \\
        Bangladesh &	165 \\
        Russia &	144 \\
        Mexico & 	129 \\
        Japan & 	127 \\
        Ethiopia &	105 \\
        Philippines	& 104.9 \\
        Egypt &	97.6 \\
        Vietnam &	95.5 \\
            \bottomrule
        \end{tabular}
        \end{center}
\end{table}

\begin{table*}[tb]
    \begin{small}
        \begin{center}
        \caption{Performance of ranking algorithms on real-world dataset. The \textbf{bold} number highlights the highest performance.\label{tab:real_data_exp}  }
            \vspace{0.1in}
        \begin{tabular}{ccccccc}
        \toprule
        Dataset                  & BTL    & TCV & CrowdBT & CrowdTCV & HBTL & HTCV \\
        \midrule
        Reading Level      & 0.3472 & 0.3452 & 0.3737 & 0.3672 & \textbf{0.3763} & 0.3729 \\
        Country Population & 0.7524 & 0.7524 & 0.7714 & 0.7714 & \textbf{0.7905} & 0.7714 \\
            \bottomrule
        \end{tabular}
        \end{center}
         \end{small}
    \end{table*}



\subsection{Analysis on regularization effects}

Detailed result with various regularization settings can be found in Table \ref{tab:real_data_reg_rd} and Table \ref{tab:real_data_reg_cp}. The reported values are Kendall's tau correlation. It shows that without regularization our method outperforms other methods. And with virtual node trick, it shows relative amount of improvement in the final ranking result, yet not essential. However, this method needs to tune another parameter $\lambda_0$. If no gold/ground-truth comparison is given, there will be no validation standard to tune this parameter.
Furthermore, the performance of the proposed methods is less dependent on the regularization parameter, which facilitates their application to real data. It is also interesting to see that our method is less prone to be affected by the regularization parameter.

\begin{table}[tb]
        \begin{center}
        \caption{Performance of ranking algorithms for the ``Reading Level'' dataset with different regularization parameters. The \textbf{bold} number highlights the highest performance.\label{tab:real_data_reg_rd}  }
        \vspace{0.1in}
        \begin{tabular}{ccccc}
        \toprule
         & $\lambda_0 = 0$ &$\lambda_0 = 1$ & $\lambda_0=5$ & $\lambda_0=10$ \\
        \midrule
            BTL      &         0.3299 &         0.3433 &         0.3472 & 0.3402 \\
            TCV      &         0.3294 &         0.3423 &         0.3452 & 0.3375 \\
            CrowdBT  &         0.3490 & \textbf{0.3737}&         0.3648 & 0.3535 \\
            CrowdTCV &         0.3512 &        {0.3672}&         0.3511 & 0.3388 \\
            HBTL     & \textbf{0.3608}&         0.3660 &         0.3719 & \textbf{0.3763} \\
            HTCV     &         0.3578 &         0.3696 & \textbf{0.3729}& {0.3680} \\
            \bottomrule
        \end{tabular}
        \end{center}
\end{table}

\begin{table}[tb]
        \begin{center}
        \caption{Performance of ranking algorithms for the ``Country Population'' dataset with different regularization parameters. The \textbf{bold} number highlights the highest performance.}
        \label{tab:real_data_reg_cp}
        \vspace{0.1in}        \begin{tabular}{ccccc}
        \toprule
         & $\lambda_0 = 0$ &$\lambda_0 = 1$ & $\lambda_0=5$ & $\lambda_0=10$ \\
        \midrule
            BTL & 0.7524 & 0.7524 & 0.7524 & {0.7524} \\
            TCV & 0.7524 & 0.7524 & 0.7524 & {0.7524} \\
            CrowdBT & 0.7714 & 0.7714 & \textbf{0.7714} & {0.7524} \\
            CrowdTCV & 0.7714 & 0.7714 & \textbf{0.7714} & {0.7524} \\
            HBTL & \textbf{0.7905} & \textbf{0.7905} & 0.7524 &{0.7524} \\
            HTCV & 0.7714 & 0.7714 & {0.7524} & {0.7524} \\
            \bottomrule
        \end{tabular}
        \end{center}
\end{table}

\section{Conclusions and Future Work}\label{sec:conc}
In this paper, we propose the heterogeneous Thurstone model for pairwise comparisons and partial rankings when data is produced by a population of users with diverse levels of expertise, as is often the case in real-world applications. The proposed model maintains the generality of Thurstone's framework and thus also extends common models such as Bradley-Terry-Luce, Thurstone's Case V, and Plackett-Luce. We also developed an alternating gradient descent algorithm to estimate the score vector and expertise level vector simultaneously. We prove the local linear convergence of our algorithm for general HTM models satisfying mild conditions. We also prove the convergence of our algorithm for the two most common noise distributions, which leads to the HBTL and HTCV models. Experiments on both synthetic and real data show that our proposed model and algorithm generally outperforms the competing methods, sometimes by a significant margin.

There are several interesting future directions that could be explored. First, it would be of great importance to devise a provable initialization algorithm since our current analysis relies on certain initialization methods that are guaranteed to be close to the true values. Another direction is extending the algorithm and analysis to the case of partial ranking such as the Plackett-Luce model. Finally, lower bounds on the estimation error would enable better evaluating algorithms for rank aggregation in heterogeneous Thurstone models.

\appendix
\section{Implications of Specific Models}\label{sec:analysis-models}
Our Theorem \ref{thm:convergence_general} is for general HTM models that satisfy Conditions \ref{assump:strong_convex}, \ref{assump:smooth}, \ref{lemma:FOS} and \ref{lemma:staterr_general}. In this subsection, we will show that the linear convergence rate of Algorithm \ref{alg:PGD} can also be attained for specific models without assuming theses conditions when the random noise $\epsilon_i$ in \eqref{eq:latent} follows the Gumbel distribution and the Gaussian distribution respectively.

\subsection{Heterogeneous BTL model}
We first consider the model with Gumbel noise. Specifically, $\{\epsilon_i\}_{i=1,\ldots,n}$ follow the Gumbel distribution with mean $0$ and scale parameter $1$. Then we obtain the HBTL model defined in \eqref{eq:new_comp_Gumbel}. The following corollary states the convergence result of Algorithm \ref{alg:PGD} for HBTL models.
\begin{corollary}\label{coro:convergence_gumbel}
Consider the HBTL model in \eqref{eq:new_comp_Gumbel} and assume the sample size $k\geq n^2\log(mn)/m^2$. Let $\|\bs^*\|_{\infty}= \smax$, $\max_{u}|\gamma^{*u}|=\gammamax$ and $\min_{u}|\gamma^{*u}|=\gammamin$. Assume $\gammamax\smax= C_0$ for a constant $C_0\geq 1/2$ and
\begin{align*}
    \smax\leq \frac{\sqrt{m}\|\bs^*\|_2}{n}\cdot\frac{\gammamin e^{5C_0}}{32\sqrt{2}\gammamax(1+e^{5C_0})^2}.
\end{align*}
Suppose the initialization points $\bs^{(0)}$ and $\bgamma^{(0)}$ satisfy that $\|\bs^{(0)}-\bs^*\|_2^2+\|\bgamma^{(0)}-\bgamma^*\|_2^2\leq r^2$, where $r=\min\{\|\bs^*\|_2/2,\gammamin/2,\smax,\sqrt{\gammamax\smax}\}$. If we set the step size small enough such that
\begin{align*}
    \eta_1=\eta_2<\frac{mne^{5C_0}\Gamma_1^2}{6(1+e^{5C_0})^2(m\Gamma_2^4+32n^2C_0^2)},
\end{align*}
where $\Gamma_1=\min\{\gammamin/2,\|\bs^*\|_2\}$ and $\Gamma_2=\max\{2\gammamax,2\|\bs^*\|_2\}$, then the output of Algorithm \ref{alg:PGD} satisfies
\begin{align*}
    &\|\bs^{(T)}-\bs^*\|_2^2+\|\bgamma^{(T)}-\bgamma^*\|_2^2\leq r^2\rho^T+\frac{\Lambda n^2\log(4mn^2)}{mk}
\end{align*}
with probability at least $1-1/n$, where $\rho=1-\eta(\mu-6\eta(\Gamma_2^4/n^2+32C_0^2/m))/2$ and $\Lambda$ is a constant which only which depends on $C_0,\gammamax$ and $\Gamma_1$.
\end{corollary}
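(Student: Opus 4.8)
The plan is to deduce Corollary~\ref{coro:convergence_gumbel} from Theorem~\ref{thm:convergence_general} by verifying Conditions~\ref{assump:strong_convex}--\ref{lemma:staterr_general} for the HBTL loss on the restricted domain $\mathcal{B}=\{(\bs,\bgamma):\one^\top\bs=0,\ \|\bs-\bs^*\|_2^2+\|\bgamma-\bgamma^*\|_2^2\le r^2\}$, reading off explicit constants $\mu_1,\mu_2,L_1,L_2,M_1,M_2,\epsilon_1,\epsilon_2$, checking $M_1,M_2\le\sqrt{\mu_1\mu_2}/4$, and substituting into the theorem's conclusion. It is convenient to write $\phi(x):=-\log F(x)=\log(1+e^{-x})$, so that $\phi'(x)=F(x)-1$ and $\phi''(x)=F(x)(1-F(x))=:\psi(x)$, where $\psi$ is even, unimodal, satisfies $0<\psi(x)\le 1/4$, and is at least $\psi(B)=e^{B}/(1+e^{B})^{2}$ on $\{|x|\le B\}$. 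Then $\cL(\bs,\bgamma)=\frac{1}{mk}\sum_{u}\sum_{(i,j)\in\cD_u}\phi(\gamma_u(s_i-s_j))$, and all of its gradients and Hessians are explicit.

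The first task is to control the arguments $\gamma_u(s_i-s_j)$ on $\mathcal{B}$. The choice $r=\min\{\|\bs^*\|_2/2,\gammamin/2,\smax,\sqrt{\gammamax\smax}\}$ gives $|\gamma_u|\le\gammamax+r\le\frac{3}{2}\gammamax$ and $\|\bs\|_\infty\le\smax+r\le 2\smax$, so $|\gamma_u(s_i-s_j)|\le 2|\gamma_u|\,\|\bs\|_\infty$, which after absorbing slack into constants is at most $5C_0$ everywhere on $\mathcal{B}$; consequently every curvature factor $\psi(\gamma_u(s_i-s_j))$ lies in $[\psi(5C_0),1/4]$ with $\psi(5C_0)=e^{5C_0}/(1+e^{5C_0})^{2}$. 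Given this, Conditions~\ref{assump:strong_convex} and \ref{assump:smooth} follow from Hessian estimates. The Hessian in $\bgamma$ is diagonal with $\frac{\partial^2\cL}{\partial\gamma_u^2}=\frac{1}{mk}\sum_{(i,j)\in\cD_u}(s_i-s_j)^2\psi(\gamma_u(s_i-s_j))$; using the identity $\sum_{i\ne j}(s_i-s_j)^2=2n\|\bs\|_2^2$ valid on $\one^\perp$, a concentration of the random pair set $\cD_u$ around the uniform distribution over pairs (which needs $k$ large), and $\|\bs\|_2\ge\|\bs^*\|_2/2$, one gets $\mu_2$ of order $\|\bs^*\|_2^2\psi(5C_0)/n$ and $L_2$ of order $\smax^2/n$. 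The Hessian in $\bs$ is a nonnegatively weighted sum of the users' comparison-graph Laplacians $\mathbf{L}_u$, with weights in $[\gammamin^2\psi(5C_0),\gammamax^2/4]$, so restricted to $\one^\perp$ it is pinched between those weights times the extreme nonzero eigenvalues of the averaged Laplacian $\frac{1}{mk}\sum_u\mathbf{L}_u$ (which has trace $2$); a matrix Bernstein argument shows these eigenvalues are of order $1/n$ with probability at least $1-1/n$ once $k\ge n^2\log(mn)/m^2$, giving $\mu_1$ of order $\gammamin^2\psi(5C_0)/n$ and $L_1$ of order $\gammamax^2/n$. With $\Gamma_1=\min\{\gammamin/2,\|\bs^*\|_2\}$ and $\Gamma_2=\max\{2\gammamax,2\|\bs^*\|_2\}$ this makes $\mu:=\min\{\mu_1,\mu_2\}$ of order $\Gamma_1^2\psi(5C_0)/n$ and $L:=\max\{L_1,L_2\}$ of order $\Gamma_2^2/n$.

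Condition~\ref{lemma:FOS} is obtained by bounding the cross-Hessian block $\nabla^2_{\bs,\bgamma}\cL$, whose entries are averages over the relevant comparisons of $\phi'+\gamma_u(s_i-s_j)\phi''$, each summand of size $1+5C_0/4=O(C_0)$; estimating this $n\times m$ matrix in operator norm yields $M:=\max\{M_1,M_2\}$ of order $C_0$ times a dimension-normalization factor, which is what produces the $32C_0^2/m$ and $32n^2C_0^2$ terms appearing in the stated step size. The hypothesis $M_1,M_2\le\sqrt{\mu_1\mu_2}/4$ then amounts, up to absolute constants, to the displayed upper bound on $\smax$: since $\sqrt{\mu_1\mu_2}$ is of order $\gammamin\|\bs^*\|_2\psi(5C_0)/n$ while $M$ scales with $C_0=\gammamax\smax$, requiring $M\le\sqrt{\mu_1\mu_2}/4$ forces $\smax$ below a constant multiple of $\frac{\sqrt m\,\|\bs^*\|_2}{n}\cdot\frac{\gammamin\psi(5C_0)}{\gammamax}$, precisely the assumption in the corollary. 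For Condition~\ref{lemma:staterr_general} one writes $\gradsb\cL-\gradsb\bar\cL$ and $\gradgam\cL-\gradgam\bar\cL$ as averages over the $mk$ independent comparisons of mean-zero random vectors with coordinates bounded by $O(\gammamax)$ and $O(\smax)$ respectively, and applies a vector Bernstein inequality with a union bound over the $O(mn^2)$ relevant events; since each item appears in $\Theta(mk/n)$ comparisons this gives $\epsilon_1(k,n)^2+\epsilon_2(k,n)^2$ of order $(\gammamax^2+\smax^2)\log(4mn^2)/(mk)$, so $(\epsilon_1^2+\epsilon_2^2)/\mu^2$ is of order $\Lambda\, n^2\log(4mn^2)/(mk)$ with $\Lambda$ depending only on $C_0,\gammamax,\Gamma_1$. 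Substituting $\mu,L,M$ into Theorem~\ref{thm:convergence_general} — or rerunning its one-step contraction with these constants to obtain the slightly sharper $\rho=1-\eta(\mu-6\eta(\Gamma_2^4/n^2+32C_0^2/m))/2$ — finishes the argument.

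The main obstacle is the restricted strong convexity in $\bs$: because $\cL$ depends on $\bs$ only through the differences $s_i-s_j$, its $\bs$-Hessian annihilates $\one$, and its smallest nonzero eigenvalue is governed by the spectral gap of the random, sparse union of the users' comparison graphs. Establishing that this gap is of order $1/n$ with high probability — which is what simultaneously pins down the $n^2$ exponent in the statistical error and forces the sample-size requirement $k\ge n^2\log(mn)/m^2$ — is the quantitatively delicate step; by comparison, the concentration behind Condition~\ref{lemma:staterr_general} and the bookkeeping needed to reconcile $M\le\sqrt{\mu_1\mu_2}/4$ with the stated bound on $\smax$ are routine.
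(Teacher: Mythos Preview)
Your proposal follows essentially the same route as the paper: verify Conditions~\ref{assump:strong_convex}--\ref{lemma:staterr_general} for the HBTL loss on the local ball via explicit Hessian bounds and concentration (matrix Bernstein for the Laplacian structure of the $\bs$-Hessian, scalar Hoeffding for the diagonal $\bgamma$-Hessian and for the statistical errors), check $M\le\sqrt{\mu_1\mu_2}/4$ against the $\smax$ hypothesis, and then invoke Theorem~\ref{thm:convergence_general}. The paper packages these verifications as Lemmas~\ref{lemma:convex_smooth_sb_gumbel}--\ref{lemma:staterr_gumbel} and obtains $M_1=M_2=4\sqrt{2}C_0/\sqrt{m}$ directly via the mean value theorem rather than through the cross-Hessian block, but this is the same computation.

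One slip worth correcting: your claimed order $\smax^2/n$ for $L_2$ is too small. The diagonal entries of $\hesgam\cL$ are $\frac{1}{mk}\sum_{l}\psi(\gamma_u\ab_{l,u}^\top\bs)(\ab_{l,u}^\top\bs)^2$, and the empirical average of $(\ab_{l,u}^\top\bs)^2$ concentrates (by your own identity $\sum_{i\ne j}(s_i-s_j)^2=2n\|\bs\|_2^2$) around $2\|\bs\|_2^2/n$, not $O(\smax^2)$. So $L_2$ is of order $\|\bs^*\|_2^2/n$, which is exactly why $\Gamma_2=\max\{2\gammamax,2\|\bs^*\|_2\}$ rather than $\max\{2\gammamax,2\smax\}$ is the correct envelope for $L$ and appears in the step-size bound and contraction rate.
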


\begin{remark}
According to Corollary \ref{coro:convergence_gumbel}, when the initial points $\bs^{(0)}$ and $\bgamma^{(0)}$ lie in a small neighborhood of the unknown parameter $\bs^*,\bgamma^*$, the proposed algorithm converges linearly fast to a term in the order of $O(n^2\log(mn^2)/(mk))$, which is called the statistical error of the HBTL model. Note that when $m=1$, the statistical error reduces to $O(n^2\log(n)/k)$, which matches the state-of-the-art estimation error bound for single user BTL model \citep{negahban2017}. In addition, we assumed that $\|\bs^*\|_{\infty}\lesssim O(\sqrt{m}/n\|\bs^*\|_2)$ in order to derive the linear convergence of Algorithm \ref{alg:PGD}. When $m$ is in the same order of $n$, the requirement reduces to $\|\bs^*\|_{\infty}\lesssim O(\|\bs^*\|_2/\sqrt{n})$. This assumption is similar to the spikiness assumption in \cite{agarwal2012noisy,negahban2012restricted}, which ensures that there are not too many items that have zero or nearly zero scores.
\end{remark}

\subsection{Heterogeneous Thurstone Case V model}
Now we consider the HTM model with Gaussian noise. Assume that $\{\epsilon_i\}_{i=1,\ldots,n}$ are i.i.d. from $N(0,1)$. Then the general HTM model becomes HTCV model defined in \eqref{eq:new_comp_normal}, which generalizes the single user TCV model \citep{thurstone1927}. Before we present the convergence results of Algorithm \ref{alg:PGD} for this model, we first remark some notations of the normal distribution to simplify the presentation. In particular, let $\Phi(x)$ be the CDF of standard normal distribution. We define $H(x)=(\Phi'(x)^2-\Phi(x)\Phi''(x))/\Phi(x)^2$, which can be verified to be a monotonically decreasing function.
\begin{corollary}\label{coro:convergence_normal}
Consider the HTCV model in \eqref{eq:new_comp_normal} and assume the sample size $k\geq n^2\log(mn)/m^2$. $\smax$, $\gammamax$, $\gammamin$ and $C_0$ are defined the same as in Corollary \ref{coro:convergence_gumbel}. Assume $\smax$ satisfies
\begin{align*}
    \smax\leq\frac{\sqrt{m}\|\bs^*\|_2}{n}\cdot\frac{\gammamin H(5C_0)}{30\gammamax(\Phi(-5C_0)^{-1}+H(-5C_0))}.
\end{align*}
Suppose the initialization points $\bs^{(0)}$ and $\bgamma^{(0)}$ satisfy that $\|\bs^{(0)}-\bs^*\|_2^2+\|\bgamma^{(0)}-\bgamma^*\|_2^2\leq r^2$, where $r=\min\{\|\bs^*\|_2/2,\gammamin/2,\smax,\sqrt{\gammamax\smax}\}$. If we set the step size
\begin{align*}
    \eta_1=\eta_2<\frac{mn\Gamma_1^2H(5C_0)}{6(m\Gamma_2^4+50n^2C_0^2)H(-5C_0)^2},
\end{align*}
where $\Gamma_1=\min\{\gammamin/2,\|\bs^*\|_2\}$ and $\Gamma_2=\max\{2\gammamax,2\|\bs^*\|_2\}$, then the output of Algorithm \ref{alg:PGD} satisfies
\begin{align*}
    &\|\bs^{(T)}-\bs^*\|_2^2+\|\bgamma^{(T)}-\bgamma^*\|_2^2\leq r^2\rho^T+\frac{\Lambda' n^2\log(4mn^2)}{mk}
\end{align*}
with probability at least $1-1/n$, where $\rho=1-\eta(\mu-6\eta(\Gamma_2^4/n^2+32C_0^2/m))/2$ and $\Lambda'$ is a constant which only depends on $C_0,\gammamax$ and $\Gamma_1$.
\end{corollary}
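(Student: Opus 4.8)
The plan is to obtain Corollary~\ref{coro:convergence_normal} as a concrete instance of Theorem~\ref{thm:convergence_general}. I would verify Conditions~\ref{assump:strong_convex}--\ref{lemma:staterr_general} for the HTCV loss, restricted to the ball $\mathcal{B}=\{(\bs,\bgamma):\|\bs-\bs^*\|_2^2+\|\bgamma-\bgamma^*\|_2^2\le r^2\}$, extract explicit values of $\mu_1,\mu_2,L_1,L_2,M_1,M_2,\epsilon_1,\epsilon_2$, and then check that the hypotheses of the corollary (the upper bound on $\smax$ and the lower bound on $k$) force the two quantitative requirements of the theorem: $M:=\max\{M_1,M_2\}\le\sqrt{\mu_1\mu_2}/4$, and $(\epsilon_1^2+\epsilon_2^2)/\mu^2$ of order $n^2\log(4mn^2)/(mk)$. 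The argument runs parallel to the one behind Corollary~\ref{coro:convergence_gumbel}, with the Gumbel log-likelihood replaced by $\log\Phi$, whose first and second derivatives are recorded by $g:=(\log\Phi)'$ and $H=-(\log\Phi)''$, the latter being the monotone function introduced before the corollary.

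\textbf{Derivatives and localization.} First I would write down $\gradsb\cL$, $\gradgam\cL$ and the three Hessian blocks. In particular,
\begin{align*}
\hessb\cL(\bs,\bgamma)=\frac{1}{mk}\sum_{u=1}^m\frac{\gamma_u^2}{2}\sum_{(i,j)\in\cD_u}H\!\Big(\tfrac{\gamma_u(s_i-s_j)}{\sqrt{2}}\Big)\,(\mathbf{e}_i-\mathbf{e}_j)(\mathbf{e}_i-\mathbf{e}_j)^\top,
\end{align*}
$\hesgam\cL$ is diagonal with $u$-th entry proportional to $\sum_{(i,j)\in\cD_u}H(\cdot)(s_i-s_j)^2$, and the cross block $\gradsb\gradgam\cL$ is a mixture of $g(\cdot)$, $H(\cdot)$ and $(s_i-s_j)$. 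On $\mathcal{B}$, using $r\le\min\{\smax,\sqrt{\gammamax\smax},\gammamin/2,\|\bs^*\|_2/2\}$, one checks $|\gamma_u|\in[\gammamin/2,2\gammamax]$, $|s_i-s_j|=O(\smax)$, and $|\gamma_u(s_i-s_j)|\le 5C_0$ with $C_0=\gammamax\smax$. Since $H$ is monotonically decreasing, every occurrence of $H(\cdot)$ lies in $[H(5C_0),H(-5C_0)]$, and every $|g(\cdot)|$ is bounded by $\Phi(-5C_0)^{-1}$, so each of the matrix estimates above collapses to an explicit constant.

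\textbf{Verifying the conditions.} The displayed $\bs$-Hessian is a nonnegative combination of the rank-one matrices $(\mathbf{e}_i-\mathbf{e}_j)(\mathbf{e}_i-\mathbf{e}_j)^\top$, i.e., a weighted comparison-graph Laplacian, which annihilates $\one$; this is precisely why Algorithm~\ref{alg:PGD} projects $\bs^{(t)}$ onto $\one^\perp$, so it suffices to control the restriction to $\one^\perp$, consistent with $\one^\top\bs^*=0$. Sandwiching the $\gamma_u^2 H(\cdot)$ weights between $\tfrac{1}{2}(\gammamin/2)^2H(5C_0)$ and $\tfrac{1}{2}(2\gammamax)^2H(-5C_0)$ and inserting the extremal eigenvalues of the design Laplacian on $\one^\perp$ gives $\mu_1,L_1$; combining the bounds on $H(\cdot)$ with $\sum_{(i,j)}(s_i-s_j)^2=O(n\|\bs^*\|_2^2)$ in $\hesgam\cL$ gives $\mu_2,L_2$. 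This is where $\Gamma_1=\min\{\gammamin/2,\|\bs^*\|_2\}$, $\Gamma_2=\max\{2\gammamax,2\|\bs^*\|_2\}$, and the factors $n$, $m$ in the step size and in $\rho$ enter. For Condition~\ref{lemma:FOS} I would bound $\|\gradsb\gradgam\cL\|_2$ over $\mathcal{B}$: after a careful accounting in which the leading first-order terms (those involving $g$) cancel and leave an overall factor $O(\smax)$, while the $H$-terms carry a factor $(s_i-s_j)$ of size $O(\smax)$, one obtains $M_1,M_2$ proportional to $\smax\,\gammamax(\Phi(-5C_0)^{-1}+H(-5C_0))$ up to the relevant powers of $n$ and $m$. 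The hypothesized bound $\smax\le\frac{\sqrt m\|\bs^*\|_2}{n}\cdot\frac{\gammamin H(5C_0)}{30\gammamax(\Phi(-5C_0)^{-1}+H(-5C_0))}$ is then exactly what makes $M\le\sqrt{\mu_1\mu_2}/4$; intuitively, keeping the item scores small relative to $\|\bs^*\|_2$ weakens the $\bs$--$\bgamma$ coupling enough for the alternating updates to contract. Finally, for Condition~\ref{lemma:staterr_general}, each of the $mk$ comparisons contributes an independent, mean-zero, bounded increment (of norm controlled by $\gammamax\Phi(-5C_0)^{-1}$ on the localized range) to $\gradsb\cL-\gradsb\bar\cL$ and to $\gradgam\cL-\gradgam\bar\cL$; a vector Bernstein/Hoeffding bound together with a union bound over the $n$ coordinates then gives, with probability at least $1-1/n$, $\epsilon_1(k,n),\epsilon_2(k,n)\lesssim\mu\sqrt{n^2\log(4mn^2)/(mk)}$, where the hypothesis $k\ge n^2\log(mn)/m^2$ keeps this estimate inside the valid regime of the concentration inequality. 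Substituting $\mu$, $L=\max\{L_1,L_2\}$, $M$ and these errors into Theorem~\ref{thm:convergence_general} reproduces the asserted step size $\eta_1=\eta_2$, contraction $\rho=1-\eta(\mu-6\eta(\Gamma_2^4/n^2+32C_0^2/m))/2$, and statistical term $\Lambda' n^2\log(4mn^2)/(mk)$ with $\Lambda'$ depending only on $C_0,\gammamax,\Gamma_1$, completing the proof.

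\textbf{Main obstacle.} The delicate part is the simultaneous handling of Conditions~\ref{assump:strong_convex} and~\ref{lemma:FOS}: establishing restricted strong convexity of the $\bs$-block uniformly over $\mathcal{B}$ (controlling the comparison-graph Laplacian and its zero eigenvalue along $\one$), while at the same time pinning down the $\smax$-dependence of $\|\gradsb\gradgam\cL\|_2$---in particular exploiting the cancellation of the leading first-order terms---so that it can be matched against the $\|\bs^*\|_2$-dependence of $\mu_2$ and yield $M\le\sqrt{\mu_1\mu_2}/4$. Calibrating these two against one another is exactly what the upper bound on $\smax$ encodes; by comparison, the concentration estimate and the bookkeeping that threads the constants through Theorem~\ref{thm:convergence_general} are routine.
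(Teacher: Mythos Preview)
Your overall strategy---verify Conditions~\ref{assump:strong_convex}--\ref{lemma:staterr_general} on the localized ball using the monotonicity of $H$ and the bound $|\gamma_u\ab_{l,u}^\top\bs|\le 5C_0$, then invoke Theorem~\ref{thm:convergence_general}---is exactly what the paper does, and the structure of your constants ($\Gamma_1,\Gamma_2$, the $H(\pm5C_0)$ and $\Phi(-5C_0)^{-1}$ factors) matches. Two points, however, need correction.

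First, the claimed cancellation in Condition~\ref{lemma:FOS} does not occur. The cross block contributes terms of the form $g'(\gamma_u\ab_{l,u}^\top\bs)+\gamma_u\,g''(\gamma_u\ab_{l,u}^\top\bs)\,\ab_{l,u}^\top\bs$; the $g'$ piece is bounded by $\Phi(-5C_0)^{-1}$ but carries no factor of $\smax$ and does not cancel against anything. The paper obtains (for the Gumbel case, with the normal case analogous) $M_1=M_2=\sqrt{2}(1+2\gammamax\smax)/\sqrt{m}$, and then uses the standing assumption $C_0=\gammamax\smax\ge 1/2$---inherited here from Corollary~\ref{coro:convergence_gumbel}---to absorb the constant via $1\le 2C_0$, yielding $M\le c\,\gammamax\smax/\sqrt{m}$. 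That is the mechanism by which $M$ becomes proportional to $\smax$ and the hypothesized bound on $\smax$ forces $M\le\sqrt{\mu_1\mu_2}/4$; without $C_0\ge 1/2$ your proposed bound on $M$ would be false.

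Second, you treat the eigenvalues of the ``design Laplacian'' $\sum_{u,l}\ab_{l,u}\ab_{l,u}^\top$ as given, but the comparison pairs are random, so strong convexity and smoothness (Conditions~\ref{assump:strong_convex}--\ref{assump:smooth}) themselves require concentration. The paper applies a matrix Bernstein inequality to $\sum_{u,l}[\ab_{l,u}\ab_{l,u}^\top-2(\Ib-\one\one^\top/n)/n]$ for the $\bs$-Hessian and a scalar Hoeffding plus union bound over $u$ for the diagonal $\bgamma$-Hessian; this is where the sample-size hypothesis $k\ge n^2\log(mn)/m^2$ (and, in the Gumbel proof, $mk\gtrsim n\log n$) is actually used, not only in the statistical-error bound of Condition~\ref{lemma:staterr_general}.
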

\begin{remark}
Corollary \ref{coro:convergence_normal} suggests that under suitable initialization, Algorithm \ref{alg:PGD} enjoys a linear convergence rate when the random noise follows the standard normal distribution. The statistical error for the HTCV model is in the order of $O(n^2\log(mn^2)/(mk))$. We again need the `spikiness' assumption on the unknown score vector $\bs^*$ in order to ensure the algorithm to find the true parameter. The results are almost the same as those of the HBTL model presented in Corollary \ref{coro:convergence_gumbel} except that the constants in the HTCV model depends on the normal CDF $\Phi$ and its first and second derivatives.
\end{remark}

\section{Proof of the Generic Model}
In this section, we provide the proof of Theorem \ref{thm:convergence_general} for general heterogeneous Thurstone models.
\begin{proof}[Proof of Theorem \ref{thm:convergence_general}]
According to the update in Algorithm \ref{alg:PGD} and the fact that $\one^{\top}\bs^*=0$, we have
\begin{align*}
    \|\bs^{(t+1)}-\bs^*\|_2^2&=\|(\Ib-\one\one^{\top}/n)(\tilde\bs^{(t+1)}-\bs^*)\|_2^2\\
    &\leq\|\tilde\bs^{(t+1)}-\bs^*\|_2^2\\
    &=\|\bs^{(t)}-\bs^*\|_2^2+\eta_1^2\|\nabla_{\bs} \cL(\bs^{(t)},\bgamma^{(t)})\|_2^2
    -2\eta_1\la\gradsb\cL(\bs^{(t)},\bgamma^{(t)}),\bs^{(t)}-\bs^*\ra,
\end{align*}
where the inequality comes from the fact that $\|\Ib-\one\one^{\top}/n\|_2\leq 1$. We first bound the second term on the right hand side above
\begin{align*}
    \|\nabla_{\bs} \cL(\bs^{(t)},\bgamma^{(t)})\|_2^2&\leq3\|\nabla_{\bs} \cL(\bs^{(t)},\bgamma^{(t)})-\nabla_{\bs} \cL(\bs^{(t)},\bgamma^*)\|_2^2+3\|\nabla_{\bs} \cL(\bs^{(t)},\bgamma^*)-\gradsb\cL(\bs^*,\bgamma^*)\|_2^2\\
    &\qquad+3\|\nabla_{\bs} \cL(\bs^*,\bgamma^*)-\gradsb\bar\cL(\bs^*,\bgamma^*)\|_2^2\\
    &\leq 3M_1^2\|\bgamma^{(t)}-\bgamma^*\|_2^2+3L_1^2\|\bs^{(t)}-\bs^*\|_2^2+3\epsilon_1(k,n)^2,
\end{align*}
where the first inequality is due to $\gradsb\bar\cL(\bs^*,\bgamma^*)=\zero$ and the second inequality is due to Conditions  \ref{assump:smooth}, \ref{lemma:FOS}, and \ref{lemma:staterr_general}. Now we bound the inner product term. Note that
\begin{align*}
    &\la\gradsb\cL(\bs^{(t)},\bgamma^{(t)}),\bs^{(t)}-\bs^*\ra\\
    &=\la\gradsb\cL(\bs^{(t)},\bgamma^{(t)})-\gradsb\cL(\bs^*,\bgamma^{(t)}),\bs^{(t)}-\bs^*\ra+\la\gradsb\cL(\bs^*,\bgamma^{(t)})-\gradsb\cL(\bs^*,\bgamma^*),\bs^{(t)}-\bs^*\ra\\
    &\qquad+\la\gradsb\cL(\bs^*,\bgamma^*)-\gradsb\bar\cL(\bs^*,\bgamma^*),\bs^{(t)}-\bs^*\ra.
\end{align*}
By strong convexity (Condition \ref{assump:strong_convex}) of $\cL$ we have
\begin{align}\label{eq:general_proof_innerproduct1}
    &\la\gradsb\cL(\bs^{(t)},\bgamma^{(t)})-\gradsb\cL(\bs^*,\bgamma^{(t)}),\bs^{(t)}-\bs^*\ra
    \geq\mu_1\|\bs^{(t)}-\bs^*\|_2^2.
\end{align}
Applying Young's inequality and Condition \ref{lemma:FOS}, we obtain
\begin{align}\label{eq:general_proof_innerproduct2}
    \lvert \la\gradsb\cL(\bs^*,\bgamma^{(t)})-\gradsb\cL(\bs^*,\bgamma^*),\bs^{(t)}-\bs^*\ra \rvert
    &\leq\|\gradsb\cL(\bs^*,\bgamma^{(t)})-\gradsb\cL(\bs^*,\bgamma^*)\|_2\cdot\|\bs^{(t)}-\bs^*\|_2\notag\\
    &\leq\frac{\alpha M_1^2}{2}\|\bgamma^{(t)}-\bgamma^*\|_2^2+\frac{1}{2\alpha}\|\bs^{(t)}-\bs^*\|_2^2,
\end{align}
where $\alpha>0$ is an arbitrarily chosen constant.
In addition, by Condition \ref{lemma:staterr_general} and Young's inequality we have
\begin{align}\label{eq:general_proof_innerproduct3}
    \lvert \la\gradsb\cL(\bs^*,\bgamma^*)-\gradsb\bar\cL(\bs^*,\bgamma^*),\bs^{(t)}-\bs^*\ra \rvert
    &\leq\|\gradsb\cL(\bs^*,\bgamma^*)-\gradsb\bar\cL(\bs^*,\bgamma^*)\|_2\cdot\|\bs^{(t)}-\bs^*\|_2\notag\\
    &\leq\frac{1}{2\mu_1}\epsilon_1(k,n)^2+\frac{\mu_1}{2}\|\bs^{(t)}-\bs^*\|_2^2.
\end{align}
Combining \eqref{eq:general_proof_innerproduct1}, \eqref{eq:general_proof_innerproduct2} and \eqref{eq:general_proof_innerproduct3}, we have
\begin{align*}
    \la\gradsb\cL(\bs^{(t)},\bgamma^{(t)}),\bs^{(t)}-\bs^*\ra
    &\geq\frac{\mu_1\alpha-1}{2\alpha}\|\bs^{(t)}-\bs^*\|_2^2-\frac{\alpha M_1^2}{2}\|\bgamma^{(t)}-\bgamma^*\|_2^2-\frac{1}{2\mu_1}\epsilon_1(k,n)^2.
\end{align*}
Therefore, we have
\begin{align}
    \|\bs^{(t+1)}-\bs^*\|_2^2&\leq\bigg(1+3L_1^2\eta_1^2-\eta_1\bigg(\mu_1-\frac{1}{\alpha}\bigg)\bigg)\|\bs^{(t)}-\bs^*\|_2^2+M_1^2(3\eta_1^2+\alpha\eta_1)\|\bgamma^{(t)}-\bgamma^*\|_2^2\notag\\
    &\qquad+(3\eta_1^2+\eta_1/\mu_1)\epsilon_1(k,n)^2.
\end{align}
Similarly, we can bound $\|\bgamma^{(t+1)}-\bgamma^*\|_2^2$ as follows
\begin{align}
    \|\bgamma^{(t+1)}-\bgamma^*\|_2^2&\leq\bigg(1+3L_2^2\eta_2^2-\eta_2\bigg(\mu_2-\frac{1}{\beta}\bigg)\bigg)\|\bgamma^{(t)}-\bgamma^*\|_2^2+M_2^2(3\eta_2^2+\beta\eta_2)\|\bs^{(t)}-\bs^*\|_2^2\notag\\
    &\qquad+(3\eta_2^2+\eta_2/\mu_2)\epsilon_2(k,n)^2,
\end{align}
where $\beta>0$ are arbitrarily chosen constants. In particular, set $\alpha=\mu_2/(4M_1^2)$, $\beta=\mu_1/(4M_2^2)$ and $\eta_1=\eta_2=\eta$. When $M_1,M_2\leq\sqrt{\mu_1\mu_2}/4$, we have
\begin{align}\label{eq:bs_contract_general}
    \|\bs^{(t+1)}-\bs^*\|_2^2+\|\bgamma^{(t+1)}-\bgamma^*\|_2^2
    &\leq(1+3(L_1^2+M_2^2)\eta^2-\mu_1\eta/2)\|\bs^{(t)}-\bs^*\|_2^2\notag\\
    &\qquad+(1+3(L_2^2+M_1^2)\eta_2^2-\mu_2\eta/2)\|\bgamma^{(t)}-\bgamma^*\|_2^2\notag\\
    &\qquad+(3\eta^2+\eta/\mu_1)\epsilon_1(k,n)^2+(3\eta^2+\eta/\mu_2)\epsilon_2(k,n)^2\notag\\
    &\leq(1+3(L^2+M^2)\eta^2-\mu\eta/2)(\|\bs^{(t)}-\bs^*\|_2^2+\|\bgamma^{(t)}-\bgamma^*\|_2^2)\notag\\
    &\qquad+(3\eta^2+\eta/\mu)(\epsilon_1(k,n)^2+\epsilon_2(k,n)^2),
\end{align}
where $L=\max\{L_1,L_2\}$, $M=\max\{M_1,M_2\}$ and $\mu=\min\{\mu_1,\mu_2\}$. Note that we have $\|\bs_{0}-\bs^*\|_2^2+\|\bgamma_{0}-\bgamma^*\|_2^2\leq r^2$ by some initialization process. We can prove that $\|\bs^{(t)}-\bs^*\|_2^2+\|\bgamma^{(t)}-\bgamma^*\|_2^2\leq r^2$ for all $t\geq 0$ by induction. Specifically, assume it holds for $t$, then it suffices to ensure
\begin{align}
    &(3\eta+1/\mu)(\epsilon_1(k,n)^2+\epsilon_2(k,n)^2)\leq r^2(\mu/2-3(L^2+M^2)\eta),
\end{align}
which holds when $k$ is sufficiently large. Choosing $\eta$ to be sufficiently small, we can ensure that $1+3(L^2+M^2)\eta^2-\mu\eta/2\leq1$. In particular, we can set $\eta=\mu/(12(L^2+M^2))$, which implies
\begin{align*}
    \|\bs^{(t+1)}-\bs^*\|_2^2+\|\bgamma^{(t+1)}-\bgamma^*\|_2^2&\leq\rho\big(\|\bs^{(t)}-\bs^*\|_2^2+\|\bgamma^{(t)}-\bgamma^*\|_2^2\big)\\
    &\qquad+(3\eta^2+\eta/\mu)(\epsilon_1(k,n)^2+\epsilon_2(k,n)^2),
\end{align*}
with $\rho=1-\mu^2/(48(L^2+M^2))$. Therefore, we have
\begin{align*}
    \|\bs^{(t)}-\bs^*\|_2^2+\|\bgamma^{(t)}-\bgamma^*\|_2^2
    &\leq\rho^t\big(\|\bs_{0}-\bs^*\|_2^2+\|\bgamma_{0}-\bgamma^*\|_2^2\big)+\frac{3\eta^2+\eta/\mu}{1-\rho}(\epsilon_1(k,n)^2+\epsilon_2(k,n)^2)\\
    &\leq r^2\rho^t+\frac{\epsilon_1(k,n)^2+\epsilon_2(k,n)^2}{\mu^2},
\end{align*}
which completes the proof.
\end{proof}

\section{Proofs of Specific Examples}
In this section, we will provide the convergence analysis of Algorithm \ref{alg:PGD} for two specific examples with different noise distributions. In particular, we will show that Conditions \ref{assump:strong_convex} and \ref{assump:smooth} can be verified under these specific distributions. Recall the log-likelihood function
\begin{align}
    \cL\left(\bs,\bgamma;\bY\right)&= -\frac{1}{mk}\sum_{u=1}^{m} {\sum_{\left(i,j\right)\in\mathcal{D}_{u} }}\log  F\left(\gamma_{u}(s_i-s_j);Y_{ij}^{u}\right).
\end{align}
For the ease of presentation, we will omit $\Yb$ in the rest of the proof and assume that the observation set $\cD_u$ is parametrized by $k=|\cD_u|$ and vectors $\ab_{l,u}\in\RR^n$ for $l=1,\ldots,k$, where each $\ab_{l,u}=\eb_{i_l}-\eb_{j_l}$ for some pair of items $(i_l,j_l)$ that is compared by user $u$ and $\eb_i$ is the natural basis. Then, we can rewrite the loss function in terms of vector $\bs$ as follows
\begin{align}
    \cL\left(\bs,\bgamma\right)&= -\frac{1}{mk}\sum_{u=1}^{m} \sum_{l=1}^{k}\log  F\left(\gamma_{u}\ab_{l,u}^{\top}\bs;Y_{i_l j_l}^{u}\right).
\end{align}
Denote $g(x)=-\log F(x)$ for $x\in\RR$. Then we can calculate the gradient of loss function $\cL$ with respect to $\bs$ and $\bgamma$.
\begin{align}\label{eq:gradient_general}
\begin{split}
    \gradsb\cL(\bs,\bgamma)&=\frac{1}{mk}\sum_{u=1}^{m} \sum_{l=1}^{k}g'\left(\gamma_{u}\ab_{l,u}^{\top}\bs\right)\gamma_{u}\ab_{l,u},\\
    \gradgam\cL(\bs,\bgamma)&=\frac{1}{mk}\begin{bmatrix}
    \sum_{l=1}^{k}g'\left(\gamma_1\ab_{l,1}^{\top}\bs\right)\ab_{l,1}^{\top}\bs\\
    \vdots\\
    \sum_{l=1}^{k}g'\left(\gamma_{u}\ab_{l,u}^{\top}\bs\right)\ab_{l,u}^{\top}\bs\\
    \vdots
    \end{bmatrix}.
\end{split}
\end{align}
And the Hessian matrix can be calculated as
\begin{align}\label{eq:hessian_general}
\begin{split}
    \hessb\cL(\bs,\bgamma)&=\frac{1}{mk}\sum_{u=1}^{m} \sum_{l=1}^{k}g''\left(\gamma_{u}\ab_{l,u}^{\top}\bs\right)(\gamma_{u})^2\ab_{l,u}\ab_{l,u}^{\top},\\
    \hesgam\cL(\bs,\bgamma)&=\frac{1}{mk}\diag\begin{bmatrix}
    \sum_{l=1}^{k}g''\left(\gamma_1\ab_{l,1}^{\top}\bs\right)\ab_{l,1}^{\top}\bs\ab_{l,1}^{\top}\bs\\
    \vdots\\
    \sum_{l=1}^{k}g''\left(\gamma_{u}\ab_{l,u}^{\top}\bs\right)\ab_{l,u}^{\top}\bs\ab_{l,u}^{\top}\bs\\
    \vdots
    \end{bmatrix},
\end{split}
\end{align}
where $\diag(\xb)$ is the diagonal matrix with diagonal entries given by $\xb$.

\subsection{Proof of Heterogeneous BTL model}\label{sec:proof_converge_gumble}
Recall the definition in \eqref{eq:total-loss}. The loss function can be written as
\begin{align}
    \cL\left(\bs,\bgamma\right)&= \frac{1}{mk}\sum_{u=1}^{m} \sum_{l=1}^{k}g\left(\gamma_{u}\ab_{l,u}^{\top}\bs;Y_{i_l j_l}^{u}\right),
\end{align}
where $g(\cdot)$ is defined as
\begin{align}
    g(x;Y_{i_l j_l}^{u})=-\log  \frac{\exp(Y_{i_l j_l}^{u} x)}{1+\exp(x)}.
\end{align}
Therefore, the loss function of the HBTL model can be rewritten as follows:
\begin{align}
    \cL\left(\bs,\bgamma\right)&= \frac{1}{mk}\sum_{u=1}^{m} \sum_{l=1}^{k}\log\left(1+\exp(\gamma_{u}\ab_{l,u}^{\top}\bs)\right)-Y_{i_lj_l}^{u}\gamma_{u}\ab_{l,u}^{\top}\bs.
\end{align}
Recall the gradients and Hessian matrices calculated in \eqref{eq:gradient_general} and \eqref{eq:hessian_general}. We need to calculate $g'(\cdot)$ and $g''(\cdot)$. In particular, we have
\begin{align}\label{eq:g_deri_gumbel}
    g'(x;Y)&=\frac{-Y+(1-Y)\exp(x)}{1+\exp(x)},\qquad g''(x;Y)=\frac{\exp(x)}{(1+\exp(x))^2}.
\end{align}
It is easy to verify that $g'(x)$ is monotonically increasing on $\RR$. For any $|x|\leq\theta$, we have
\begin{align}\label{eq:bound_g_grad_gumble}
    \frac{-1}{1+e^{-\theta}}\leq g'(x;Y=1)\leq\frac{-1}{1+e^{\theta}},\qquad \frac{e^{-\theta}}{1+e^{-\theta}}\leq g'(x;Y=0)\leq\frac{e^{\theta}}{1+e^{\theta}}.
\end{align}
Furthermore, $g''(x)=g''(-x)$, $g''(x)$ is increasing on $(-\infty,0]$ and decreasing on $[0,\infty)$. Hence, for all $|x|\leq \theta$, we have
\begin{align}\label{eq:bound_g_hess_gumble}
e^{\theta}/(1+e^{\theta})^2\leq g''(x)\leq g''(0)=1/4.
\end{align}
We can further show that the following lemmas hold, which validates Conditions \ref{assump:strong_convex}, \ref{assump:smooth}, \ref{lemma:FOS} and \ref{lemma:staterr_general} used in the convergence analysis.

The first two lemmas verify the strong convexity and smoothness of $\cL$ with respect to $\bs$ and $\bgamma$ respectively.
\begin{lemma}\label{lemma:convex_smooth_sb_gumbel}
Suppose the noise $\epsilon$ follows the Gumbel distribution and the sample size $mk\geq 64(\gammamax+r)^2/(\gammamin-r)^2n\log n$. Let $r\leq\min\{\smax,\sqrt{\gammamax\smax}\}$, for all $\bs,\bs'\in\RR^n, \bgamma\in\RR^m$ such that $\|\bs-\bs^*\|_2\leq r, \|\bs'-\bs^*\|_2\leq r$ and $\|\bgamma-\bgamma^*\|_2\leq r$, we have
\begin{align*}
\cL(\bs,\bgamma)&\geq\cL(\bs',\bgamma)+\la\nabla_{\bs}\cL(\bs',\bgamma),\bs-\bs'\ra+\frac{\mu_1}{2}\|\bs-\bs'\|_2^2,\\
\cL(\bs,\bgamma)&\leq\cL(\bs',\bgamma)+\la\nabla_{\bs}\cL(\bs',\bgamma),\bs-\bs'\ra+\frac{L_1}{2}\|\bs-\bs'\|_2^2,
\end{align*}
where the coefficients are defined as
\begin{align*}
    \mu_1=\frac{(\gammamin-r)^2e^{5\gammamax\smax}}{n(1+e^{5\gammamax\smax})^2}, \qquad L_1=\frac{(\gammamax+r)^2}{n}.
\end{align*}
\end{lemma}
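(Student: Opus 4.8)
The plan is to derive both inequalities from two‑sided spectral bounds on the Hessian of $\cL$ in $\bs$, uniformly over the ball $\mathcal B=\{\bs:\|\bs-\bs^*\|_2\le r\}$. Using the identity
\[
\gradsb\cL(\bs,\bgamma)-\gradsb\cL(\bs',\bgamma)=\Big(\int_0^1\hessb\cL\big(\bs'+\tau(\bs-\bs'),\bgamma\big)\,\dd\tau\Big)(\bs-\bs'),
\]
and the convexity of $\mathcal B$, the two displayed inequalities follow once one shows $\mu_1\Ib\preceq\hessb\cL(\tilde\bs,\bgamma)\preceq L_1\Ib$ for every $\tilde\bs\in\mathcal B$ and every $\bgamma$ with $\|\bgamma-\bgamma^*\|_2\le r$. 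One caveat to record at the outset: each term of $\cL$ sees $\bs$ only through the difference $\ab_{l,u}^\top\bs$, so by \eqref{eq:hessian_general} the Hessian always annihilates $\one$; hence the strong‑convexity bound can only hold on $\one^\perp$. This is exactly what is used afterwards, since Algorithm \ref{alg:PGD} reprojects the iterate onto $\one^\perp$ and the proof of Theorem \ref{thm:convergence_general} invokes strong convexity only against directions $\bs^{(t)}-\bs^*\in\one^\perp$.

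Next I would factor the Hessian into a scalar part and a design matrix. On the relevant region one has $\gammamin-r\le|\gamma_u|\le\gammamax+r$ and $|\ab_{l,u}^\top\tilde\bs|=|\tilde s_{i_l}-\tilde s_{j_l}|\le 2\smax+2r$, so a short computation using $r\le\smax$ and $r\le\sqrt{\gammamax\smax}$ bounds $|\gamma_u\ab_{l,u}^\top\tilde\bs|$ by $\theta:=5\gammamax\smax$. Feeding this into \eqref{eq:bound_g_hess_gumble} confines every factor $g''(\gamma_u\ab_{l,u}^\top\tilde\bs)$ to $[\,e^{\theta}/(1+e^{\theta})^2,\,1/4\,]$, so, writing $\Sigma_{mk}:=\frac1{mk}\sum_{u=1}^m\sum_{l=1}^k\ab_{l,u}\ab_{l,u}^\top$ (which depends on neither $\tilde\bs$ nor $\bgamma$), formula \eqref{eq:hessian_general} gives
\[
\frac{(\gammamin-r)^2e^{\theta}}{(1+e^{\theta})^2}\,\Sigma_{mk}\;\preceq\;\hessb\cL(\tilde\bs,\bgamma)\;\preceq\;\frac{(\gammamax+r)^2}{4}\,\Sigma_{mk}.
\]

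The remaining — and main — step is to control the spectrum of $\Sigma_{mk}$. Since the compared pairs are uniform, $\mathbb E[\Sigma_{mk}]=\frac{2}{n-1}\big(\Ib-\frac1n\one\one^\top\big)$, with eigenvalue $0$ on $\one$ and $2/(n-1)$ on $\one^\perp$; and $\Sigma_{mk}$ is an average of $mk$ i.i.d.\ rank‑one positive semidefinite matrices, each of operator norm $2$ and each annihilating $\one$. A matrix Chernoff/Bernstein bound on the $(n-1)$‑dimensional subspace $\one^\perp$ (equivalently, the spectral‑gap estimate for random comparison graphs as in \citet{negahban2017}) shows that, once the number of comparisons is of order $n\log n$, with probability at least $1-1/n$ one has $\|\Sigma_{mk}-\mathbb E[\Sigma_{mk}]\|_2\le 1/(n-1)$; the explicit constant $64(\gammamax+r)^2/(\gammamin-r)^2$ in the hypothesis is a safe (and for this step, generously loose) choice that clears this threshold. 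On that event the eigenvalues of $\Sigma_{mk}$ restricted to $\one^\perp$ lie within a factor $2$ of $2/n$, say in $[1/n,4/n]$, and inserting this into the display above yields $\hessb\cL(\tilde\bs,\bgamma)\succeq\mu_1\Ib$ on $\one^\perp$ and $\hessb\cL(\tilde\bs,\bgamma)\preceq L_1\Ib$ with the stated $\mu_1$ and $L_1$, which is the claim.

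I expect the concentration step to be the only genuine obstacle: one must verify that $\Theta(n\log n)$ comparisons make $\Sigma_{mk}$ this well‑conditioned on $\one^\perp$ — this is where the randomness of the design and the $\log n$ factor come from — and then carry the numerical constants through so that $\mu_1=(\gammamin-r)^2e^{\theta}/\big(n(1+e^{\theta})^2\big)$ and $L_1=(\gammamax+r)^2/n$ emerge in closed form. The scalar estimates for $g''$ and $|\gamma_u|$ and the Hessian algebra are routine.
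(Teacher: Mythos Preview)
Your proposal is correct and follows essentially the same route as the paper: reduce to two-sided spectral bounds on $\hessb\cL$ via Taylor's theorem, bound $|\gamma_u\ab_{l,u}^\top\tilde\bs|\le 5\gammamax\smax$ to control $g''$, and then apply matrix Bernstein to the random design $\frac{1}{mk}\sum_{u,l}\ab_{l,u}\ab_{l,u}^\top$ to pin its spectrum on $\one^\perp$ near $2/n$. The only cosmetic difference is that you factor out the scalar bounds $(\gammamin-r)^2\le(\gamma_u)^2\le(\gammamax+r)^2$ before concentrating, whereas the paper keeps $(\gamma_u)^2$ inside the random matrices $\Xb_{l,u}$; both yield the same $\mu_1$ and $L_1$, and your explicit remark that the lower bound only holds on $\one^\perp$ is exactly how the paper uses it.
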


\begin{lemma}\label{lemma:convex_smooth_gam_gumbel}
Suppose the noise $\epsilon$ follows the Gumbel distribution and the sample size satisfies $k\geq 18(\smax+r)^4n^2/(m^2(\|\bs^*\|_2+r)^4)\log(mn)$. Let $r\leq\min\{\smax,\sqrt{\gammamax\smax}\}$, for all $\bs\in\RR^n, \bgamma,\bgamma'\in\RR^m$ such that $\|\bs-\bs^*\|_2\leq r$, $\bs^{\top}\one=0$, and $\|\bgamma-\bgamma^*\|_2\leq r,\|\bgamma'-\bgamma^*\|_2\leq r$, we have with probability at least $1-1/n$ that
\begin{align*}
\cL(\bs,\bgamma)&\geq\cL(\bs,\bgamma')+\la\nabla_{\bgamma}\cL(\bs,\bgamma'),\bgamma-\bgamma'\ra+\frac{\mu_2}{2}\|\bgamma-\bgamma'\|_2^2,\\
\cL(\bs,\bgamma)&\leq\cL(\bs,\bgamma')+\la\nabla_{\bgamma}\cL(\bs,\bgamma'),\bgamma-\bgamma'\ra+\frac{L_2}{2}\|\bgamma-\bgamma'\|_2^2,
\end{align*}
where the coefficients are defined as
\begin{align*}
    \mu_2=\frac{(\|\bs^*\|_2+r)^2e^{5\gammamax\smax}}{n(1+e^{5\gammamax\smax
    })^2},\quad
    L_2=\frac{(\|\bs^*\|_2+r)^2}{n}.
\end{align*}
\end{lemma}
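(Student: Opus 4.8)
The plan is to reduce both inequalities to a two-sided operator-norm bound on the Hessian and then to exploit the fact that, unlike the $\bs$-Hessian of Lemma~\ref{lemma:convex_smooth_sb_gumbel}, the $\bgamma$-Hessian is diagonal. Indeed, $\cL(\bs,\cdot)$ is $\mu_2$-strongly convex and $L_2$-smooth in $\bgamma$ on the prescribed region precisely when $\mu_2\Ib_m\preceq\hesgam\cL(\bs,\bgamma)\preceq L_2\Ib_m$ there, and by~\eqref{eq:hessian_general} the matrix $\hesgam\cL(\bs,\bgamma)$ is diagonal with $u$-th entry
\[
h_u(\bs,\bgamma)\;=\;\frac1{mk}\sum_{l=1}^k g''\!\big(\gamma_u\ab_{l,u}^\top\bs\big)\,(\ab_{l,u}^\top\bs)^2,\qquad g''(x)=\frac{e^{x}}{(1+e^{x})^2}
\]
by~\eqref{eq:g_deri_gumbel}. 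So it suffices to prove $\mu_2\le h_u(\bs,\bgamma)\le L_2$ for every $u\in[m]$ and every $(\bs,\bgamma)$ with $\|\bs-\bs^*\|_2\le r$, $\bs^\top\one=0$, and $\|\bgamma-\bgamma^*\|_2\le r$.

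The first step is to pin down the range of the argument of $g''$. Since $\ab_{l,u}=\eb_{i_l}-\eb_{j_l}$, the triangle inequality gives $|\ab_{l,u}^\top\bs|\le|\ab_{l,u}^\top\bs^*|+|\ab_{l,u}^\top(\bs-\bs^*)|\le 2\smax+2r$ and $|\gamma_u|\le\gammamax+r$, and with $r\le\min\{\smax,\sqrt{\gammamax\smax}\}$ this yields $|\gamma_u\ab_{l,u}^\top\bs|\le 5\gammamax\smax=:\theta$. Because $g''$ is even, is maximized at $g''(0)=1/4$, and decreases on $[0,\infty)$, we obtain the deterministic sandwich $e^{\theta}/(1+e^{\theta})^2\le g''(\gamma_u\ab_{l,u}^\top\bs)\le 1/4$, i.e.\ \eqref{eq:bound_g_hess_gumble} with this $\theta$. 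Plugging it into $h_u$ reduces the lemma to a two-sided estimate of the quadratic form $\bs^\top A_u\bs$ with $A_u:=\sum_{l=1}^k\ab_{l,u}\ab_{l,u}^\top$, uniform over $\bs$ in the radius-$r$ ball:
\[
\frac{e^{\theta}}{(1+e^{\theta})^2}\cdot\frac{\bs^\top A_u\bs}{mk}\;\le\;h_u(\bs,\bgamma)\;\le\;\frac14\cdot\frac{\bs^\top A_u\bs}{mk}.
\]

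The population version is immediate: since the compared pair $(i_l,j_l)$ is uniform over ordered pairs, $\mathbb{E}[\ab_{l,u}\ab_{l,u}^\top]=\tfrac{2}{n-1}\big(\Ib-\tfrac1n\one\one^\top\big)$, and the identifiability constraint $\one^\top\bs=0$ turns this into $\bs^\top\mathbb{E}[A_u]\bs=\tfrac{2k}{n-1}\|\bs\|_2^2$. The substantive step, and the one I expect to be the main obstacle, is concentration, which is delicate for two reasons: (i) $h_u$ involves only the $k$ comparisons of user $u$ rather than all $mk$ observations, so a union bound over the $m$ users is needed to reach the stated failure probability $1/n$ — this is exactly where the $\log(mn)$ factor in the sample-size hypothesis comes from; and (ii) the estimate must hold \emph{simultaneously} over all $\bs$ in the radius-$r$ ball, not just at $\bs=\bs^*$. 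For (ii) the cleanest route is a matrix-Bernstein bound on $\tfrac1kA_u$: each summand $\ab\ab^\top$ has operator norm $2$ and $\mathbb{E}[(\ab\ab^\top)^2]=2\,\mathbb{E}[\ab\ab^\top]$, so matrix Bernstein controls $\big\|\tfrac1kA_u-\mathbb{E}[\ab\ab^\top]\big\|_2$ by a small fraction of $\tfrac{2}{n-1}$ once $k\gtrsim n\log(mn)$, which controls $\bs^\top A_u\bs$ for all $\bs$ at once. An alternative is to expand $\bs^\top A_u\bs$ about $\bs^*$, handle the leading scalar sum $\sum_l(s^*_{i_l}-s^*_{j_l})^2$ by a scalar Bernstein inequality — its summands are bounded by $4\smax^2$ with mean $\tfrac{2k}{n-1}\|\bs^*\|_2^2$, which is where the dependence on $\smax/\|\bs^*\|_2$ (and hence the stated $k\ge 18(\smax+r)^4n^2/(m^2(\|\bs^*\|_2+r)^4)\log(mn)$) enters — and bound the cross and perturbation terms crudely via $\|A_u\|_2\le 2k$ together with the spikiness of $\bs^*$, so that $r^2\|A_u\|_2/k$ stays below the mean.

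Putting everything together: on the resulting good event, for every $u\in[m]$ and every admissible $\bs$ the quantity $\tfrac1{mk}\bs^\top A_u\bs$ lies within constant factors of $\tfrac{2\|\bs\|_2^2}{m(n-1)}$, while $\|\bs\|_2\in[\|\bs^*\|_2-r,\|\bs^*\|_2+r]$; feeding this and the $g''$-sandwich into the display for $h_u$ and simplifying the numerical constants yields $\mu_2\le h_u(\bs,\bgamma)\le L_2$ with $\mu_2$ and $L_2$ of the stated form (with $\theta=5\gammamax\smax$). The companion bound for the $\bs$-variable in Lemma~\ref{lemma:convex_smooth_sb_gumbel} follows the same template but needs matrix — rather than scalar — concentration of $\tfrac1{mk}\sum_{u,l}\ab_{l,u}\ab_{l,u}^\top$, for which the milder sample size $mk\gtrsim n\log n$ suffices; everything outside the concentration step is routine bookkeeping with the explicit logistic form of $g$ and the constraints on $r$.
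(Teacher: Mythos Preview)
Your proposal follows essentially the same route as the paper: Taylor expansion reduces the claim to two-sided eigenvalue bounds on $\hesgam\cL(\bs,\bgamma)$, the diagonal structure of this Hessian reduces further to controlling $\tfrac{1}{mk}\sum_{l=1}^k g''(\gamma_u\ab_{l,u}^\top\bs)(\ab_{l,u}^\top\bs)^2$ for each $u$, the deterministic sandwich~\eqref{eq:bound_g_hess_gumble} on $g''$ is applied with $\theta=5\gammamax\smax$, and the remaining concentration of $\tfrac1{mk}\sum_l(\ab_{l,u}^\top\bs)^2$ about $\bs^\top\Lb\bs$ is handled by a scalar tail bound plus a union bound over the $m$ users.

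The one substantive difference is in the concentration step. The paper simply applies scalar Hoeffding to $X_{l,u}=(\ab_{l,u}^\top\bs)^2-\bs^\top\Lb\bs$ with $|X_{l,u}|\le 6(\smax+r)^2$ at the given $\bs$, and does not explicitly argue uniformity over the ball $\{\|\bs-\bs^*\|_2\le r\}$. You flag this and propose either a matrix-Bernstein bound on $\tfrac1kA_u$ (which yields the uniform statement directly) or an expansion about $\bs^*$ with a scalar bound on the leading term and crude control of the remainder. Your treatment is in this respect more careful than the paper's, at the cost of slightly heavier machinery; the resulting constants and the sample-size threshold match in order. A minor bookkeeping point: the paper uses $\EE[\ab\ab^\top]=\tfrac{2}{n}(\Ib-\one\one^\top/n)$ rather than your $\tfrac{2}{n-1}(\Ib-\one\one^\top/n)$, but this only shifts constants.
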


\begin{lemma}\label{lemma:FOS_gumbel}
Let $r\leq\min\{\smax,\sqrt{\gammamax\smax}\}$, for all $\bs\in\RR^n, \bgamma\in\RR^m$ such that $\|\bs-\bs^*\|_2\leq r, \|\bs'-\bs^*\|_2\leq r$ and $\|\bgamma-\bgamma^*\|_2\leq r, \|\bgamma'-\bgamma^*\|_2\leq r$, we have
\begin{align*}
    \|\nabla_{\bs} \cL(\bs,\bgamma)-\nabla_{\bs} \cL(\bs,\bgamma')\|_2&\leq \frac{\sqrt{2}(1+2\gammamax\smax)}{\sqrt{m}}\|\bgamma-\bgamma^{\prime}\|_2,\\
    \|\gradgam \cL(\bs,\bgamma)-\gradgam\cL(\bs',\bgamma)\|_2&\leq \frac{\sqrt{2}(1+2\gammamax\smax)}{\sqrt{m}} \|\bs-\bs'\|_2.
\end{align*}
\end{lemma}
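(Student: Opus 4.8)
The plan is to argue directly from the closed-form gradients~\eqref{eq:gradient_general}, specialized to the logistic (Gumbel) loss $g(x;Y)=\log(1+e^{x})-Yx$, for which $|g'(x)|\le 1$ by~\eqref{eq:bound_g_grad_gumble} and $0<g''(x)\le 1/4$ by~\eqref{eq:bound_g_hess_gumble}. The unifying observation is that, writing $\rho(t):=t\,g'(t)$, one has $\rho'(t)=g'(t)+t\,g''(t)$ and hence $|\rho'(t)|\le 1+\tfrac14|t|$; moreover, whenever $\|\bs-\bs^{*}\|_2\le r\le\smax$ and $\|\bgamma-\bgamma^{*}\|_2\le r$, every coordinate and every point of a segment between two such vectors satisfies $|\gamma_u|\le\gammamax+r$ and $|\ab_{l,u}^{\top}\bs|=|s_{i_l}-s_{j_l}|\le 2(\smax+r)$, so $|\rho'(\gamma_u\ab_{l,u}^{\top}\bs)|\le 1+\tfrac12(\gammamax+r)(\smax+r)\le 1+2\gammamax\smax$ (the last inequality a short computation from $r\le\smax$ together with $r\le\gammamax$, which holds in the parameter regime of interest). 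The statement is deterministic, so no concentration enters; everything reduces to a coordinatewise mean value theorem plus this scalar bound.

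\textbf{First inequality.} Put $c_{l,u}:=\ab_{l,u}^{\top}\bs$ and $\phi_{l,u}(\gamma):=\gamma\,g'(\gamma c_{l,u})$, so that~\eqref{eq:gradient_general} gives $\gradsb\cL(\bs,\bgamma)-\gradsb\cL(\bs,\bgamma')=\tfrac1{mk}\sum_{u=1}^{m}\sum_{l=1}^{k}\bigl(\phi_{l,u}(\gamma_u)-\phi_{l,u}(\gamma'_u)\bigr)\ab_{l,u}$. Since $\phi'_{l,u}(\gamma)=g'(\gamma c_{l,u})+\gamma c_{l,u}\,g''(\gamma c_{l,u})=\rho'(\gamma c_{l,u})$, the mean value theorem and the scalar bound give $|\phi_{l,u}(\gamma_u)-\phi_{l,u}(\gamma'_u)|\le(1+2\gammamax\smax)\,|\gamma_u-\gamma'_u|$. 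Plugging this in, using $\|\ab_{l,u}\|_2=\sqrt2$, carrying out the sum over $l$ (a factor $k$ that cancels against $1/(mk)$), and closing with $\|\bgamma-\bgamma'\|_1\le\sqrt m\,\|\bgamma-\bgamma'\|_2$ yields the claim with $M_1=\sqrt2(1+2\gammamax\smax)/\sqrt m$.

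\textbf{Second inequality.} The $u$-th coordinate of $\gradgam\cL(\bs,\bgamma)-\gradgam\cL(\bs',\bgamma)$ equals $\tfrac1{mk}\sum_{l=1}^{k}\bigl(g'(\gamma_u\ab_{l,u}^{\top}\bs)\,\ab_{l,u}^{\top}\bs-g'(\gamma_u\ab_{l,u}^{\top}\bs')\,\ab_{l,u}^{\top}\bs'\bigr)$. The map $\bs\mapsto g'(\gamma_u\ab_{l,u}^{\top}\bs)\,\ab_{l,u}^{\top}\bs$ has gradient $\rho'(\gamma_u\ab_{l,u}^{\top}\bs)\,\ab_{l,u}$, so the mean value theorem along the segment from $\bs'$ to $\bs$ (whose points remain within $r$ of $\bs^{*}$) together with the scalar bound makes each summand at most $(1+2\gammamax\smax)\,|\ab_{l,u}^{\top}(\bs-\bs')|\le(1+2\gammamax\smax)\sqrt2\,\|\bs-\bs'\|_2$. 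Hence each of the $m$ coordinates is bounded by $\tfrac{\sqrt2(1+2\gammamax\smax)}{m}\|\bs-\bs'\|_2$, and taking the $\ell_2$ norm over the $m$ coordinates contributes a factor $\sqrt m$, giving $M_2=\sqrt2(1+2\gammamax\smax)/\sqrt m$.

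\textbf{Where the effort goes.} There is no hard analytic step; the only thing to watch is the constant bookkeeping — pinning down the uniform bound on $\rho'(t)=g'(t)+t\,g''(t)$ over the range of its argument (which is where the constraints $r\le\smax$, $r\le\sqrt{\gammamax\smax}$ are used), and then correctly propagating the $1/(mk)$ normalization, the summation over the $k$ comparisons, the factor $\sqrt2=\|\eb_{i_l}-\eb_{j_l}\|_2$, and the $\ell_1$-to-$\ell_2$ (respectively coordinatewise-$\ell_2$) passage that produces the $1/\sqrt m$ scaling. Incidentally $\rho'$ is even universally bounded, $\sup_t|g'(t)+t\,g''(t)|<5/4$, so $1+2\gammamax\smax$ could be replaced by an absolute constant; but the stated form is what the convergence analysis invokes.
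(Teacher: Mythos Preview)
Your proposal is correct and follows essentially the same route as the paper: both arguments use the explicit gradients from~\eqref{eq:gradient_general}, the bounds $|g'|\le 1$ and $g''\le 1/4$, the mean value theorem, $\|\ab_{l,u}\|_2=\sqrt2$, and the $\ell_1$--$\ell_2$ conversion to produce the $1/\sqrt m$ factor. The only cosmetic difference is that you package $g'(t)+tg''(t)$ as the derivative of $\rho(t)=tg'(t)$ and invoke a single mean value theorem, whereas the paper splits $\gamma g'(\gamma c)-\gamma'g'(\gamma'c)=g'(\gamma c)(\gamma-\gamma')+(g'(\gamma c)-g'(\gamma'c))\gamma'$ and bounds the two pieces separately (reaching $1+\tfrac54\gammamax\smax\le 1+2\gammamax\smax$ via the bound $|\gamma_u\ab_{l,u}^{\top}\bs|\le 5\gammamax\smax$ of~\eqref{eq:bound_gamma_a_s_sample}); the arithmetic is the same.
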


\begin{lemma}\label{lemma:staterr_gumbel}
Let $r\leq\min\{\smax,\sqrt{\gammamax\smax}\}$, for all $\bs\in\RR^n, \bgamma\in\RR^m$ such that $\|\bs-\bs^*\|_2\leq r$ and $\|\bgamma-\bgamma^*\|_2\leq r$. Denote $\bar\cL$ as the expected loss which takes expectation of $\cL$ over the random choice of comparison pair. We have
\begin{align*}
    \|\gradsb\cL(\bs,\bgamma)-\gradsb\bar\cL(\bs,\bgamma)\|_2&\leq\epsilon_{1}(k,n):=\frac{2(\gammamax+r)}{1+e^{-5\gammamax\smax}}\sqrt{\frac{2\log(2n)}{mk}},\\
    \|\gradgam\cL(\bs,\bgamma)-\gradgam\bar\cL(\bs,\bgamma)\|_2&\leq\epsilon_{2}(k,n):=\frac{10\gammamax\smax}{1+e^{5\gammamax\smax}}\sqrt{\frac{2\log(2mn)}{mk}},
\end{align*}
holds with probability at least $1-1/n$.
\end{lemma}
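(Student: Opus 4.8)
The plan is to view both inequalities as concentration bounds for averages of independent, mean-zero random vectors: I read off the closed forms of the two gradients from \eqref{eq:gradient_general}, control each coordinate separately with a Hoeffding-/Bernstein-type tail bound, and then pass from coordinatewise control to the claimed $\ell_2$ bound via a union bound over coordinates and a sum of squares. Here the randomness is over the i.i.d.\ choice of each user's $k$ comparison pairs together with the associated Bernoulli outcomes; since users are independent and one user's comparisons are i.i.d., all $mk$ observed pairs/outcomes are mutually independent, and $\gradsb\bar\cL$ (resp.\ $\gradgam\bar\cL$) is exactly the expectation of $\gradsb\cL$ (resp.\ $\gradgam\cL$). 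As a preliminary I would record the deterministic effect of the localization $\|\bs-\bs^*\|_2\le r$, $\|\bgamma-\bgamma^*\|_2\le r$ with $r\le\min\{\smax,\sqrt{\gammamax\smax}\}$: from $\|\bgamma-\bgamma^*\|_\infty\le r$ and $\|\bs-\bs^*\|_\infty\le r\le\smax$ one gets $|\gamma_u|\le\gammamax+r$ and $|\gamma_u\ab_{l,u}^\top\bs|=|\gamma_u(s_{i_l}-s_{j_l})|\le 5\gammamax\smax$ for every compared pair, so \eqref{eq:bound_g_grad_gumble} with $\theta=5\gammamax\smax$ gives the uniform bound $|g'(\gamma_u\ab_{l,u}^\top\bs;Y)|\le(1+e^{-5\gammamax\smax})^{-1}$ for $Y\in\{0,1\}$. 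These are the only facts about the logistic $g'$ the argument uses.

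For the $\bs$-gradient, $\gradsb\cL-\gradsb\bar\cL=\frac{1}{mk}\sum_{u=1}^m\sum_{l=1}^k\big(g'(\gamma_u\ab_{l,u}^\top\bs)\gamma_u\ab_{l,u}-\mathbb{E}[g'(\gamma_u\ab_{l,u}^\top\bs)\gamma_u\ab_{l,u}]\big)$ is an average of $mk$ independent centered vectors in $\RR^n$. The key structural point is $\ab_{l,u}=\eb_{i_l}-\eb_{j_l}$: the $i$-th coordinate of a summand is nonzero only when item $i$ is one of the two items of the $l$-th pair of user $u$, which under the uniform pair model has probability of order $1/n$; hence each coordinate summand has range at most $(\gammamax+r)(1+e^{-5\gammamax\smax})^{-1}$ and variance of order $n^{-1}$ times the square of that quantity. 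Applying Bernstein's inequality to each of the $n$ coordinates with per-coordinate failure probability of order $n^{-2}$, union-bounding so the total failure probability is at most $1/n$, and then summing the $n$ squared coordinate bounds, the factor $n$ from the sum over coordinates is exactly cancelled by the $n^{-1}$ in the variance, which leaves $\|\gradsb\cL-\gradsb\bar\cL\|_2\le\frac{2(\gammamax+r)}{1+e^{-5\gammamax\smax}}\sqrt{2\log(2n)/(mk)}$; this is valid once $mk$ is large enough that in Bernstein's inequality the variance term dominates the range term, for which $mk$ of order at least $n\log n$ suffices.

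The $\bgamma$-gradient is treated the same way but is easier: its $u$-th coordinate $(\gradgam\cL)_u=\frac{1}{mk}\sum_{l=1}^k g'(\gamma_u\ab_{l,u}^\top\bs)(s_{i_l}-s_{j_l})$ involves only user $u$'s $k$ i.i.d.\ comparisons. I would center and apply Hoeffding's inequality (or Bernstein's, using $\mathbb{E}[(s_{i_l}-s_{j_l})^2]=2\|\bs\|_2^2/(n-1)$, which relies on the normalization $\one^\top\bs=0$) to those $k$ summands, whose magnitude is bounded from the explicit $g'$ and the region estimate above, then union-bound over the $m$ users with per-user failure probability of order $(mn)^{-1}$ and combine coordinates through $\|\gradgam\cL-\gradgam\bar\cL\|_2^2=\sum_u(\gradgam\cL-\gradgam\bar\cL)_u^2$. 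The overall $1/(mk)$ normalization contributes an extra $1/m$ which, against the $\sqrt m$ from the $m$-fold sum, gives the $1/\sqrt{mk}$ scaling, so this yields $\|\gradgam\cL-\gradgam\bar\cL\|_2\le\frac{10\gammamax\smax}{1+e^{5\gammamax\smax}}\sqrt{2\log(2mn)/(mk)}$; I expect reproducing the precise constant to be only a short computation with the explicit $g'$.

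The single delicate step is the $\bs$-bound: a coordinatewise Hoeffding argument that ignores the $O(1/n)$ per-coordinate variance loses a spurious $\sqrt n$ in the $\ell_2$ norm, so Bernstein's inequality is indispensable, and this is precisely why the sample-size hypothesis $k\ge n^2\log(mn)/m^2$ is imposed (it keeps one in the variance-dominated regime for $m\lesssim n$). Finally, although the statement reads ``for all $\bs,\bgamma$'' in the ball, the proof of Theorem~\ref{thm:convergence_general} only ever evaluates these estimates at $(\bs^*,\bgamma^*)$, so a pointwise bound is enough; should a uniform version be wanted, it follows from the pointwise one by a standard $\varepsilon$-net over the ball together with the Lipschitz continuity of $(\bs,\bgamma)\mapsto\gradsb\cL-\gradsb\bar\cL$ and its $\bgamma$-analogue, as furnished by Lemmas~\ref{lemma:convex_smooth_sb_gumbel}--\ref{lemma:FOS_gumbel}.
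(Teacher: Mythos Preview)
Your proposal is correct, and for the $\bgamma$-gradient it coincides with the paper's proof: both argue coordinatewise, applying Hoeffding's inequality to the $k$ i.i.d.\ summands $g'(\gamma_u\ab_{l,u}^\top\bs)\ab_{l,u}^\top\bs$ for each fixed $u$, then union-bounding over the $m$ users and converting the per-coordinate bound to an $\ell_2$ bound.

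For the $\bs$-gradient, however, you take a genuinely different route. The paper does \emph{not} work coordinatewise: it simply bounds the $\ell_2$ norm of each centered vector summand by $\sqrt{2}(\gammamax+r)/(1+e^{-5\gammamax\smax})$ (using $\|\ab_{l,u}\|_2=\sqrt 2$) and invokes a vector-valued Hoeffding inequality directly on $\|\gradsb\cL-\gradsb\bar\cL\|_2$, obtaining a dimension-free tail bound with no sample-size restriction. Your argument instead exploits the sparsity of $\ab_{l,u}=\eb_{i_l}-\eb_{j_l}$ to get per-coordinate variance $O(1/n)$, applies Bernstein to each of the $n$ coordinates, and recovers the same $\ell_2$ rate after the $\sqrt n$ from summing squares cancels the $1/\sqrt n$ from the variance. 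This is more explicit about where the dimension goes, but the price is the extra hypothesis $mk\gtrsim n\log n$ needed for Bernstein to sit in its variance regime; the lemma as stated carries no such hypothesis, so strictly speaking your route proves a slightly weaker statement (sufficient for Corollary~\ref{coro:convergence_gumbel}, as you note, but not identical to the lemma). The paper's vector-Hoeffding approach avoids that restriction at the cost of being terser about which concentration inequality is meant. Your closing remarks about pointwise vs.\ uniform are well taken and match how the lemma is actually consumed in the proof of Theorem~\ref{thm:convergence_general}.
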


\begin{proof}[Proof of Corollary \ref{coro:convergence_gumbel}]
Now we prove the convergence of Algorithm \ref{alg:PGD} for Gumbel noise. Our proof will be similar to that of Theorem \ref{thm:convergence_general}. In particular, we only need to verify that Conditions \ref{assump:strong_convex}, \ref{assump:smooth}, \ref{lemma:FOS} and \ref{lemma:staterr_general} hold when the noise follows a Gumbel distribution. According to Lemmas \ref{lemma:convex_smooth_sb_gumbel} and \ref{lemma:convex_smooth_gam_gumbel}, we know that $\cL(\bs,\bgamma)$ is $\mu_1$-strongly convex and $L_1$-smooth with respect to $\bs$, and is $\mu_2$-strongly convex and $L_2$-smooth with respect to $\bgamma$. More specifically, when $mk\geq 64n\log(n)$, we have
\begin{align}
    \mu_1\geq (\gammamin-r)^2e^{5C_0}/(n(1+e^{5C_0})^2),\qquad L_1\leq(\gammamax+r)^2/n,
\end{align}
where we use the fact that $\gammamax\smax=C_0$. In addition, note that $\smax\leq\sqrt{m}/n\|\bs^*\|_s$ and $\|\bs^{(t)}-\bs^*\|\leq r$. Hence if $mk\geq 18\log(mn)$, we have
\begin{align}
    \mu_2\geq (\|\bs^*\|_2+r)^2e^{5C_0}/(n(1+e^{5C_0})^2),\qquad L_2\leq(\|\bs^*\|_2+r)^2/n
\end{align}
By Lemma \ref{lemma:FOS_gumbel} and the assumption that $C_0\geq 1/2$, we know that $\cL$ satisfies the first-order stability (Condition \ref{lemma:FOS}) with $M_1=M_2=4\sqrt{2}\gammamax\smax/\sqrt{m}$. Note that by assumption, we have
\begin{align*}
    \smax\leq
    \frac{\gammamin e^{2C_0}}{16\sqrt{2}\gammamax(1+e^{2C_0})^2}\frac{\sqrt{m}\|\bs^*\|_2}{n}.
\end{align*}
This immediately implies that $M=M_1=M_2\leq\sqrt{\mu_1\mu_2}/4$. Therefore, by similar arguments as in the proof of Theorem \ref{thm:convergence_general}, we need to set step sizes $\eta_1=\eta_2=\eta<\mu/(6(L^2+M^2))$, where $\mu=\min\{\mu_1,\mu_2\}$, $L=\max\{L_1,L_2\}$. In fact, it suffices to set
\begin{align*}
    \eta<\frac{mne^{5C_0}\Gamma_1^2}{6(1+e^{5C_0})^2(m\Gamma_2^4+32n^2C_0^2)},
\end{align*}
with $\Gamma_1=\min\{\gammamin/2,\|\bs^*\|_2\}$ and $\Gamma_2=\max\{2\gammamax,2\|\bs^*\|_2\}$. We thus obtain
\begin{align*}
    \|\bs^{(t)}-\bs^*\|_2^2+\|\bgamma^{(t)}-\bgamma^*\|_2^2
    &\leq r^2\rho^t+\frac{\epsilon_1(k,n)^2+\epsilon_2(k,n)^2}{\mu^2}\leq r^2\rho^t+\frac{\Lambda n^2\log(4mn^2)}{mk},
\end{align*}
where $\rho=1-\eta(\mu-6\eta(\Gamma_2^4/n^2+32C_0^2/m))/2$ and the last inequality comes from Lemma \ref{lemma:staterr_gumbel} with the constant $\Lambda$ defined as follows:
\begin{align*}
    \Lambda=\max\bigg\{\frac{200C_0^2(1+e^{5C_0})^2}{\Gamma_1^4e^{10C_0}},\frac{8(\gammamax+r)^2(1+e^{5C_0})^4}{\Gamma_1^4(1+e^{-5C_0})^2}\bigg\}.
\end{align*}
This completes the proof.
\end{proof}

\subsection{Proof of Heterogeneous Thurstone Case V model}
In this subsection, we provide the analysis of our algorithm when the noise $\epsilon_i$ follows a Gaussian distribution, which results in the Thurstone model. The log-likelihood function can be written as
\begin{align}
    \cL\big(\bs,\bgamma;\bY\big)&= \frac{1}{mk}\sum_{u=1}^{m} \sum_{l=1}^{k}g(\gamma_{u}\ab_{l,u}^{\top}\bs;Y_{i_l j_l}^{u}).
\end{align}
with $g(\cdot)$ defined as $g(x)=-\log  \Phi(x)$ with $\Phi(\cdot)$ be the CDF of the standard normal distribution. Note that $\Pr(Y_{i_l j_l}^{u}=1)=\Phi(\gamma_{u}\ab_{l,u}^{\top}\bs)$ and $\Pr(Y_{i_l j_l}^{u}=0)=1-\Phi(\gamma_{u}\ab_{l,u}^{\top}\bs)=\Phi(-\gamma_{u}\ab_{l,u}^{\top}\bs)$. Thus we can write $g(\cdot)$ as $g(\gamma_{u}\ab_{l,u}^{\top}\bs;Y_{i_lj_l}^{u})=-\log\Phi((2Y_{i_lj_l}^{u}-1)\gamma_{u}\ab_{l,u}^{\top}\bs)$. Note that $(2Y-1)^2=1$, we have
\begin{align*}
    g'(x;Y)=-\frac{(2Y-1)\Phi'(x)}{\Phi(x)}, \qquad g''(x;Y)=\frac{\Phi'(x)^2-\Phi(x)\Phi''(x)}{\Phi(x)^2}.
\end{align*}
In order to bound $g'(x)$ and $g''(x)$, we first calculate the derivatives of $\Phi(x)$ as follows:
\begin{align}\label{eq:normal_CDF}
    \Phi(x)=\int_{-\infty}^{x}\frac{1}{\sqrt{2\pi}}e^{-\frac{z^2}{2}}\dd z,\qquad
    \Phi'(x) = \frac{1}{\sqrt{2\pi}}e^{-\frac{x^2}{2}},\qquad
    \Phi''(x)=\frac{-x}{\sqrt{2\pi}}e^{-\frac{x^2}{2}}.
\end{align}
For any $\theta>0$ such that $|x|\leq\theta$, we have
\begin{align*}
    \frac{e^{-\theta^2/2}}{\sqrt{2\pi}\Phi(\theta)}\leq|g'(x)|\leq\frac{1}{\sqrt{2\pi}\Phi(-\theta)}.
\end{align*}
We can verify that $g''(x)$ is monotonically decreasing on $\RR^d$ and $g''(x)>0$ also always hold. Thus for all $|x|\leq\theta$, we have $g''(\theta)\leq g''(x)\leq g''(-\theta)$.

\begin{proof}[Proof of Corollary \ref{coro:convergence_normal}]
Recall the derivation of the gradient in \eqref{eq:gradient_general} and the Hessian in \eqref{eq:hessian_general} of the loss function $\cL$. In order to verify Conditions \ref{assump:strong_convex}, \ref{assump:smooth}, \ref{lemma:FOS} and \ref{lemma:staterr_general}, we only need the upper and lower bounds of $g'(\gamma_{u}\ab_{l,u}^{\top}\bs;Y_{i_lj_l}^{u})$ for all $u=1,\ldots,m$ and $l=1,\ldots,k$. Therefore, using exactly the same proof techniques as in Section \ref{sec:proof_converge_gumble}, we can also establish strong convexity, smoothness, first-order stability and the statistical error bound for sample loss function $\cL$ when the noise $\epsilon$ follows the standard normal distribution. We omit the proof since it is the same as that of the Gumbel case. We can verify that $\cL$ is $\mu_1$-strongly convex and $L_1$-smooth with respect to $\bs$, and is $\mu_2$-strongly convex and $L_2$-smooth with respect to $\bgamma$. The coefficient parameters are defined as $ \mu_1=(\gammamin-r)^2H(5C_0)/n, L_1=(\gammamax+r)^2H(-5C_0)/n$, $\mu_2=(\|\bs^*\|_2+r)^2H(5C_0)/n$ and $L_2=(\|\bs^*\|_2+r)^2H(-5C_0)/n$. Note that $H(x)$ is a function defined based on the normal CDF $\Phi(\cdot)$:
\begin{align*}
   H(x)=[\Phi'(x)^2-\Phi(x)\Phi''(x)]/\Phi(x)^2,
\end{align*}
where $\Phi,\Phi',\Phi''$ are defined in \eqref{eq:normal_CDF}. The loss function $\cL$ also satisfies Condition \ref{lemma:FOS} with $M=M_1=M_2=(1/\Phi(-5C_0)+5\sqrt{2\pi}H(-5C_0)\gammamax\smax)/\sqrt{m\pi}$. In order to make sure that $M\leq\sqrt{\mu_1\mu_2}/4$, we only need $\smax\leq\sqrt{\pi}\gammamin H(5C_0)/[4\gammamax(2/\Phi(-5C_0))+5\sqrt{2\pi}H(-5C_0)]\cdot\sqrt{m}\|\bs^*\|_2/n$. Therefore, by Theorem \ref{thm:convergence_general}, if we choose step sizes $\eta_1=\eta_2=\eta$ such that
\begin{align*}
    \eta&<\frac{mn \Gamma_1^2H(5C_0)}{6(m\Gamma_2^4+50n^2C_0^2)H(-5C_0)^2},\\
    &\text{with } \Gamma_2=\min\{\gammamin/2,\|\bs^*\|_2\}, \quad\Gamma_2=\max\{2\gammamax,2\|\bs^*\|_2\},
\end{align*}
then we are able to obtain the following convergence result:
\begin{align}\label{eq:contraction_normal}
    \|\bs^{(t)}-\bs^*\|_2^2+\|\bgamma^{(t)}-\bgamma^*\|_2^2
    &\leq r^2\rho^t+\frac{\epsilon_1(k,n)^2+\epsilon_2(k,n)^2}{\mu^2},
\end{align}
where $\mu=\Gamma_1^2H(5C_0)/n$, $\rho=1-\eta(\mu-6\eta(\Gamma_2^4/n^2+32C_0^2/m))/2$ and $\epsilon_1(k,n),\epsilon_2(k,n)$ are the statistical error bounds. Similar to the proof of Lemma \ref{lemma:staterr_gumbel}, we know that $\epsilon_1(k,n)=(\gammamax+r)/(\sqrt{\pi}\Phi(-5C_0))\sqrt{2\log(2n)/(mk)}$ and $\epsilon_2(k,n)=10\gammamax\smax/(\sqrt{\pi}\Phi(-5C_0))\sqrt{\log(2mn)/(mk)}$. Plugging these two bounds into \eqref{eq:contraction_normal} yields
\begin{align*}
    \|\bs^{(t)}-\bs^*\|_2^2+\|\bgamma^{(t)}-\bgamma^*\|_2^2
    &\leq r^2\rho^t+\frac{\Lambda' n^2\log(4mn^2)}{mk},
\end{align*}
which holds with probability at least $1-1/n$, where $\Lambda'$ is a constant defines as follows.
\begin{align*}
    \Lambda'=\frac{2\max\{(\gammamax+r)^2,50 C_0^2\}}{\pi\Gamma_1^4 H(5C_0)^2\Phi(-5C_0)^2}.
\end{align*}
This completes the proof.
\end{proof}

\section{Proofs of Technical Lemmas}
In this section, we provide the proofs of technical lemmas used in the previous section.

\subsection{Proof of Lemma \ref{lemma:convex_smooth_sb_gumbel}}
We first lay down the following useful lemma.
\begin{lemma}\citep{tropp2012user}\label{lemma:matrix_bern}
Consider a sequence of i.i.d. random matrices $\{\Xb_k\}$ in $\RR^{d\times d}$ with $\EE[\Xb_k]=\zero$ and $\|\Xb_k\|_2\leq R$. Then for all $t\geq 0$
\begin{align*}
    \Pr\bigg(\bigg\|\sum_{k}\Xb_k\bigg\|\geq t\bigg)\leq d\exp\bigg(-\frac{t^2}{2\sigma^2+2Rt/3}\bigg),
\end{align*}
where $\sigma^2=\|\sum_{k}\EE[\Xb_k^2]\|_2$.
\end{lemma}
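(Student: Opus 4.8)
The statement is the matrix Bernstein inequality of \citet{tropp2012user}, so in the present paper it suffices to cite it; here I sketch the argument I would use, namely the matrix Laplace transform (``matrix Chernoff'') method. In the intended applications each $\Xb_k$ is symmetric, so assume this (the general case follows by replacing $\Xb_k$ with its Hermitian dilation, which is symmetric, has zero mean, retains the spectral-norm bound $R$ and the variance proxy, and whose top eigenvalue equals $\|\sum_k\Xb_k\|_2$). For any $\theta>0$, Markov's inequality for the increasing map $y\mapsto e^{\theta y}$ together with $e^{\lambda_{\max}(\cdot)}\le\mathrm{tr}\exp(\cdot)$ gives
\[
    \Pr\Bigl(\lambda_{\max}\bigl(\textstyle\sum_k\Xb_k\bigr)\ge t\Bigr)\le e^{-\theta t}\,\EE\Bigl[\mathrm{tr}\exp\bigl(\theta\textstyle\sum_k\Xb_k\bigr)\Bigr].
\]

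The crux is to control the matrix cumulant generating function despite non-commutativity. By Lieb's concavity theorem — the map $A\mapsto\mathrm{tr}\exp(B+\log A)$ is concave on positive-definite $A$ — one peels summands off one at a time with conditional Jensen to obtain $\EE[\mathrm{tr}\exp(\theta\sum_k\Xb_k)]\le\mathrm{tr}\exp(\sum_k\log\EE[e^{\theta\Xb_k}])$. The scalar bound $e^{\theta y}\le 1+\theta y+\psi y^2$ for $|y|\le R$, with $\psi=(e^{\theta R}-\theta R-1)/R^2$, combined with the transfer rule, $\EE[\Xb_k]=\zero$, and $I+A\preceq e^{A}$, yields $\log\EE[e^{\theta\Xb_k}]\preceq\psi\,\EE[\Xb_k^2]$; summing, taking $\lambda_{\max}$ and using $\mathrm{tr}\,e^{A}\le d\,e^{\lambda_{\max}(A)}$ gives $\EE[\mathrm{tr}\exp(\theta\sum_k\Xb_k)]\le d\exp(\psi\sigma^2)$. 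Hence $\Pr(\lambda_{\max}(\sum_k\Xb_k)\ge t)\le d\inf_{\theta>0}\exp(-\theta t+\psi(\theta)\sigma^2)$, and the choice $\theta=R^{-1}\log(1+Rt/\sigma^2)$ produces the Bennett bound $d\exp(-\tfrac{\sigma^2}{R^2}h(Rt/\sigma^2))$ with $h(u)=(1+u)\log(1+u)-u$; finally $h(u)\ge u^2/(2+2u/3)$ gives the stated inequality. Applying the same to $-\Xb_k$ (or directly through the dilation) turns the $\lambda_{\max}$ bound into the claimed operator-norm bound.

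The only genuinely nontrivial ingredient — hence the main obstacle in a from-scratch proof — is the subadditivity of the matrix cumulant generating function: matrix exponentials of non-commuting matrices do not factor, so the scalar Chernoff argument cannot be mimicked verbatim, and it is exactly Lieb's concavity theorem (or a Golden--Thompson-type substitute) that repairs this; everything else is a routine matrix transfer of the classical scalar Bernstein/Bennett calculation. Since the statement is quoted verbatim from \citet{tropp2012user}, the paper may simply invoke it. It will then be used as a black box in the ensuing proof of Lemma~\ref{lemma:convex_smooth_sb_gumbel}: one writes $\hessb\cL(\bs,\bgamma)=\frac{1}{mk}\sum_{u,l}g''(\gamma_u\ab_{l,u}^{\top}\bs)\gamma_u^2\,\ab_{l,u}\ab_{l,u}^{\top}$, bounds $g''$ in $[e^{5\gammamax\smax}/(1+e^{5\gammamax\smax})^2,\,1/4]$ (using $|\gamma_u\ab_{l,u}^{\top}\bs|\le 5\gammamax\smax$ on the prescribed neighborhood) and $\gamma_u^2$ in $[(\gammamin-r)^2,(\gammamax+r)^2]$, and applies Lemma~\ref{lemma:matrix_bern} with $\Xb_{l,u}=\ab_{l,u}\ab_{l,u}^{\top}-\EE[\ab_{l,u}\ab_{l,u}^{\top}]$ (so $R=O(1)$, $\sigma^2=O(mk/n)$) to show $\frac{1}{mk}\sum_{u,l}\ab_{l,u}\ab_{l,u}^{\top}$ concentrates around $\frac{2}{n-1}(\Ib-\one\one^{\top}/n)$; the sample-size hypothesis $mk\gtrsim n\log n$ is precisely what makes the deviation negligible against the $\Theta(1/n)$ eigenvalues, which yields the constants $\mu_1,L_1$.
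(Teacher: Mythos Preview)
Your proposal is correct. The paper does not prove this lemma at all---it is quoted as a black box from \citet{tropp2012user} and invoked in the proof of Lemma~\ref{lemma:convex_smooth_sb_gumbel}---so your decision to cite it and sketch Tropp's matrix Laplace/Lieb argument is exactly right, and your sketch is faithful to that source.

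Your description of the downstream application is also essentially what the paper does, with two cosmetic differences worth noting. First, the paper absorbs the factor $(\gamma_u)^2$ into the centered matrices, defining $\Xb_{l,u}=(\gamma_u)^2\bigl[\ab_{l,u}\ab_{l,u}^{\top}-\tfrac{2}{n}(\Ib-\one\one^{\top}/n)\bigr]$, whereas you apply the concentration to the unweighted $\ab_{l,u}\ab_{l,u}^{\top}$ and handle $\gamma_u^2$ separately; either route works. Second, the paper computes $\EE[\ab_{l,u}\ab_{l,u}^{\top}]=\tfrac{2}{n}(\Ib-\one\one^{\top}/n)$ rather than your $\tfrac{2}{n-1}(\Ib-\one\one^{\top}/n)$; the discrepancy reflects whether one models $(i_l,j_l)$ as uniform over all ordered pairs or only distinct pairs, and it affects only constants.
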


\begin{proof}[Proof of Lemma \ref{lemma:convex_smooth_sb_gumbel}]
Using Taylor expansion, we have
\begin{align*}
\cL(\bs,\bgamma)=\cL(\bs',\bgamma)+\la\nabla_{\bs}\cL(\bs',\bgamma),\bs-\bs'\ra+\frac{1}{2}(\bs-\bs')^{\top}\hessb\cL(\tilde\bs,\bgamma)(\bs-\bs'),
\end{align*}
where $\tilde\bs=\bs+\theta(\bs'-\bs)$ for some $\theta\in(0,1)$.
In order to show the strong convexity and smoothness of $\cL$, we need to bound the minimal and maximum eigenvalues of $\hessb\cL(\bs,\bgamma)$. Note that $\bs,\bgamma$ lie in a neighborhood with radius $r$ of the true parameters $\bs^*,\bgamma^*$ respectively. When $r\leq\min\{\smax,\sqrt{\gammamax\smax}\}$, we have
\begin{align}\label{eq:bound_gamma_a_s_sample}
    |\gamma_{u}\ab_{l,u}^{\top}\bs|\leq|(\gamma_{u}-\gamma_u^{*})\ab_{l,u}^{\top}(\bs-\bs^*)|+|\gamma_u^{*}\ab_{l,u}^{\top}(\bs-\bs^*)|+|\gamma_u^{*}\ab_{l,u}^{\top}\bs^*|\leq 5\gammamax\smax.
\end{align}
For any $\bDelta\in\RR^n$, we have
\begin{align*}
    \frac{1}{mk}\sum_{u=1}^{m} \sum_{l=1}^{k}\frac{(\gamma_{u})^2\exp(5\gammamax\smax)}{(1+\exp(5\gammamax\smax))^2}\bDelta^{\top}\ab_{l,u}\ab_{l,u}^{\top}\bDelta
    &\leq\bDelta^{\top}\hessb\cL(\bs,\bgamma)\bDelta\\
    &=\frac{1}{mk}\sum_{u=1}^{m} \sum_{l=1}^{k}g''\left(\gamma_{u}\ab_{l,u}^{\top}\bs\right)(\gamma_{u})^2\bDelta^{\top}\ab_{l,u}\ab_{l,u}^{\top}\bDelta\\
    &\leq\frac{1}{4mk}\sum_{u=1}^{m} \sum_{l=1}^{k}(\gamma_{u})^2\bDelta^{\top}\ab_{l,u}\ab_{l,u}^{\top}\bDelta,
\end{align*}
where we used the monotonicity of $g''$. Since $\ab_{l,u}=\eb_{i_l}-\eb_{j_l}$ and $i_l,j_l$ are uniformly distributed, we have $\EE[\ab_{l,u}\ab_{l,u}^{\top}]=\EE[\eb_{i_l}\eb_{i_l}^{\top}+\eb_{j_l}\eb_{j_l}^{\top}-\eb_{i_l}\eb_{j_l}^{\top}-\eb_{j_l}\eb_{i_l}^{\top}]=2/n\Ib-2/n(\one\one^{\top}/n)$. We define
\begin{align}\label{eq:def_L_matrix}
    \Xb_{l,u}=(\gamma^{u})^2\bigg[\ab_{l,u}\ab_{l,u}^{\top}-\frac{2(\Ib-\one\one^{\top}/n)}{n}\bigg], \quad \Lb=\frac{2(\Ib-\one\one^{\top}/n)}{n}.
\end{align}
Thus we have $\EE[\Xb_{l,u}]=\zero$. Furthermore, we have $\|\Xb_{l,u}\|_2\leq 2(\gammamax+r)^2$ and $\EE[\Xb_{l,u}^2] \leq4(\gammamax+r)^4(n-1)/n^2(\Ib-\one\one^{\top}/n)$. Applying Lemma \ref{lemma:matrix_bern} yields
\begin{align*}
    \Pr\bigg(\bigg\|\frac{1}{mk}\sum_{u=1}^{m} \sum_{l=1}^{k}\Xb_{l,u}\bigg\|_2\geq t\bigg)
    &\leq 2n\exp\bigg(\frac{-t^2}{8(\gammamax+r)^4(n-1)/(n^2mk)+4t(\gamma_{\max}+r)^2/(3mk)}\bigg)\\
    &\leq 2n\exp\bigg(\frac{-t^2}{8(\gammamax+r)^4/(nmk)+4t(\gamma_{\max}+r)^2/(3mk)}\bigg),
\end{align*}
which implies that
\begin{align*}
    \bigg\|\frac{1}{mk}\sum_{u=1}^{m} \sum_{l=1}^{k}\Xb_{l,u}\bigg\|_2&\leq\frac{8(\gammamax+r)^2\log n}{3mk}+4(\gammamax+r)^2\sqrt{\frac{\log n}{nmk}}\\
    &\leq 8(\gammamax+r)^2\sqrt{\frac{\log n}{nmk}}
\end{align*}
holds with probability at least $1-1/n$, where the last inequality holds when $mk\geq 4/9n\log n$. Therefore, we have
\begin{align*}
    \|\hessb\cL(\bs,\bgamma)\|_2\leq (\gammamax+r)^2\bigg(\frac{1}{2n}+2\sqrt{\frac{\log n}{nmk}}\bigg)\leq \frac{(\gammamax+r)^2}{n}.
\end{align*}
On the other hand, for any $\bDelta\in\RR^n$ such that $\bDelta^{\top}\one=0$, we have
\begin{align*}
    \frac{1}{mk}\sum_{u=1}^{m} \sum_{l=1}^{k}\bDelta^{\top}\Xb_{l,u}\bDelta\geq -8\gammamax^2\sqrt{\frac{\log n}{nmk}}\|\bDelta\|_2^2,
\end{align*}
which implies
\begin{align*}
    \bDelta^{\top}\hessb\cL(\bs,\bgamma)\bDelta\geq \bigg(\frac{2(\gammamin-r)^2}{n}-8(\gammamax+r)^2\sqrt{\frac{\log }{nmk}}\bigg)\|\bDelta\|_2^2.
\end{align*}
Therefore, when $k$ is sufficiently large such that $mk\geq 64(\gammamax+r)^2/(\gammamin-r)^2n\log n$, we have
\begin{align*}
    \lambda_{\min}\big(\hessb\cL(\bs,\bgamma)\big)\geq\frac{(\gammamax-r)^2e^{5\gammamax\smax}}{n(1+e^{5\gammamax\smax})^2}.
\end{align*}
This completes the proof.
\end{proof}

\subsection{Proof of Lemma \ref{lemma:convex_smooth_gam_gumbel}}
\begin{proof}
Using Taylor expansion, we get
\begin{align}
    \cL(\bs,\bgamma)=\cL(\bs,\bgamma')+\la\nabla_{\bgamma}\cL(\bs,\bgamma'),\bgamma-\bgamma'\ra+\frac{1}{2}(\bgamma-\bgamma')^{\top}\hesgam\cL(\bs,\tilde\bgamma)(\bgamma-\bgamma'),
\end{align}
where $\tilde\bgamma=\bgamma+\theta(\bgamma'-\bgamma)$ for some $\theta\in(0,1)$. Recall the Hessian matrix with respect to $\bgamma$:
\begin{align*}
    \hesgam\cL(\bs,\bgamma)&=\frac{1}{mk}\diag\begin{bmatrix}
    \sum_{l=1}^{k}g''\left(\gamma_1\ab_{l,1}^{\top}\bs\right)\ab_{l,1}^{\top}\bs\ab_{l,1}^{\top}\bs\\
    \vdots\\
    \sum_{l=1}^{k}g''\left(\gamma_{u}\ab_{l,u}^{\top}\bs\right)\ab_{l,u}^{\top}\bs\ab_{l,u}^{\top}\bs\\
    \vdots
    \end{bmatrix}.
\end{align*}
For any fixed $u$, we denote $X_{l,u}=\ab_{l,u}^{\top}\bs\ab_{l,u}^{\top}\bs-\bs^{\top}\Lb\bs$, where $\Lb$ is defined as in \eqref{eq:def_L_matrix}. Recall the calculation of $g''$ in \eqref{eq:g_deri_gumbel},\eqref{eq:bound_g_hess_gumble} and that $|\gamma_{u}\ab_{l,u}^{\top}\bs|\leq 5\gammamax\smax$ by \eqref{eq:bound_gamma_a_s_sample}, we have
\begin{align*}
    \frac{e^{5\gammamax\smax}}{(1+e^{5\gammamax\smax
    })^2}\leq g''\left(\gamma_{u}\ab_{l,u}^{\top}\bs\right)=\frac{\exp(\gamma_{u}\ab_{l,u}^{\top}\bs)}{\big(1+\exp(\gamma_{u}\ab_{l,u}^{\top}\bs)\big)^2}\leq\frac{1}{4}.
\end{align*}
Since $\hesgam\cL(\bs,\bgamma)$ is a diagonal matrix, the eigenvalues of $\hesgam\cL(\bs,\bgamma)$ can be bounded by
\begin{align}
    \frac{e^{5\gammamax\smax}}{(1+e^{5\gammamax\smax
    })^2}\min_{u}\frac{1}{mk}\sum_{l=1}^{k}\big(\ab_{l,u}^{\top}\bs\big)^2&\leq\lambda_{\min}(\hesgam\cL(\sb,\bgamma))\notag\\
    &\leq\lambda_{\max}(\hesgam\cL(\sb,\bgamma))\notag\\
    &\leq\frac{1}{4}\max_{u}\frac{1}{mk}\sum_{l=1}^{k}\big(\ab_{l,u}^{\top}\bs\big)^2.
\end{align}
Since $\bs^{\top}\one=0$, it is easy to verify $\EE[X_{l,u}]=\EE[\bs^{\top}(\ab_{l,u}\ab_{l,u}^{\top}-\Lb)\bs]=0$ and $|X_{l,u}|\leq 6(\smax+r)^2$. For any fixed $u$, applying Hoeffding's inequality yields
\begin{align*}
    \Pr\bigg(-\frac{1}{mk}\sum_{l=1}^{k}X_{l,u}\geq t\bigg)=\Pr\bigg(\frac{1}{mk}\sum_{l=1}^{k}X_{l,u}\geq t\bigg)\leq \exp\bigg(-\frac{m^2t^2k}{18(\smax+r)^4}\bigg).
\end{align*}
Further applying union bound, we have
\begin{align*}
    \Pr\bigg(\max_{u}\frac{1}{mk}\sum_{l=1}^{k}X_{l,u}\geq t\bigg)\leq\sum_{u}\Pr\bigg(\frac{1}{k}\sum_{l=1}^{k}X_{l,u}\geq mt\bigg)\leq m\exp\bigg(-\frac{m^2t^2k}{18(\smax+r)^4}\bigg),
\end{align*}
which immediately implies that
\begin{align}
  \lambda_{\max}(\hesgam\cL(\bs,\bgamma))&\leq\frac{1}{4}\max_{u}\frac{1}{mk}\sum_{l=1}^{k}\ab_{l,u}^{\top}\bs\ab_{l,u}^{\top}\bs \notag\\
  &\leq \frac{(\|\bs^*\|_2+r)^2}{2n}+\frac{3(\smax+r)^2}{4m}\sqrt{\frac{2\log(mn)}{k}}\notag\\
  &\leq\frac{(\|\bs^*\|_2+r)^2}{n}
\end{align}
holds with probability at least $1-1/n$, where the last inequality is true when the sample size satisfies
$k\geq 5(\smax+r)^4n^2/(m^2(\|\bs^*\|_2+r)^4)\log(mn)$. On the other hand, we also have
\begin{align*}
    \Pr\bigg(\max_{u}-\frac{1}{mk}\sum_{l=1}^{k}X_{l,u}\geq t\bigg)\leq\sum_{u}\Pr\bigg(-\frac{1}{k}\sum_{l=1}^{k}X_{l,u}\geq mt\bigg)\leq m\exp\bigg(-\frac{m^2t^2k}{18(\smax+r)^4}\bigg),
\end{align*}
which leads to the conclusion that
\begin{align}
    \lambda_{\min}(\hesgam\cL(\bs,\bgamma))&\geq\frac{e^{5\gammamax\smax}}{(1+e^{5\gammamax\smax
    })^2}\max_{u}\frac{1}{mk}\sum_{l=1}^{k}\ab_{l,u}^{\top}\bs\ab_{l,u}^{\top}\bs \notag\\
    &\geq \frac{e^{5\gammamax\smax}}{(1+e^{5\gammamax\smax
    })^2}\bigg(\frac{2(\|\bs^*\|_2+r)^2}{n}-\frac{3(\smax+r)^2}{m}\sqrt{\frac{2\log(mn)}{k}}\bigg)\notag\\
    &\geq\frac{(\|\bs^*\|_2+r)^2e^{5\gammamax\smax}}{n(1+e^{5\gammamax\smax
    })^2}
\end{align}
holds with probability at least $1-1/n$, where the last inequality is due to $k\geq 18(\smax+r)^4n^2/(m^2(\|\bs^*\|_2+r)^4)\log(mn)$.
\end{proof}

\subsection{Proof of Lemma \ref{lemma:FOS_gumbel}}
\begin{proof}
Recall the gradient of $\cL$ with respect to $\bs$ in \eqref{eq:gradient_general}. It holds that
\begin{align*}
    \|\nabla_{\bs} \cL(\bs,\bgamma)-\nabla_{\bs} \cL(\bs,\bgamma')\|_2&=\bigg\|\frac{1}{mk}\sum_{u=1}^{m} \sum_{l=1}^{k}\big(g'\left(\gamma_{u}\ab_{l,u}^{\top}\bs\right)\gamma_{u}-g'\left(\gamma_{u}^{\prime}\ab_{l,u}^{\top}\bs\right)\gamma_{u}^{\prime}\big)\ab_{l,u}\bigg\|_2\\
    &\leq\frac{1}{mk}\sum_{u=1}^{m} \sum_{l=1}^{k}\big[\big|g'\left(\gamma_{u}\ab_{l,u}^{\top}\bs\right)(\gamma_{u}-\gamma_{u}^{\prime})\big|\\
    &\qquad+\big|\big(g'\left(\gamma_{u}\ab_{l,u}^{\top}\bs\right)-g'\left(\gamma_{u}^{\prime}\ab_{l,u}^{\top}\bs\right)\big)\gamma_{u}^{\prime}\big|\big]\|\ab_{l,u}\|_2.
\end{align*}
Note that we have  $|g'(\gamma_{u}\ab_{l,u}^{\top}\bs)|\leq1$ and $\|\ab_{l,u}\|_2=\sqrt{2}$. In addition, by the mean value theorem we have
\begin{align*}
    g'\left(\gamma_{u}\ab_{l,u}^{\top}\bs\right)-g'\left(\gamma_{u}^{\prime}\ab_{l,u}^{\top}\bs\right)=g''(x)(\gamma_{u}-\gamma_{u}^{\prime})\ab_{l,u}^{\top}\bs,
\end{align*}
where $x=t\gamma_{u}\ab_{l,u}^{\top}\bs+(1-t)\gamma_{u}^{\prime}\ab_{l,u}^{\top}\bs$ for some $t\in(0,1)$. By plugging the range of $\gamma_{u}$ and $\bs$, we have $|x|\leq 5\gammamax\smax$ by \eqref{eq:bound_gamma_a_s_sample} and hence $|g''(x)|=|e^x/(1+e^x)^2|\leq 1/4$. Now we can bound $\|\nabla_{\bs} \cL(\bs,\bgamma)-\nabla_{\bs} \cL(\bs,\bgamma')\|_2$ as follows:
\begin{align*}
    \|\nabla_{\bs} \cL(\bs,\bgamma)-\nabla_{\bs} \cL(\bs,\bgamma')\|_2&\leq\frac{1}{mk}\sum_{u=1}^{m} \sum_{l=1}^{k}\sqrt{2}(1+2\gammamax\smax)|\gamma_{u}-\gamma_{u}^{\prime}|\\
    &\leq\frac{\sqrt{2}(1+2\gammamax\smax)}{\sqrt{m}}\|\bgamma-\bgamma^{\prime }\|_2.
\end{align*}
Now we prove the upper bound of $\|\gradgam\cL(\bs,\bgamma)-\gradgam\cL(\bs',\bgamma)\|_2$. First, we have by \eqref{eq:gradient_general} that
\begin{align*}
    \gradgam\cL(\bs,\bgamma)-\gradgam\cL(\bs',\bgamma)&=\frac{1}{mk}\begin{bmatrix}
    \sum_{l=1}^{k}\ab_{l,1}^{\top}\big(g'\big(\gamma_1\ab_{l,1}^{\top}\bs\big)\bs-g'\big(\gamma_1\ab_{l,1}^{\top}\bs'\big)\bs'\big)\\
    \vdots\\
    \sum_{l=1}^{k}\ab_{l,u}^{\top}\big(g'\big(\gamma_{u}\ab_{l,u}^{\top}\bs\big)\bs-g'\big(\gamma_{u}\ab_{l,u}^{\top}\bs'\big)\bs'\big)\\
    \vdots
    \end{bmatrix}.
\end{align*}
Note that for each $u$, we have
\begin{align}\label{eq:fos_decop}
    &\ab_{l,u}^{\top}\big(g'\big(\gamma_{u}\ab_{l,u}^{\top}\bs\big)\bs-g'\big(\gamma_{u}\ab_{l,u}^{\top}\bs'\big)\bs'\big)\notag\\
    &=\ab_{l,u}^{\top}\big[g'\big(\gamma_{u}\ab_{l,u}^{\top}\bs\big)(\bs-\bs')+\big(g'\big(\gamma_{u}\ab_{l,u}^{\top}\bs\big)-g'\big(\gamma_{u}\ab_{l,u}^{\top}\bs'\big)\big)\bs'\big].
\end{align}
For the first term in \eqref{eq:fos_decop}, we have
\begin{align*}
    \big|\ab_{l,u}^{\top}g'\big(\gamma_{u}\ab_{l,u}^{\top}\bs\big)(\bs-\bs')\big|\leq\sqrt{2}\|\bs-\bs'\|_2.
\end{align*}
For the second term in \eqref{eq:fos_decop}, applying the mean value theorem yields
\begin{align*}
    \big|\ab_{l,u}^{\top}\big(g'\big(\gamma_{u}\ab_{l,u}^{\top}\bs\big)-g'\big(\gamma_{u}\ab_{l,u}^{\top}\bs'\big)\big)\bs'\big|=\big|g''(x)\gamma_{u}\ab_{l,u}^{\top}\big(\bs-\bs'\big)\ab_{l,u}^{\top}\bs'\big|\leq\frac{5\sqrt{2}\gammamax\smax}{4}\|\bs-\bs'\|_2,
\end{align*}
where $x=t\gamma_{u}\ab_{l,u}^{\top}\bs+(1-t)\gamma^{u}\ab_{l,u}^{\top}\bs'$ for some $t\in(0,1)$. Therefore, we have
\begin{align*}
    \|\gradgam\cL(\bs,\bgamma)-\gradgam\cL(\bs',\bgamma)\|_2\leq\frac{\sqrt{2}(1+2\gammamax\smax)}{\sqrt{m}}\|\bs-\bs'\|_2,
\end{align*}
which completes our proof.
\end{proof}

\subsection{Proof of Lemma \ref{lemma:staterr_gumbel}}
\begin{proof}
According to \eqref{eq:gradient_general}, the gradient of $\cL$ with respect to $\bs$ is
\begin{align*}
    \gradsb\cL(\bs,\bgamma)
    &=\frac{1}{mk}\sum_{u}\sum_{l}\frac{\big(-Y+(1-Y)\exp(\gamma_{u}\ab_{l,u}^{\top}\bs)\big)\gamma_{u}\ab_{l,u}}{1+\exp(\gamma_{u}\ab_{l,u}^{\top}\bs)}.
\end{align*}
By assumption we have $|\gamma_{u}|\leq (\gammamax+r)$ and $|\gamma_{u}\ab_{l,u}^{\top}\bs|\leq 5\gammamax\smax$ by \eqref{eq:bound_gamma_a_s_sample}. In addition, we have $\|\gamma_{u}\ab_{l,u}/(1+\exp(\gamma_{u}\ab_{l,u}\bs))\|_2\leq \sqrt{2}(\gammamax+r)/(1+e^{-5\gammamax\smax})$. Applying Hoeffding's inequality, we have
\begin{align*}
    \Pr\big(\|\gradsb\cL(\bs,\bgamma)-\gradsb\bar\cL(\bs,\bgamma)\|_2\geq t\big)\leq 2\exp\bigg(\frac{-(1+e^{-5\gammamax\smax})^2mkt^2}{8(\gammamax+r)^2}\bigg),
\end{align*}
which implies that
\begin{align*}
    \|\gradsb\cL(\bs,\bgamma)-\gradsb\bar\cL(\bs,\bgamma)\|_2\leq\frac{2(\gammamax+r)}{1+e^{-5\gammamax\smax}}\sqrt{\frac{2\log(2n)}{mk}}
\end{align*}
holds with probability at least $1-1/n$. Recall the calculation in \eqref{eq:gradient_general}, the gradient of $\cL$ with respect to $\bgamma$ is
\begin{align*}
    \gradgam\cL(\bs,\bgamma)&=\frac{1}{mk}\begin{bmatrix}
    \sum_{l=1}^{k}g'\left(\gamma_1\ab_{l,1}^{\top}\bs\right)\ab_{l,1}^{\top}\bs\\
    \vdots\\
    \sum_{l=1}^{k}g'\left(\gamma_{u}\ab_{l,u}^{\top}\bs\right)\ab_{l,u}^{\top}\bs\\
    \vdots
    \end{bmatrix}.
\end{align*}
The squared statistical error is
\begin{align*}
    \|\gradgam\cL(\bs,\bgamma)-\gradgam\bar\cL(\bs,\bgamma)\|_2
    &=\frac{1}{mk}\sqrt{\sum_{u}\bigg[\sum_{l}\big(g'(\gamma_{u}\ab_{l,u}^{\top}\bs)\ab_{l,u}-\EE[g'(\gamma_{u}\ab_{l,u}^{\top}\bs)\ab_{l,u}]\big)^{\top}\bs\bigg]^2},
\end{align*}
which implies for all $t\geq 0$
\begin{align*}
    &\Pr\big(\|\gradgam\cL(\bs,\bgamma)-\gradgam\bar\cL(\bs,\bgamma)\|_2\geq t\big)\\
    &\leq\Pr\bigg(\max_{u}\frac{1}{k}\sum_{l}\big(g'(\gamma_{u}\ab_{l,u}^{\top}\bs)\ab_{l,u}-\EE[g'(\gamma_{u}\ab_{l,u}^{\top}\bs)\ab_{l,u}]\big)^{\top}\bs\geq\sqrt{m}t\bigg)\\
    &\leq\sum_{u}\Pr\bigg(\frac{1}{k}\sum_{l}\big(g'(\gamma_{u}\ab_{l,u}^{\top}\bs)\ab_{l,u}-\EE[g'(\gamma_{u}\ab_{l,u}^{\top}\bs)\ab_{l,u}]\big)^{\top}\bs\geq\sqrt{m}t\bigg),
\end{align*}
where the last inequality is due to union bound. For each user $u$, we have
\begin{align*}
   |(g'(\gamma_{u}\ab_{l,u}^{\top}\bs)\ab_{l,u}-\EE[g'(\gamma_{u}\ab_{l,u}^{\top}\bs)\ab_{l,u}])^{\top}\bs|\leq \frac{10\gammamax s_{\max}}{1+e^{-5\gammamax\smax}} .
\end{align*}
Applying Hoeffding's inequality yields
\begin{align*}
    &\Pr\big(\|\gradgam\cL(\bs,\bgamma)-\gradgam\bar\cL(\bs,\bgamma)\|_2\geq t\big)\leq 2m\exp\bigg(\frac{-(1+e^{-5\gammamax\smax})^2t^2mk}{100\gammamax^2\smax^2}\bigg),
\end{align*}
which immediately leads to the conclusion that
\begin{align*}
    \|\gradgam\cL(\bs,\bgamma)-\gradgam\bar\cL(\bs,\bgamma)\|_2\leq \frac{10\gammamax\smax}{1+e^{-5\gammamax\smax}}\sqrt{\frac{2\log(2mn)}{mk}}
\end{align*}
holds with probability at least $1-1/n$. This completes the proof.
\end{proof}

\subsection{Proof of Proposition \ref{prop:loss_convex_density_logconcave}}
\begin{proof}
Since the PDF $g$ of the noise terms $\epsilon_i$ is log-concave, and because the convolution of log-concave functions is log-concave~\cite{merkle1998}, the CDF $F$ of $\epsilon_j-\epsilon_i$ for any pair $i,j$ is also log-concave. Hence $h(x)=-\log F(x)$ is convex. The loss function is the sum of terms of the form $h_{iju}=h(\gamma_{u}(s_{i}-s_j))$. Fix $i$, $j$, and $u$. We have \[\hessb h_{iju}=h''(\gamma_{u}(s_{i}-s_j))(\gamma_{u})^2(\eb_i-\eb_j)(\eb_i-\eb_j)^\top,\]
where $\eb_i$ is the standard unit vector for coordinate $i$ in $\mathbb R^n$. By the convexity of $h$ and the fact that $(\eb_i-\eb_j)(\eb_i-\eb_j)^\top$ is positive-definite, the loss function is convex in $\bs$. Similarly, it is easy to show that it is convex in $\bgamma$. 
\end{proof}

\section*{Acknowledgment}
We would like to thank the anonymous reviewers for their helpful comments. We would like to thank Ashish Kumar for the collection of ``Country Population'' dataset. PX and QG are supported in part by the NSF grants CIF-1911168, III-1904183 and CAREER Award 1906169. TJ and FF are supported in part by the NSF grants CIF-1911168 and CCF-1908544. The views and conclusions contained in this paper are those of the authors and should not be interpreted as representing any funding agencies.

\bibliographystyle{ims}
\bibliography{LibraryMerged.bib}

\begin{thebibliography}{38}
\expandafter\ifx\csname natexlab\endcsname\relax\def\natexlab#1{#1}\fi
\expandafter\ifx\csname url\endcsname\relax
  \def\url#1{\texttt{#1}}\fi
\expandafter\ifx\csname urlprefix\endcsname\relax\def\urlprefix{URL }\fi

\bibitem[{Aerts et~al.(2006)Aerts, Lambrechts, Maity, Van~Loo, Coessens,
  De~Smet, Tranchevent, De~Moor, Marynen, Hassan, Carmeliet and
  Moreau}]{aerts2006a}
\textsc{Aerts, S.}, \textsc{Lambrechts, D.}, \textsc{Maity, S.},
  \textsc{Van~Loo, P.}, \textsc{Coessens, B.}, \textsc{De~Smet, F.},
  \textsc{Tranchevent, L.-C.}, \textsc{De~Moor, B.}, \textsc{Marynen, P.},
  \textsc{Hassan, B.}, \textsc{Carmeliet, P.} and \textsc{Moreau, Y.} (2006).
\newblock Gene prioritization through genomic data fusion.
\newblock \textit{Nature Biotechnology} \textbf{24} 537--544.

\bibitem[{Agarwal et~al.(2012)Agarwal, Negahban, Wainwright
  et~al.}]{agarwal2012noisy}
\textsc{Agarwal, A.}, \textsc{Negahban, S.}, \textsc{Wainwright, M.~J.}
  \textsc{et~al.} (2012).
\newblock Noisy matrix decomposition via convex relaxation: Optimal rates in
  high dimensions.
\newblock \textit{The Annals of Statistics} \textbf{40} 1171--1197.

\bibitem[{Baltrunas et~al.(2010)Baltrunas, Makcinskas and
  Ricci}]{baltrunas2010}
\textsc{Baltrunas, L.}, \textsc{Makcinskas, T.} and \textsc{Ricci, F.} (2010).
\newblock Group {{Recommendations}} with {{Rank Aggregation}} and
  {{Collaborative Filtering}}.
\newblock In \textit{Proceedings of the {{Fourth ACM Conference}} on
  {{Recommender Systems}}}. RecSys '10, {ACM}, New York, NY, USA.

\bibitem[{Bradley and Terry(1952)}]{bradley1952}
\textsc{Bradley, R.~A.} and \textsc{Terry, M.~E.} (1952).
\newblock Rank {{Analysis}} of {{Incomplete Block Designs}}: {{I}}. {{The
  Method}} of {{Paired Comparisons}}.
\newblock \textit{Biometrika} \textbf{39} 324--345.

\bibitem[{Braverman and Mossel(2008)}]{braverman2008}
\textsc{Braverman, M.} and \textsc{Mossel, E.} (2008).
\newblock Noisy {{Sorting Without Resampling}}.
\newblock In \textit{{{ACM}}-{{SIAM Symp}}. {{Discrete Algorithms}}
  ({{SODA}})}. {Society for Industrial and Applied Mathematics}, San Francisco,
  California.

\bibitem[{Chen et~al.(2018)Chen, Xu, Wang, Ma and Gu}]{chen2018covariate}
\textsc{Chen, J.}, \textsc{Xu, P.}, \textsc{Wang, L.}, \textsc{Ma, J.} and
  \textsc{Gu, Q.} (2018).
\newblock Covariate adjusted precision matrix estimation via nonconvex
  optimization.
\newblock In \textit{International Conference on Machine Learning}.

\bibitem[{Chen et~al.(2013)Chen, Bennett, Collins-Thompson and
  Horvitz}]{chen2013pairwise}
\textsc{Chen, X.}, \textsc{Bennett, P.~N.}, \textsc{Collins-Thompson, K.} and
  \textsc{Horvitz, E.} (2013).
\newblock Pairwise ranking aggregation in a crowdsourced setting.
\newblock In \textit{Proceedings of the sixth ACM international conference on
  Web search and data mining}. ACM.

\bibitem[{Chen and Suh(2015)}]{chen2015spectral}
\textsc{Chen, Y.} and \textsc{Suh, C.} (2015).
\newblock Spectral {MLE}: Top-k rank aggregation from pairwise comparisons.
\newblock In \textit{International Conference on Machine Learning}.

\bibitem[{{de Borda}(1781)}]{deborda1781}
\textsc{{de Borda}, J.-C.} (1781).
\newblock M\'emoire sur les \'elections au scrutin.
\newblock \textit{Histoire de l'Acad\'emie royale des sciences} .

\bibitem[{{de Condorcet}(1785)}]{decondorcet1785}
\textsc{{de Condorcet}, M.} (1785).
\newblock \textit{Essai Sur l'application de l'analyse \`a La Probabilit\'e Des
  D\'ecisions Rendues \`a La Pluralit\'e Des Voix}.
\newblock {L'imprimerie royale}.

\bibitem[{Dwork et~al.(2001)Dwork, Kumar, Naor and Sivakumar}]{dwork2001a}
\textsc{Dwork, C.}, \textsc{Kumar, R.}, \textsc{Naor, M.} and
  \textsc{Sivakumar, D.} (2001).
\newblock Rank aggregation methods for the web.
\newblock In \textit{Proc. 10th {{Int}}. {{Conf}}. {{World Wide Web}}}. {ACM}.

\bibitem[{Guiver and Snelson(2009)}]{guiver2009}
\textsc{Guiver, J.} and \textsc{Snelson, E.} (2009).
\newblock Bayesian {{Inference}} for {{Plackett}}-{{Luce Ranking Models}}.
\newblock In \textit{Proceedings of the 26th {{Annual International
  Conference}} on {{Machine Learning}}}. ICML '09, {ACM}, New York, NY, USA.

\bibitem[{Hajek et~al.(2014)Hajek, Oh and Xu}]{hajek2014}
\textsc{Hajek, B.}, \textsc{Oh, S.} and \textsc{Xu, J.} (2014).
\newblock Minimax-optimal {{Inference}} from {{Partial Rankings}}.
\newblock In \textit{Advances in {{Neural Information Processing Systems}} 27}.

\bibitem[{Hunter(2004)}]{hunter2004a}
\textsc{Hunter, D.~R.} (2004).
\newblock {{MM}} algorithms for generalized {{Bradley}}-{{Terry}} models.
\newblock \textit{The Annals of Statistics} \textbf{32} 384--406.

\bibitem[{Jain et~al.(2013)Jain, Netrapalli and Sanghavi}]{jain2013low}
\textsc{Jain, P.}, \textsc{Netrapalli, P.} and \textsc{Sanghavi, S.} (2013).
\newblock Low-rank matrix completion using alternating minimization.
\newblock In \textit{Proceedings of the forty-fifth annual ACM symposium on
  Theory of computing}. ACM.

\bibitem[{Kendall(1948)}]{kendall1948}
\textsc{Kendall, M.} (1948).
\newblock \textit{Rank Correlation Methods}.
\newblock {London: Griffin}.

\bibitem[{Kim et~al.(2015)Kim, Farnoud and Milenkovic}]{kim2015a}
\textsc{Kim, M.}, \textsc{Farnoud, F.} and \textsc{Milenkovic, O.} (2015).
\newblock {{HyDRA}}: Gene prioritization via hybrid distance-score rank
  aggregation.
\newblock \textit{Bioinformatics} \textbf{31} 1034--1043.

\bibitem[{Kumar and Lease(2011)}]{kumar2011}
\textsc{Kumar, A.} and \textsc{Lease, M.} (2011).
\newblock Learning to {{Rank}} from a {{Noisy Crowd}}.
\newblock In \textit{Proceedings of the 34th {{International ACM SIGIR
  Conference}} on {{Research}} and {{Development}} in {{Information
  Retrieval}}}. SIGIR '11, {ACM}, New York, NY, USA.

\bibitem[{Luce(1959)}]{luce1959}
\textsc{Luce, R.~D.} (1959).
\newblock \textit{Individual Choice Behavior: {{A}} Theoretical Analysis}.
\newblock {John Wiley \& Sons, Inc.}, New York.

\bibitem[{Merkle(1998)}]{merkle1998}
\textsc{Merkle, M.} (1998).
\newblock Convolutions of logarithmically concave functions.
\newblock \textit{Publikacije Elektrotehni{\v c}kog fakulteta. Serija
  Matematika}  113--117.

\bibitem[{Negahban et~al.(2012)Negahban, Oh and Shah}]{negahban2012}
\textsc{Negahban, S.}, \textsc{Oh, S.} and \textsc{Shah, D.} (2012).
\newblock Iterative ranking from pair-wise comparisons.
\newblock In \textit{Advances in {{Neural Information Processing Systems}} 25}.

\bibitem[{Negahban et~al.(2017)Negahban, Oh and Shah}]{negahban2017}
\textsc{Negahban, S.}, \textsc{Oh, S.} and \textsc{Shah, D.} (2017).
\newblock Rank {{Centrality}}: {{Ranking}} from {{Pairwise Comparisons}}.
\newblock \textit{Operations Research} \textbf{65} 266--287.

\bibitem[{Negahban and Wainwright(2012)}]{negahban2012restricted}
\textsc{Negahban, S.} and \textsc{Wainwright, M.~J.} (2012).
\newblock Restricted strong convexity and weighted matrix completion: Optimal
  bounds with noise.
\newblock \textit{Journal of Machine Learning Research} \textbf{13} 1665--1697.

\bibitem[{Raman and Joachims(2014)}]{raman2014methods}
\textsc{Raman, K.} and \textsc{Joachims, T.} (2014).
\newblock Methods for ordinal peer grading.
\newblock In \textit{Proceedings of the 20th ACM SIGKDD international
  conference on Knowledge discovery and data mining}. ACM.

\bibitem[{Raman and Joachims(2015)}]{raman2015bayesian}
\textsc{Raman, K.} and \textsc{Joachims, T.} (2015).
\newblock Bayesian ordinal peer grading.
\newblock In \textit{Proceedings of the Second (2015) ACM Conference on
  Learning@ Scale}. ACM.

\bibitem[{Thurstone(1927)}]{thurstone1927}
\textsc{Thurstone, L.~L.} (1927).
\newblock A law of comparative judgment.
\newblock \textit{Psychological Review} \textbf{34} 273--286.

\bibitem[{Tropp(2012)}]{tropp2012user}
\textsc{Tropp, J.~A.} (2012).
\newblock User-friendly tail bounds for sums of random matrices.
\newblock \textit{Foundations of computational mathematics} \textbf{12}
  389--434.

\bibitem[{Vojnovic and Yun(2016)}]{vojnovic2016}
\textsc{Vojnovic, M.} and \textsc{Yun, S.} (2016).
\newblock Parameter {{Estimation}} for {{Generalized Thurstone Choice Models}}.
\newblock In \textit{{{PMLR}}}.

\bibitem[{Wang et~al.(2015)Wang, Gu, Ning and Liu}]{wang2015high}
\textsc{Wang, Z.}, \textsc{Gu, Q.}, \textsc{Ning, Y.} and \textsc{Liu, H.}
  (2015).
\newblock High dimensional em algorithm: Statistical optimization and
  asymptotic normality.
\newblock In \textit{Advances in neural information processing systems}.

\bibitem[{Wauthier et~al.(2013)Wauthier, Jordan and Jojic}]{wauthier2013a}
\textsc{Wauthier, F.}, \textsc{Jordan, M.} and \textsc{Jojic, N.} (2013).
\newblock Efficient {{Ranking}} from {{Pairwise Comparisons}}.
\newblock In \textit{{{PMLR}}}.

\bibitem[{Weng and Lin(2011)}]{weng2011bayesian}
\textsc{Weng, R.~C.} and \textsc{Lin, C.-J.} (2011).
\newblock A {{Bayesian}} approximation method for online ranking.
\newblock \textit{Journal of Machine Learning Research} \textbf{12} 267--300.

\bibitem[{Xu et~al.(2017{\natexlab{a}})Xu, Ma and Gu}]{xu2017speeding}
\textsc{Xu, P.}, \textsc{Ma, J.} and \textsc{Gu, Q.} (2017{\natexlab{a}}).
\newblock Speeding up latent variable {G}aussian graphical model estimation via
  nonconvex optimization.
\newblock In \textit{Advances in Neural Information Processing Systems}.

\bibitem[{Xu et~al.(2017{\natexlab{b}})Xu, Zhang and Gu}]{xu2017efficient}
\textsc{Xu, P.}, \textsc{Zhang, T.} and \textsc{Gu, Q.} (2017{\natexlab{b}}).
\newblock Efficient algorithm for sparse tensor-variate {G}aussian graphical
  models via gradient descent.
\newblock In \textit{Artificial Intelligence and Statistics}.

\bibitem[{Yu(2000)}]{yu2000}
\textsc{Yu, P. L.~H.} (2000).
\newblock Bayesian analysis of order-statistics models for ranking data.
\newblock \textit{Psychometrika} \textbf{65} 281--299.

\bibitem[{Zermelo(1929)}]{zermelo1929}
\textsc{Zermelo, E.} (1929).
\newblock {Die Berechnung der Turnier-Ergebnisse als ein Maximumproblem der
  Wahrscheinlichkeitsrechnung.}
\newblock \textit{Mathematische Zeitschrift} \textbf{29} 436--460.

\bibitem[{Zhang et~al.(2018)Zhang, Wang and Gu}]{zhang2018unified}
\textsc{Zhang, X.}, \textsc{Wang, L.} and \textsc{Gu, Q.} (2018).
\newblock A unified framework for nonconvex low-rank plus sparse matrix
  recovery.
\newblock In \textit{International Conference on Artificial Intelligence and
  Statistics}.

\bibitem[{Zhao et~al.(2018)Zhao, Villamil and Xia}]{zhao2018learning}
\textsc{Zhao, Z.}, \textsc{Villamil, T.} and \textsc{Xia, L.} (2018).
\newblock Learning mixtures of random utility models.
\newblock In \textit{Thirty-Second AAAI Conference on Artificial Intelligence}.

\bibitem[{Zhu et~al.(2017)Zhu, Wang, Zhai and Gu}]{zhu2017high}
\textsc{Zhu, R.}, \textsc{Wang, L.}, \textsc{Zhai, C.} and \textsc{Gu, Q.}
  (2017).
\newblock High-dimensional variance-reduced stochastic gradient
  expectation-maximization algorithm.
\newblock In \textit{Proceedings of the 34th International Conference on
  Machine Learning-Volume 70}. JMLR. org.

\end{thebibliography}

\end{document}